\documentclass{article}

 \usepackage[preprint]{neurips_2026}

\usepackage[utf8]{inputenc} 
\usepackage[T1]{fontenc}    
\usepackage{hyperref}       
\usepackage{url}            
\usepackage{booktabs}       
\usepackage{amsfonts}       
\usepackage{nicefrac}       
\usepackage{microtype}      
\usepackage{xcolor}         

\usepackage{amsmath, amssymb, amsthm}
\usepackage{wrapfig}
\usepackage{graphicx}   
\usepackage{caption}    
\usepackage{subcaption} 
\usepackage{float}      
\usepackage{graphicx}

\newtheorem{theorem}{Theorem}[section]
\newtheorem{lemma}[theorem]{Lemma}
\newtheorem{definition}[theorem]{Definition}
\newtheorem{proposition}[theorem]{Proposition}
\newtheorem{corollary}[theorem]{Corollary}

\theoremstyle{remark}
\newtheorem{remark}[theorem]{Remark}

\title{On the Approximation Complexity of\\
Matrix Product Operator Born Machines
\\
}

\author{%
  Chao~Li
  \thanks{\texttt{chao.li@riken.jp}} 
    \\
  RIKEN-AIP
  \And
  Zerui Tao \\
  RIKEN-AIP 
  \AND
  Yuchen Cong \\
  Jutentedo University
  \And
  Juan Xu \\
  RIKEN-AIP
  \And
  Qibin Zhao \\
  RIKEN-AIP
}

\begin{document}

\maketitle

\begin{abstract}
Matrix product operator Born machines (MPO-BMs) are tractable tensor-network models for probabilistic modeling, but their efficient approximation capability remains unclear.
We characterize this boundary from both negative and positive perspectives.
First, we prove that KL approximation is NP-hard for MPO-BMs in the continuous setting, ruling out universal efficient approximation in the worst case.
Second, for score-based variational inference, we show that, under a locality and spectral-gap conditions on the loss-induced Hamiltonian, structured targets (e.g., path-graph Markov random fields) admit MPO-BM approximations with polynomial bond dimension and provable KL guarantees.
Third, under the same locality structure, we prove that polynomially many score queries suffice to estimate the induced Hamiltonian and obtain such guarantees.
Our results provide a theoretical characterization of when MPO-BMs are fundamentally hard to approximate and when they become efficiently learnable.
\end{abstract}


\section{Introduction}
Tensor networks provide remarkably efficient parameterizations for extremely high-dimensional objects.
In particular, matrix product states (MPS, \citet{perez2006matrix}), also known as tensor trains~\citep{oseledets2011tensor} and their operator counterpart, matrix product operators (MPOs), can represent exponentially large vectors or operators using a number of parameters that grows only \emph{linearly} with the dimension.
This efficiency has led to a wide range of applications in machine learning~\citep{novikov2015tensorizing,kossaifi2020tensor,chen2024quanta}.

In these applications, one important branch is probabilistic modeling, where tensor networks are used to represent probability distributions.
For instance, MPS Born machines~\citep{han2018unsupervised} have been proposed for learning discrete distributions from images and physical systems.
Subsequent works extend this idea to more general tensor network architectures~\citep{cheng2019tree,liu2023tensor}, basis functions~\citep{meiburg2025generative}, and even quantum implementations~\citep{tian2023recent}.
Among these models, MPO Born machines (MPO-BMs) are particularly appealing:
they inherit the computational efficiency of tensor networks while exhibiting strictly greater expressive power than many probabilistic modeling methods~\citep{glasser2019expressive}.

Despite these advantages, the theoretical understanding of MPO-BMs remains incomplete.
In particular, although MPO-BMs are known to be expressive~\citep{glasser2019expressive,loconte2025sum}, the \emph{efficient approximation} capability of MPO-BMs is still unclear.
Given a target distribution, several fundamental questions arise:
(i)  Can MPO-BMs efficiently approximate arbitrary distributions?
(ii) Which classes of distributions admit accurate approximation with low bond dimension?
(iii) How many queries to the target distribution are required to obtain a guaranteed approximation?
Answering these questions is key to understanding the fundamental limits of TN-based probabilistic models and to guiding the design of efficient approximation and learning algorithms.

In this work, we provide a rigorously-proved answer to these questions by characterizing both the limitations and the efficient regimes of MPO-BMs.
We begin with a negative result: we show that approximating an \emph{arbitrary} target distribution with bounded KL error is NP-hard.
This implies that MPO-BMs cannot serve as universally efficient approximators in the worst case, even when approximation (rather than exact representation) is considered.

We then turn to the positive side by analyzing score-based variational inference (SBVI).
We show that, for a broad class of structured distributions, MPO-BMs can achieve efficient approximation with both polynomial bond dimension and polynomial query complexity.
Inspired by the prior work~\citep{cai2024eigenvi}, we connect SBVI to the well-known ground-state problem in physics, and
establish that when the loss-induced Hamiltonian exhibits locality and a constant spectral gap, the optimal solution admits a  MPO approximation with low bond dimension.
This setting naturally includes important models such as path-graph Markov random fields.

Furthermore, we characterize the query complexity required to learn such approximations.
We show that, under locality assumptions, the number of queries to the target distribution scales \emph{polynomially} with the dimension, avoiding the curse of dimensionality.
This stands in sharp contrast to global estimation strategies, where the required number of samples grows exponentially.

We summarize our main contributions as follows:
\begin{itemize}
    \item \textbf{Computational limitation.}
    We prove that KL approximation is NP-hard for MPO Born machines in the \emph{continuous} setting.

    \item \textbf{Efficient approximation regime.}
    We identify a structured regime in which MPO-BMs admit efficient approximation: when the Fisher-divergence Hamiltonian induced by the target distribution is local and has a constant spectral gap, MPO-BMs achieve polynomial bond dimension with provable KL guarantees.
    We further show that path-graph Markov random fields induce local Hamiltonians, while the spectral-gap condition is treated as an explicit assumption.

    \item \textbf{Polynomial query complexity.}
    We prove that, under the same locality structure, the induced Hamiltonian can be estimated from polynomially many score queries, yielding an MPO-BM with polynomial bond dimension and provable KL guarantees.
\end{itemize}

The correctness of our theoretical findings on low bond dimension and query complexity is further supported by numerical experiments.

\section{Preliminaries}
We briefly review MPOs, MPO-BMs, and the SBVI formulation used in our analysis.

\paragraph{Notation.}
We write $[D]:=\{1,\ldots,D\}$.
For $S\subseteq[D]$, $x_S$ and $x_{\bar S}$ denote the corresponding subvectors.
For a matrix $A$, $\|A\|_{S^p}$ denotes the Schatten-$p$ norm; in particular, 
$\|A\|_{S^\infty}$ and $\|A\|_{S^1}$ correspond to the spectral norm and trace norm, respectively.
We write $A \succeq 0$ to denote that $A$ is positive semidefinite (PSD), and $\operatorname{tr}(\cdot)$ for the trace.
The notation $\mathrm{poly}(\cdot)$ denotes a polynomial in its arguments, and $O(\cdot)$, $\Omega(\cdot)$ hide universal constants.

\subsection{Matrix Product Operator}
We begin by introducing the matrix product operator (MPO), a tensor network representation for high-dimensional linear operators associated with a path graph.

Let $\mathbb{H}:=\bigotimes_{d=1}^{D}\mathbb{R}^K$ be a $D$-site Hilbert space with local dimension $K$.
A linear operator $\rho:\mathbb{H}\rightarrow\mathbb{H}$ can be viewed as a tensor of order $2D$ with entries
$\rho(i_1,\ldots,i_D, j_1,\ldots,j_D)$, where $i_d,j_d\in[K]$ for all $d\in[D]$.

\begin{definition}[\textbf{Matrix Product Operator}]
An operator $\rho$ is said to admit an MPO representation with bond dimensions $\{R_d\}_{d=0}^D$ (with $R_0=R_D=1$) if its entries can be written as
\begin{equation}
\begin{aligned}
\rho(i_1,\ldots,i_D, j_1,\ldots,j_D)
=
\sum_{\alpha_1,\ldots,\alpha_{D-1}}
G^{(1)}_{1,\,i_1,j_1,\,\alpha_1}
G^{(2)}_{\alpha_1,\,i_2,j_2,\,\alpha_2}
\cdots
G^{(D)}_{\alpha_{D-1},\,i_D,j_D,\,1},
\end{aligned}
\end{equation}
where $G^{(d)} \in \mathbb{R}^{R_{d-1} \times K \times K \times R_d}$ are the core tensors for $d\in[D]$.
\end{definition}

The maximum bond dimension $R:=\max_d R_d$ controls the expressive power of the MPO.
The number of parameters scales as $\mathcal{O}(DK^2R^2)$, which is polynomial in $D$ when $R=\mathrm{poly}(D)$.
This representation thus provides a compact parameterization of exponentially large operators.

For convenience, we denote by $\mathrm{MPO}(R)$ the set of operators that admit an MPO representation with maximum bond dimension at most $R$.

\subsection{MPO Born Machine}\label{sec:MPO_Born_Machine}
In this paper, we consider Born machine defined via density operators and feature maps.
Given a $D$-dimensional random variable $x\in\mathbb{R}^{D}$, the model defines a density function as follows:
\begin{equation}
    q_\rho(x)=\Phi(x)^\top \rho \,\Phi(x),\label{eq:born_machine}
\end{equation}
where $\Phi(x) \in \mathbb{R}^{K^D}$ denotes a tensorized feature map and $\rho \in \mathbb{R}^{K^D \times K^D}$ is a density operator.

\paragraph{Feature map.}
The feature map is defined as a tensor product of local basis functions:
\begin{equation}
    \Phi(x) = \bigotimes_{d=1}^D \phi(x_d),
\end{equation}
where $\phi: \mathbb{R} \to \mathbb{R}^K$ is a vector-valued function satisfying the orthonormality condition $\int{}\phi(x)\phi(x)^\top{}dx=I$.

\paragraph{Density operator.}
The parameter $\rho$ is assumed to be a PSD operator with unit trace:
\begin{equation}
    \rho \succeq 0, \quad \mathrm{tr}(\rho) = 1.
\end{equation}
Under this assumption, $q_\rho(x)$ defines a valid probability density function.

\paragraph{MPO-BM.}
In this work, we are interested in $\rho$ in the MPO class, i.e., $\rho \in \mathrm{MPO}(R)$ with bond dimension $R=\mathrm{poly}(D)$.
Since a general density operator $\rho$ requires $\mathcal{O}(K^{2D})$ parameters, the MPO restriction reduces the parameter complexity to $\mathcal{O}(D K^2 R^2)$, which is polynomial in $D$.
This restriction is essential for computational efficiency and forms the basis of our analysis.

An important property of MPO-BMs is that they would admit \emph{tractable marginalization}. 
Specifically, for any subset $S \subseteq [D]$ and any measurable set $A \subseteq \mathbb{R}^{|S|}$, the marginal probability
\begin{equation}
Q_\rho(A)
=
\int_{x_S \in A}
\left(
\int q_\rho(x_S, x_{\bar S})\, dx_{\bar S}
\right)
dx_S
\end{equation}
can be computed exactly in time polynomial in $D$.

This property follows from the tensor network structure of $\rho$: marginalization corresponds to contracting a subset of local tensors in the MPO representation, which preserves the network structure and can be carried out efficiently when $R=\mathrm{poly}(D)$ and all required one-dimensional integrals involving the basis functions $\phi(\cdot)$ can be evaluated exactly in constant time, e.g., Hermite or Fourier bases.

\subsection{Score-Based Variational Inference and EigenVI}\label{sec:SBVI_EigenVI}
To analyze both model and query complexity of MPO-BMs, we consider the task of \emph{score-based variational inference} (SBVI), motivated by the big success of score-based methods in generative modeling.
Given a target density $p(x)$, SBVI seeks a model $q_\rho$ by minimizing the Fisher divergence:
\begin{equation}
    \begin{split}
        \min_\rho\frac{1}{2}\underbrace{\int{}q_\rho\left\Vert\nabla_x\log{}p(x)-\nabla_x\log{}q_\rho(x)\right\Vert_F^2dx}_{\operatorname{D_F}(q\Vert{}p):=}\label{eq:sbvi_loss}
    \end{split}.
\end{equation}
Here $D_F(q\Vert p)$ denotes the Fisher divergence:
\begin{equation}
D_F(q\Vert p)
:= \int q(x)\left\|\nabla_x \log q(x)-\nabla_x \log p(x)\right\|_F^2 dx.
\end{equation}
The following lemma shows that the Fisher divergence controls the KL divergence if $p$ satisfies a \emph{Log-Sobolev inequality}~\cite{} with constant $c$, written as $LSI(c)$ for short\footnote{A formal definition is given in Appendix}.
\begin{lemma}
\label{thm:upper_bound_of_Fisher_div_LSI}
Let $p,q$ be smooth densities with the same support. If $p$ satisfies $LSI(c)$ for some $c>0$, then
\begin{equation}
D_{\mathrm{KL}}(q\Vert p)\le \frac{1}{2c} D_F(q\Vert p),
\end{equation}
where
\begin{equation}
D_{\mathrm{KL}}(q \Vert p)
:= \int q(x)\bigl(\log q(x)-\log p(x)\bigr)\,dx.
\end{equation}
\end{lemma}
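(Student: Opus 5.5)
The plan is to apply the log-Sobolev inequality for $p$ to the test function given by the density ratio $f:=q/p$, which is well defined because $p$ and $q$ share the same support. I will fix the convention for $LSI(c)$ as stated in the Appendix:
\begin{equation}
\operatorname{Ent}_p(g^2)\le \frac{2}{c}\int \|\nabla g\|^2\,p\,dx
\end{equation}
for smooth $g$. Here $\operatorname{Ent}_p(h):=\int h\log h\,p\,dx-\left(\int h\,p\,dx\right)\log\left(\int h\,p\,dx\right)$.

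The first step is to take $g=\sqrt{f}=\sqrt{q/p}$. Since $\int f\,p\,dx=\int q\,dx=1$, the second term of the entropy vanishes. What remains is
\begin{equation}
\operatorname{Ent}_p(f)=\int \frac{q}{p}\log\frac{q}{p}\,p\,dx=D_{\mathrm{KL}}(q\Vert p).
\end{equation}

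The second step is to compute the right-hand side. We have
\begin{equation}
\nabla\sqrt f=\tfrac12\sqrt f\,\nabla\log f=\tfrac12\sqrt{q/p}\,(\nabla\log q-\nabla\log p).
\end{equation}
Hence
\begin{equation}
\int\|\nabla\sqrt f\|^2 p\,dx=\tfrac14\int q\,\|\nabla\log q-\nabla\log p\|^2dx=\tfrac14 D_F(q\Vert p).
\end{equation}
Combining this with the LSI gives
\begin{equation}
D_{\mathrm{KL}}(q\Vert p)\le \frac{2}{c}\cdot\frac14 D_F(q\Vert p)=\frac{1}{2c}D_F(q\Vert p),
\end{equation}
which is exactly the claimed constant. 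If the Appendix instead normalizes LSI as $D_{\mathrm{KL}}(q\Vert p)\le \frac{1}{2c}\,I(q\Vert p)$, with $I$ the relative Fisher information, the statement is immediate from the definition. In that case the only work is to check that the two normalizations agree under the substitution $g=\sqrt{q/p}$.

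The main obstacle is regularity rather than algebra. The function $\sqrt{q/p}$ must lie in the domain where the LSI holds, which requires it to be smooth with finite Dirichlet energy. Smoothness and the common support give differentiability on the support. I would handle the general case by approximation. If $D_F(q\Vert p)=\infty$ there is nothing to prove. Otherwise I would truncate $g$, for example $g_n=\min(\max(g,1/n),n)$, possibly mollified. I would then apply the LSI to each $g_n$ and pass to the limit: Fatou's lemma, or lower semicontinuity of entropy, handles the left side, and monotone convergence handles the Dirichlet energy on the right.
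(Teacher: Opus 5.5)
Your proposal is correct and follows the paper's proof essentially step for step: both apply the LSI to $f=\sqrt{q/p}$ with $d\mu=p\,dx$, identify $\operatorname{Ent}_p(f^2)$ with $D_{\mathrm{KL}}(q\Vert p)$ using $\int (q/p)\,p\,dx=1$, and compute $\int\|\nabla f\|^2 p\,dx=\tfrac14 D_F(q\Vert p)$. Two of your choices go beyond the paper: the truncation-and-Fatou argument for the regularity of $\sqrt{q/p}$, which the paper omits, and the standard entropy $\int h\log h\,d\mu-(\int h\,d\mu)\log\int h\,d\mu$, which is the right one (the appendix definition misprints the second term, though the paper's proof computes with the correct form).
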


\paragraph{EigenVI formulation.}
To solve~\eqref{eq:sbvi_loss},~\cite{cai2024eigenvi} proposes EigenVI, parameterizing with $\rho=\theta\theta^\top$ for modeling $q_\rho(x)$, where $\theta\in\mathbb{R}^{K^R}$ denotes a vector with unit norm.
EigenVI shows that the optimal $\theta$ can be obtained by solving the following eigen problem:
\begin{equation}
    \min_{\theta}\theta^\top{}H\theta,\quad{}s.t.\,\Vert\theta\Vert_2^2=1,
\end{equation}
where the matrix $H$ admits the decomposition
\begin{equation}
    \begin{split}
        &H=\int{}H(x)dx:=\int{}H_1(x)dx-2\int{}H_2(x)dx+4\int{}H_3(x)dx,\\
        &H_1(x)=\bigotimes_{d=1}^D\left(\phi(x_d)\phi(x_d)^\top\right)\Vert{}s(x)\Vert_F^2,\\
        &H_2(x)=\Phi(x)s(x)^\top\dot{\Phi}(x)+\dot{\Phi}(x)^\top{}s(x)\Phi(x)^\top,\\
        &H_3(x)=\dot{\Phi}(x)^\top\dot{\Phi}(x),
    \end{split}\label{eq:def_H}
\end{equation}
with $s(x):=\nabla_x \log p(x)$ the score function.
The Jacobian $\dot{\Phi}(x)\in\mathbb{R}^{D\times K^D}$ is given by
\begin{equation}
    \begin{split}
        \dot{\Phi}(x)=
        \begin{pmatrix}
            \dot{\phi}(x_1)\otimes\left[\bigotimes_{d=2}^D\phi(x_d)^\top\right]\\
            \phi(x_1)^\top\otimes\dot{\phi}(x_2)\otimes{}\left[\bigotimes_{v=3}^D\phi(x_d)^\top\right]\\
            \vdots\\
            \left[\bigotimes_{d=1}^{l-1}\phi(x_d)^\top\right]\otimes\dot{\phi}(x_l)\otimes\left[\bigotimes_{d=l+1}^D\phi(x_d)^\top\right]\\
            \vdots\\
             \left[\bigotimes_{d=1}^{D-1}\phi(x_d)^\top\right]\otimes\dot{\phi}(x_D)
        \end{pmatrix}
    \end{split},\label{eq:H}
\end{equation}
where $\dot{\phi}(x):=\left[\frac{d\phi_1(x)}{d{}x},\frac{d\phi_2(x)}{d{}x},\ldots,\frac{d\phi_K(x)}{d{}x}\right]\in\mathbb{R}^{1\times{}K},\,d\in[D]$.
It is no hard to see that the matrix $H$ must be PSD, and the optimal $\theta$ thus equals the eigenvalue of $H$ with respect to the minimum eigenvalue.

\paragraph{Estimating $H$ with importance sampling.}
Computing $H$ exactly requires evaluating high-dimensional integrals. 
In EigenVI~\cite{cai2024eigenvi}, the integral is estimated using importance sampling.
Let $\pi(x)$ be a proposal distribution whose support covers that of the target score, and let $\{x_i\}_{i=1}^m$ be i.i.d.\ samples drawn from $\pi$. Then $H$ can be approximated by
\begin{equation}
    \bar{H}
    :=
    \frac{1}{m}\sum_{i=1}^m
    \frac{H_1(x_i)-2H_2(x_i)+4H_3(x_i)}{\pi(x_i)}.
    \label{eq:H_estimation}
\end{equation}
In EigenVI, $\pi$ is chosen as the uniform distribution over a compact domain in $\mathbb{R}^D$. However, when $\pi$ is poorly matched to $p$, the variance of $\bar{H}$ scales exponentially with $D$, reflecting the \emph{curse of dimensionality}.
In this work, we show that by exploiting the locality structure of $H$, it can be estimated from polynomially many samples (i.e., score queries), thereby avoiding the curse of dimensionality.

\section{Main Results}
In this section, we characterize the approximation complexity of MPO-BMs. 
We first show a worst-case limitation: no tractable-marginal model, and hence no polynomial-bond-dimension MPO-BM, can universally approximate arbitrary continuous densities with bounded KL error unless $\mathrm{P}=\mathrm{NP}$. 
We then identify regimes where MPO-BMs are efficient: under locality and spectral-gap assumptions, both the required bond dimension and the number of score queries scale polynomially with $D$.

\subsection{Hardness of Approximation in the Worst Case}
Prior studies mainly focus on the expressiveness of Born machines~\cite{glasser2019expressive,loconte2025sum}.
Here, we instead study their approximation complexity: whether a target density can be approximated up to a prescribed error by a computationally tractable model.
Formally, we define the $\epsilon$-approximator w.r.t. the KL divergence as the metric:
\begin{definition}[\textbf{$\epsilon$-KL-Approximation}]\label{def:KL_approximator}
    Let $p, q$ be probability density functions.
    We say that $q$ is an $\epsilon$-KL-approximator of $p$ if $D_{\mathrm{KL}}(q \Vert p) < \epsilon$
    for some $\epsilon > 0$.
\end{definition}

In the following, we show that such approximation is computationally intractable in the worst case.
\begin{theorem}[\textbf{Hardness of KL-approximation}]\label{thm:kl-hardness}
    Given a density $p$, for any $0 < \epsilon < \frac{1}{8}$, it is NP-hard to represent its $\epsilon$-KL-approximation as a model that admits tractable marginals.
\end{theorem}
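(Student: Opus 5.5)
We reduce from SAT (or 3-SAT), following the standard argument that tractable marginals let one decide satisfiability, as in hardness results for tractable probabilistic circuits. Given a CNF formula $F$ on $D$ variables with $N$ clauses, we build in polynomial time a continuous density $p_F$ on $\mathbb{R}^D$ with a polynomial-size description, for instance an unnormalized density evaluable pointwise. Its mass should be concentrated on the unit boxes $B_z=\prod_d [z_d, z_d+1]$ indexed by the satisfying assignments $z\in\{0,1\}^D$ of $F$. Concretely, we take $p_F(x)\propto \mathbf{1}[x\in[0,2]^D]\,\exp(-\lambda\, u_F(\lfloor x\rfloor))$ plus a vanishing floor, where $u_F$ counts unsatisfied clauses. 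If continuity or smoothness is needed, we smooth the indicator. If $F$ is unsatisfiable, $p_F$ is simply the near-uniform density on $[0,2]^D$.

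Suppose an algorithm outputs, in polynomial time, a model $q$ with tractable marginals and $D_{\mathrm{KL}}(q\Vert p_F)<\epsilon<1/8$. We show this decides SAT. The key quantity is the mass $Q(S)$ that $q$ assigns to the set $S$ of boxes of satisfying assignments.

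The first step is to show that the KL bound forces $Q(S)$ to be large. By the data-processing inequality applied to the coarse-graining $x\mapsto \mathbf{1}[x\in S]$, we get $d_{\mathrm{KL}}(Q(S)\Vert P_F(S))\le D_{\mathrm{KL}}(q\Vert p_F)<\epsilon$, where $d_{\mathrm{KL}}$ is the binary KL divergence. Pinsker then gives $|Q(S)-P_F(S)|<\sqrt{\epsilon/2}<1/4$. If $F$ is satisfiable, choosing $\lambda$ large (polynomial in $D$, e.g. $\lambda = D$) makes $P_F(S)\ge 1-2^{D}e^{-\lambda}$ close to $1$, so $Q(S)>1/2$.

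The second step is to use tractable marginals to find a satisfying assignment by self-reduction. We fix variables one at a time. At step $d$ we compute the marginal masses $Q(x_1\in[z_1,z_1+1],\dots,x_d\in[b,b+1])$ for $b\in\{0,1\}$ and keep the larger one. This needs only $O(D)$ marginal queries, each of polynomial cost by the assumption on $q$.

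The main obstacle is guaranteeing that the greedy path ends at a box in $S$. Mass above $1/2$ on $S$ does not prevent the larger branch from being mostly unsatisfying mass. To handle this, we make the gap between $P_F$ on satisfying and unsatisfying boxes exponentially large, so every unsatisfying box has $P_F$ mass below $e^{-\lambda}$. We then use the KL bound on each coarse-grained box partition: by data processing, $Q$ is $\sqrt{\epsilon/2}$-close in total variation to $P_F$ on the box partition, so the total $Q$-mass on unsatisfying boxes is below $1/4+2^De^{-\lambda}$.

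If this simple greedy argument does not close, we fall back on sampling. We draw exact samples from $q$ via sequential conditional marginals, which is possible with tractable marginals. Each sample lands in $S$ with probability above $1/2$, and we check candidates with the formula, which gives a randomized reduction. To get an honest deterministic NP-hardness statement, we instead argue via the decision version: $F$ is satisfiable iff the output $q$ has $Q(S)$ large. Since membership of an assignment in $S$ is checkable but $Q(S)$ is not directly computable, we rely on the sampling-plus-verification route, or else state the hardness under randomized reductions.

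Finally, we confirm the unsatisfiable case. If $F$ is unsatisfiable, verification never succeeds, so the procedure correctly rejects. This completes the reduction; the threshold $\epsilon<1/8$ enters only through the Pinsker step, which keeps $Q(S)$ bounded away from zero.
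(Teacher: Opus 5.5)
\textbf{There is a genuine gap.} Your argument never produces a deterministic polynomial-time decision procedure, and NP-hardness requires one. The event that separates the two cases, the union $S$ of satisfying boxes, depends on $F$ and consists of exponentially many boxes. So $Q(S)$ is not a tractable-marginal query. Your $p_F$ offers no fixed event whose probability is near $0$ exactly when $F$ is unsatisfiable. (Also, for unsatisfiable $F$ the density is not near-uniform; it concentrates on the assignments violating the fewest clauses.)

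The greedy self-reduction does not merely lack a proof; it fails for $\lambda=D$. Take $F=\neg z_D$. Let $q$ be $(1-t)$ times the uniform density on the satisfying boxes plus $t$ times the uniform density on the unsatisfying box $(1,\ldots,1)$; it clearly has tractable marginals. A direct computation gives $D_{\mathrm{KL}}(q\Vert p_F)=(1-t)\log(1-t)+t\log t+t\bigl(D+(D-1)\log 2\bigr)+\log(1+e^{-D})$. This is below $\epsilon$ for $t=\epsilon/(2D)$ and $D$ large. Since $t\gg 2^{-(D-1)}$, every greedy comparison is won by the branch containing the decoy box, and the path ends at an unsatisfying assignment.

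The sampling fallback gives only a one-sided-error randomized reduction: an efficient approximator would put SAT in RP. That is hardness under randomized reductions, which is weaker than the stated theorem, and you leave this open.

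\textbf{The missing idea.} The paper closes exactly this gap by encoding satisfiability in a single formula-independent coordinate. It appends a flag variable $y$ and a dummy component of unit mass, $p_f(x,y)\propto g_f(x)\eta_1(y)+u(x)\eta_0(y)$. Here $g_f$ places unit mass on each satisfying box, and $u$ is one fixed unit-mass bump; the dummy also keeps $p_f$ normalizable when $MC(f)=0$. The event $A=\mathbb{R}^{D-1}\times I_1$ then has $P_f(A)=MC(f)/(MC(f)+1)$, which is $0$ if $f$ is unsatisfiable and at least $1/2$ otherwise. Pinsker with $\epsilon<1/8$ gives $|Q(A)-P_f(A)|<1/4$. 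So a single marginal query $Q(A)$, thresholded at $1/4$, decides SAT deterministically.

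\textbf{A repair of your own route.} Alternatively, your construction can be rescued deterministically. Replace Pinsker by data processing on the partition $\{S,U\}$ ($U$ the unsatisfying boxes) together with $d_{\mathrm{KL}}(a\Vert b)\ge a\log(1/b)-\log 2$, which gives $Q(U)\le(\epsilon+\log 2)/\log(1/P_F(U))$. Take $\lambda\ge 2^D+D$, still a polynomial-size description though with doubly-exponentially small density values, and drop the floor outside $[0,2]^D$. Then $Q(U)<2^{-D}$. The greedy branch always keeps $Q$-mass at least $2^{-d}$, so the final box has mass exceeding $Q(U)$ and must be satisfying.
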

The proof is inspired by~\cite{lelandhardness}, which establishes hardness for probabilistic circuits.
Our result extends this hardness result from discrete mass functions to \emph{continuous} densities.
At a high level, we reduce SAT to the approximation problem by constructing, for each Boolean formula, a continuous target density whose probability mass over a specific event encodes satisfiability.
An accurate approximation with tractable marginalization would allow this probability to be computed efficiently, and hence would decide SAT in polynomial time.

The key step compared to the prior work is to embed Boolean assignments into disjoint regions of $\mathbb{R}^D$ using non-overlapping bump functions.
The Boolean formula is then encoded through polynomial operations, so that the resulting density concentrates mass on regions corresponding to satisfying assignments.
This preserves the SAT structure while producing a valid continuous density.
The full proof is deferred to Appendix~\ref{sec:proof-kl-hardness}.

Since Theorem~\ref{thm:kl-hardness} applies to any probabilistic model with tractable marginalization, it immediately yields a limitation for MPO-BMs.
As we have known in Sec.~\ref{sec:MPO_Born_Machine}, MPO-BMs with $R=\mathrm{poly}(D)$ admit tractable marginalization.
We therefore obtain the following corollary.

\begin{corollary}[\textbf{No Universal Efficient Approximation}]
    Unless $\mathrm{P}=\mathrm{NP}$, there is no polynomial-time algorithm that, for an arbitrary target density $p$, outputs an MPO-BM $q_\rho$ with bond dimension $R=\mathrm{poly}(D)$ such that
        $D_{\mathrm{KL}}(q_\rho \Vert p) < \epsilon$
    for any fixed $0<\epsilon<\frac{1}{8}$.
\end{corollary}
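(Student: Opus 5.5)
The corollary follows by contraposition from Theorem~\ref{thm:kl-hardness}, combined with the tractable-marginalization property of MPO-BMs from Sec.~\ref{sec:MPO_Born_Machine}. Suppose, for contradiction, that for some fixed $\epsilon\in(0,\tfrac18)$ there is a polynomial-time algorithm $\mathcal{A}$ with the following guarantee: given an arbitrary target density $p$ (presented as in the hardness construction, i.e.\ as a succinct description of polynomial size in $D$), it outputs an MPO-BM $q_\rho$ with $R=\mathrm{poly}(D)$ and $D_{\mathrm{KL}}(q_\rho\Vert p)<\epsilon$. Since $\mathcal{A}$ runs in polynomial time, its output has polynomial size: $O(DK^2R^2)$ core entries. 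The feature map $\phi$ is a fixed basis, such as Hermite or Fourier, whose one-dimensional integrals over intervals can be evaluated exactly in constant time.

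The first step is to check that $q_\rho$ is a model that admits tractable marginals in exactly the sense used in Theorem~\ref{thm:kl-hardness}. For any $S\subseteq[D]$ and any box $A$, $Q_\rho(A)$ is obtained by contracting the MPO cores with the per-site matrices $\int_{A_d}\phi(x_d)\phi(x_d)^\top dx_d$ for $d\in S$. For $d\notin S$ the corresponding matrix is $\int_{\mathbb{R}}\phi\phi^\top=I$. This contraction is a chain of $D$ matrix products of size $R\times R$, so it costs $\mathrm{poly}(D)$ time. The events used in the reduction are unions of the disjoint bump regions encoding Boolean assignments, so I will make sure they are boxes or polynomially many boxes; otherwise I will restrict to a marginal event over a single coordinate region, which is what the SAT encoding needs.

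Composing the two yields a SAT decider. On input a formula $\varphi$, build the density $p_\varphi$ from the reduction underlying Theorem~\ref{thm:kl-hardness} in polynomial time. Run $\mathcal{A}$ to get $q_\rho$ with $D_{\mathrm{KL}}(q_\rho\Vert p_\varphi)<\epsilon<\tfrac18$. Then compute the marginal probability of the designated event in polynomial time and threshold it as the theorem's proof does. The theorem guarantees that this probability separates satisfiable from unsatisfiable formulas whenever the KL error is below $\tfrac18$, for instance via Pinsker's inequality giving total variation below $\tfrac14$. This decides SAT in polynomial time, which implies $\mathrm{P}=\mathrm{NP}$.

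The main obstacle is matching interfaces between the two results rather than doing any computation. First, the input model of ``arbitrary target density $p$'' in the corollary must coincide with the representation of $p_\varphi$ produced by the reduction. Second, the marginal queries the theorem requires, namely integrals of $\phi\phi^\top$ over the bump regions, must be exactly computable for the chosen basis $\phi$. I would handle the second point by taking the encoding regions to be intervals and using the assumption from Sec.~\ref{sec:MPO_Born_Machine} that such one-dimensional integrals are computable in constant time.
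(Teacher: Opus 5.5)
Your argument is correct and follows the paper's route. Theorem~\ref{thm:kl-hardness} applies to any model with tractable marginals, poly-bond-dimension MPO-BMs have them, and you justify this step more explicitly than the paper does: you contract the cores with $\int_{A_d}\phi\phi^\top$ (and $I$ on unconstrained sites), then run the SAT reduction and threshold at $1/4$ via Pinsker. One small correction is that the event in the reduction is not a union of assignment regions but the single box $\mathbb{R}^{D-1}\times I_1$ on the auxiliary coordinate $y$ (with $I_1$ replaceable by $[3/4,5/4]$), so your fallback option is exactly what is needed.
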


\begin{remark}[\textbf{On Fisher-divergence hardness}]
The above reduction establishes hardness under KL divergences, but does not directly extend to Fisher divergence.
While Fisher divergence can control KL divergence under additional assumptions such as log-Sobolev inequalities, the SAT-embedding construction produces densities with separated high-probability regions.
Such bottleneck structures allow functions to vary significantly while incurring negligible gradient cost, leading to degenerate LSI constants.
Therefore, transferring hardness to Fisher divergence requires either additional assumptions or a different reduction with better geometric regularity.
\end{remark}

\subsection{Guaranteed Efficient Approximation under Structural Targets}
While MPO-BMs are not universally efficient approximators, they become efficient for a broad class of structured distributions, such as path-graph Markov random fields.
In this regime, both the required bond dimension and the number of queries scale polynomially in $D$, yielding provable guarantees on approximation error.

\subsubsection{From SBVI to a mixed-ground-state problem}
To establish the provable guarantee for MPO-BM, we consider the following generalization of EigenVI introduced in Sec.~\ref{sec:SBVI_EigenVI}.
By removing the rank restriction to $\rho$, we study the following optimization problem:
\begin{equation}\label{eq:EigenVI_extended}
    \min_{\rho\succeq 0} \operatorname{tr}(\rho H),
    \quad \text{s.t. } \operatorname{tr}(\rho)=1,
\end{equation}
where $H$ is constructed from~\eqref{eq:H} determined by the target density $p$ and the feature map $\Phi$.

Let $\rho^*$ denote an optimal solution to~\eqref{eq:EigenVI_extended}.
It is worth noting that~\eqref{eq:EigenVI_extended} admits a natural interpretation as a \emph{ground-state problem} in many-body physics:
the matrix $H$ acts as a Hamiltonian, and $\rho^*$ corresponds to a (possibly mixed) ground state minimizing the energy.
Our goal is to characterize conditions under which $\rho^*$ admits an efficient MPO approximation, and the induced MPO-BM yields a guaranteed KL approximation.


\subsubsection{MPO-BM VI with infinite queries}
We first consider the population setting, where $H$ is constructed exactly from the target distribution, i.e., we assume access to the exact score function and infinitely many queries.

Under this setting, the ground-state perspective allows us to leverage tools from many-body physics.
In particular, when $H$ is local and has a constant spectral gap, area-law results imply that its ground state admits an efficient MPO approximation.
The following theorem formalizes this intuition.

\begin{theorem}\label{thm:model_complexity}
Let $p$ be a probability density satisfying $LSI(c)$ for some $c>0$, and let $\Phi$ be a differentiable feature map.
If the induced matrix $H$ in~\eqref{eq:def_H} forms a one-dimensional local Hamiltonian over $D$ sites with local dimension $K=O(1)$ and $O(1)$ spectral gap,
then for any $\epsilon>0$, there exists $\rho_{\mathrm{mpo}}\in \mathrm{MPO}(R)$ with
$R=\mathrm{poly}\!\left(\frac{D\lambda_{\max}}{c\epsilon}\right)$
such that $q_{\mathrm{mpo}}(x)=\Phi(x)^\top \rho_{\mathrm{mpo}} \Phi(x)$ is an $(\epsilon+\lambda_{\min}/c)$-KL approximator of $p$,
where $\lambda_{\max}$ and $\lambda_{\min}$ denote the maximum and minimum eigenvalues of $H$, respectively.
\end{theorem}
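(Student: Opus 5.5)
The plan is to chain three ingredients: the identity linking $\operatorname{tr}(\rho H)$ to the Fisher divergence, an area-law MPO approximation of the gapped ground state, and Lemma~\ref{thm:upper_bound_of_Fisher_div_LSI} to pass from Fisher divergence to KL.

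\textbf{Step 1: Fisher divergence as energy.} I first verify that for every feasible $\rho$ ($\rho\succeq0$, $\operatorname{tr}\rho=1$) one has $D_F(q_\rho\Vert p)=\kappa\operatorname{tr}(\rho H)$ for a fixed constant $\kappa$ absorbing the factor convention in~\eqref{eq:def_H}. This is the EigenVI computation extended from $\theta\theta^\top$ to mixed $\rho=\sum_k w_k\theta_k\theta_k^\top$. Write $q_\rho\|\nabla\log q_\rho-s\|^2=\|\nabla q_\rho\|^2/q_\rho-2\nabla q_\rho^\top s+q_\rho\|s\|^2$. The last two terms are linear in $\rho$ and match the $H_1$ and $H_2$ terms. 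The first term is not linear. For rank one it equals $4\|\nabla(\theta^\top\Phi)\|^2$, which gives $H_3$. For mixed $\rho$, Cauchy--Schwarz gives $\|\nabla q_\rho\|^2/q_\rho\le4\sum_k w_k\|\nabla(\theta_k^\top\Phi)\|^2$, so $D_F(q_\rho\Vert p)\le\kappa\operatorname{tr}(\rho H)$. This inequality is the direction I need. The consequence is that $\operatorname{tr}(\rho^*H)=\lambda_{\min}$ is attained at the ground state.

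\textbf{Step 2: MPO approximation via the area law.} Since $H$ is a 1D local Hamiltonian with $K=O(1)$ and gap $\Omega(1)$, I would invoke the 1D area law and its algorithmic versions (Hastings; Arad--Kitaev--Landau--Vazirani). These give an MPS $\theta_\delta$ with bond dimension $\mathrm{poly}(D/\delta)$ and $\|\theta_\delta-\theta_0\|_2\le\delta$, where $\theta_0$ is the ground state; the gap makes it unique. Set $\rho_{\mathrm{mpo}}=\theta_\delta\theta_\delta^\top/\|\theta_\delta\|^2$. It lies in $\mathrm{MPO}(R)$ with $R=\mathrm{poly}(D/\delta)^2$, which is still polynomial, and it is a valid density operator, so $q_{\mathrm{mpo}}$ is a density.

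\textbf{Step 3: Energy perturbation and KL.} Using $\|\rho_{\mathrm{mpo}}-\theta_0\theta_0^\top\|_{S^1}\le 2\|\hat\theta_\delta-\theta_0\|\le4\delta$ and Hölder, I get
\[
\operatorname{tr}(\rho_{\mathrm{mpo}}H)\le\lambda_{\min}+4\delta\lambda_{\max}.
\]
Combining Step 1 with Lemma~\ref{thm:upper_bound_of_Fisher_div_LSI},
\[
D_{\mathrm{KL}}(q_{\mathrm{mpo}}\Vert p)\le\tfrac{\kappa}{2c}(\lambda_{\min}+4\delta\lambda_{\max}).
\]
Choosing $\delta=\Theta(c\epsilon/\lambda_{\max})$ yields error $\epsilon+\lambda_{\min}/c$, after normalizing constants so that $\kappa/2\le1$. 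This also gives $R=\mathrm{poly}(D\lambda_{\max}/(c\epsilon))$.

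\textbf{Expected obstacles.} The main obstacle is Step 1 together with the technical hypotheses. Three points need checking:
\begin{itemize}
\item the constant convention in $H$;
\item the same-support condition required by Lemma~\ref{thm:upper_bound_of_Fisher_div_LSI}, since $q_{\mathrm{mpo}}$ may vanish where $p$ does not;
\item boundary terms when integrating by parts.
\end{itemize}
For support, I would handle it by assumption, or by mixing in a tiny full-support component and absorbing the cost into $\epsilon$. A secondary point is that the area-law bounds are stated for Hamiltonians with bounded local terms. This is why $\lambda_{\max}$ enters the $\delta$ choice, and possibly why rescaling $H$ by its norm is needed before applying the area-law bound.
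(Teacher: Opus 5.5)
Your proposal is correct. It shares the paper's skeleton (bound KL by the energy $\operatorname{tr}(\rho H)$ via the LSI, approximate the ground state using an area law, control the energy error by H\"older), but it uses a different key ingredient in the middle.

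\textbf{Comparison of routes.} The paper invokes an MPO approximation of the maximally mixed ground state $\rho^*=P/r$ (its Theorem~\ref{thm: MPO_approximiation_MMGS}). It then bounds the KL of the resulting mixture by convexity of KL in its first argument, applying the LSI to each pure component. You instead approximate a ground vector by an MPS and form $\hat\theta_\delta\hat\theta_\delta^\top$. Since your $\rho_{\mathrm{mpo}}$ is rank one, Step~1 reduces to the rank-one EigenVI identity. Your Cauchy--Schwarz extension to mixed $\rho$ (convexity of the Fisher divergence) is valid but not needed.

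What your route buys is positivity. Your $\rho_{\mathrm{mpo}}$ is automatically PSD with unit trace, so $q_{\mathrm{mpo}}$ is a genuine density and the LSI step is legitimate. The result the paper cites only guarantees an MPO with unit trace, and the paper never checks $\rho_{\mathrm{mpo}}\succeq 0$, which its ensemble/Jensen step requires.

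What the paper's route buys is degeneracy. It handles a degenerate ground space directly, as in Theorem~\ref{thm:query_compelxity}, where the gap sits above an $r$-fold ground space. It also approximates exactly the $\rho^*$ later compared with $\bar\rho=Q/r$ via Davis--Kahan. Your claim that the gap forces uniqueness holds only if ``gap'' means $\lambda_2-\lambda_1$. Otherwise you need an area law for some vector in a degenerate ground space; that suffices, since any ground vector has energy $\lambda_{\min}$.

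\textbf{Your flagged obstacles.} Two of them dissolve on inspection. First, expanding $q_\theta\Vert\nabla\log q_\theta-s\Vert^2=\Vert 2\dot\Phi\theta-(\theta^\top\Phi)s\Vert^2$ gives $D_F(q_\theta\Vert p)=\theta^\top H\theta$ with $\kappa=1$ exactly. No integration by parts is involved, hence no boundary terms. This yields $D_{\mathrm{KL}}(q_{\mathrm{mpo}}\Vert p)\le\lambda_{\min}/(2c)+2\delta\lambda_{\max}/c$, and $\delta=c\epsilon/(4\lambda_{\max})$ already gives the strict bound.

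Second, the same-support issue disappears if you apply the LSI directly to the smooth signed function $f=\hat\theta_\delta^\top\Phi/\sqrt{p}$ (assuming $p>0$). Then $f^2=q_{\mathrm{mpo}}/p$ and
\begin{equation*}
\int\Vert\nabla f\Vert^2 p\,dx=\tfrac14\int\Vert 2\dot\Phi\hat\theta_\delta-(\hat\theta_\delta^\top\Phi)s\Vert^2dx=\tfrac14\,\hat\theta_\delta^\top H\hat\theta_\delta,
\end{equation*}
so zeros of $q_{\mathrm{mpo}}$ are harmless and no full-support mixing is needed.

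Finally, $\lambda_{\max}$ enters only through the H\"older step, not through the area law. The area law just needs the local terms normalized to $O(1)$ norm with an $\Omega(1)$ gap, which is how the hypothesis must be read. The paper reads it the same way when it notes $\lambda_{\max}=O(D)$.
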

In the proof, the key ingredient is the area law of~\cite{arad2026area}, which guarantees a polynomial-bond-dimension MPO approximation of the ground state of a one-dimensional gapped local Hamiltonian.
The resulting approximation error controls the Fisher divergence, which is then translated to KL divergence via Lemma~\ref{thm:upper_bound_of_Fisher_div_LSI}.
The full proof is given in Appendix~\ref{sec:proof_model_comolexity}.

Theorem~\ref{thm:model_complexity} shows that efficient approximation is possible when $H$ is local.
A naturally following-up question is thus for what $p$ the induced Hamiltonian $H$ is local?
We next characterize a broad class of target densities for which this condition holds.
\begin{proposition}\label{thm:MRF}
If $p$ is a path-graph Markov random field (MRF), then the induced matrix $H$ is a one-dimensional $k$-local Hamiltonian with $k\leq 3$.
\end{proposition}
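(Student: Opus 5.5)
The plan is to expand each of the three pieces of $H=\int H_1-2\int H_2+4\int H_3$ as a sum of operators, each acting nontrivially on at most three consecutive sites and as the identity elsewhere. The orthonormality $\int\phi(x)\phi(x)^\top dx=I$ is what turns the untouched sites into identity factors.

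The structural input is the path-graph factorization
\[
p(x)\propto\prod_{d=1}^{D}\psi_d(x_d)\prod_{d=1}^{D-1}\psi_{d,d+1}(x_d,x_{d+1}).
\]
From it, the $d$-th score component is $s_d(x)=\partial_d\log\psi_d(x_d)+\partial_d\log\psi_{d-1,d}(x_{d-1},x_d)+\partial_d\log\psi_{d,d+1}(x_d,x_{d+1})$. So $s_d$ depends only on $x_{N(d)}$, with $N(d):=\{d-1,d,d+1\}\cap[D]$.

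\textbf{The term $H_3$.} Here $\dot\Phi^\top\dot\Phi=\sum_d(\text{row }d)^\top(\text{row }d)$. The $d$-th summand is the tensor product of $\phi\phi^\top(x_e)$ for $e\neq d$ and $\dot\phi^\top\dot\phi(x_d)$. Integrating each factor separately gives $I^{\otimes(d-1)}\otimes\bigl(\int\dot\phi^\top\dot\phi\bigr)\otimes I^{\otimes(D-d)}$, which is $1$-local.

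\textbf{The term $H_2$.} We have $\Phi s^\top\dot\Phi=\sum_d s_d(x)\,\Phi(x)(\text{row }d)$. At sites $e\notin N(d)$ the integrand factor is $\phi\phi^\top(x_e)$ with no score dependence, so integrating it gives $I$. On $N(d)$ we are left with the $K^{|N(d)|}\times K^{|N(d)|}$ operator
\[
\int s_d(x_{N(d)})\bigotimes_{e\in N(d)}M_e(x_e)\,dx_{N(d)},
\]
where $M_d=\phi\dot\phi$ and $M_e=\phi\phi^\top$ for $e\neq d$. This is $3$-local on consecutive sites, and the transpose term is handled the same way.

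\textbf{The term $H_1$.} Write $\|s\|^2=\sum_d s_d^2$. Each $s_d^2$ depends only on $x_{N(d)}$, so by the same argument $\int\Phi\Phi^\top s_d^2$ equals $\bigl(\int\bigotimes_{e\in N(d)}\phi\phi^\top(x_e)\,s_d^2\bigr)\otimes I$ on the remaining sites. This too is $3$-local.

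Summing the three pieces expresses $H$ as a sum of $O(D)$ terms, each supported on at most three contiguous sites. That is exactly a one-dimensional $k$-local Hamiltonian with $k\le3$.

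\textbf{Main obstacle.} The delicate step is justifying the factorization of the integral. This needs Fubini together with the integrability of $s_d^2\,\phi\phi^\top$ and $s_d\,\phi\dot\phi$ over the three-site marginals, so that each local term is finite. I would state mild integrability assumptions on $\psi$ and $\phi$ to cover this, for example Hermite bases with polynomially growing scores. I would also note that one could group the local terms into nearest-neighbour terms on blocks of two sites if a $2$-local form is preferred.
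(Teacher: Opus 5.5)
Your proposal is correct and follows essentially the same route as the paper: use the path-graph factorization to show that $s_d$ depends only on $x_{d-1},x_d,x_{d+1}$, then split $H_1$, $H_2$, $H_3$ site by site and use $\int\phi\phi^\top dx = I$ to turn the untouched sites into identity factors, which gives $3$-local terms for $H_1$ and $H_2$ and $1$-local terms for $H_3$. Your explicit treatment of the boundary sites via $N(d)$ and your Fubini/integrability caveat are small refinements that the paper leaves implicit; the paper handles the boundary only in a footnote and does not address integrability.
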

The proof is given in Appendix~\ref{sec:proof_theorem_MRF}.
This result provides a sufficient condition under which $\rho^*$ admits an efficient MPO representation.
Path-graph MRFs cover a wide range of models in machine learning, including autoregressive models, hidden Markov models, one-dimensional Ising models, and pairwise energy-based models.
This demonstrates that the locality assumption on $H$ is mild and widely applicable.

\subsubsection{MPO-BM VI with finite queries}
We now consider the finite-query setting, where $H$ is not known exactly and must be estimated from score queries.
Let $\bar{H}$ denote an estimator of $H$.
We consider the empirical optimization problem:
\begin{equation}\label{eq:EigenVI_extended_estimated}
\begin{aligned}
\min_{\rho}\, \operatorname{tr}(\rho \bar{H}),\quad
\text{s.t.}\;
\rho = \frac{1}{r} U U^\top,\;
U^\top U = I_r,\;
U \in \mathbb{R}^{K^D \times r},
\end{aligned}
\end{equation}
where $r$ denotes the multiplicity of the minimum eigenvalue of $H$, and $I_r$ denotes the $r\times{}r$ identity matrix.
That is, $\rho$ is restricted to be a rank-$r$ density operator supported on the ground-state subspace.

Note that fixing $r$ is essential in the finite-query setting: although the population matrix $H$ may have a degenerate ground space of dimension $r$, estimation noise can break this degeneracy in $\bar{H}$, making direct identification of the ground space unstable.

The following lemma establishes a concentration bound for estimating $H$ via importance sampling on \emph{local} terms.

\begin{lemma}[\textbf{Concentration of locally estimated Hamiltonian}]\label{thm:concentration}
Let $H(x)=\sum_{e\in E}h_e(x)$ be a one-dimensional local Hamiltonian on $D$ sites with local dimension $K$, where each $h_e(x)$ depends only on $C=\Theta(1)$ entries of $x$. 
Assume $\|h_e(x)\|_{S^\infty}\le J$ almost surely, where $J=\Theta(1)$.  
Let $H:=\int H(x)\,dx$,
$\bar{h}_e=\frac{L^C}{m}\sum_{i=1}^m h_e(x^e_i)$, and $\bar{H}=\sum_{e\in E}\bar{h}_e$, where $\{x^e_i\}_{i=1}^m$ are i.i.d.\ samples drawn uniformly from $[-\frac{L}{2},\frac{L}{2}]^C$ that covers the support of $h_e,\,\forall{}e$.
Then, for any $\epsilon,\delta\in(0,1)$, we have the probability
$\Pr\{\|\bar{H}-H\|_{S^\infty}\le\epsilon\}\ge 1-\delta$
provided that
\begin{equation}
    m=\Omega\left(\frac{D^2}{\epsilon^2}\log\frac{\operatorname{poly}(K)}{\delta}\right),
    \label{eq:lower_bound_concentration}
\end{equation}
where the degree of $\operatorname{poly}(K)$ is a $D$-independent constant determined by the local support size of each term.
Consequently, since $|E|=\Theta(D)$ for one-dimensional local Hamiltonians, the total number of queries satisfies
$m_{\mathrm{total}}
=
\Omega\left(
\frac{D^3}{\epsilon^2}
\log\frac{\operatorname{poly}(K)}{\delta}
\right).
$
\end{lemma}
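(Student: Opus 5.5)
The plan is to reduce the global operator-norm error to per-term errors, bound each local term with a matrix concentration inequality, and combine with a union bound. By the triangle inequality,
\begin{equation*}
\|\bar H-H\|_{S^\infty}\le\sum_{e\in E}\|\bar h_e-h_e\|_{S^\infty},\qquad h_e:=\int h_e(x)\,dx .
\end{equation*}
Since $|E|=\Theta(D)$, it suffices to ensure $\|\bar h_e-h_e\|_{S^\infty}\le \epsilon/|E|=\Theta(\epsilon/D)$ for every $e$, each with failure probability at most $\delta/|E|$. A union bound over the edges then gives the global statement with probability at least $1-\delta$.

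Next, work locally. Each $h_e(x)$ acts nontrivially only on the $O(1)$ sites its variables touch (tensored with identity elsewhere), so view it as a symmetric matrix of size $K^{C'}\times K^{C'}$ with $C'=O(1)$. The identity factor does not change the spectral norm. The integral over the $C$ relevant coordinates equals $\mathbb{E}_{x\sim \mathrm{Unif}[-L/2,L/2]^C}[L^C h_e(x)]$, because $[-L/2,L/2]^C$ covers the support. Hence $\bar h_e$ is an unbiased average of $m$ i.i.d.\ matrices $Y_i=L^C h_e(x_i^e)-h_e$. These satisfy $\|Y_i\|\le 2L^CJ=O(1)$, treating $L$ as a $D$-independent constant like $J$ and $C$.

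Apply matrix Hoeffding or Bernstein (Tropp) in dimension $d_e=K^{C'}$:
\begin{equation*}
\Pr\Bigl\{\|\bar h_e-h_e\|_{S^\infty}\ge t\Bigr\}\le 2d_e\exp\!\left(-\frac{m t^2}{8(L^CJ)^2}\right).
\end{equation*}
Setting $t=\epsilon/|E|$ and requiring the right-hand side to be at most $\delta/|E|$ yields
\begin{equation*}
m=\Omega\!\left(\frac{D^2}{\epsilon^2}\log\frac{K^{C'}D}{\delta}\right).
\end{equation*}
This matches \eqref{eq:lower_bound_concentration}, with $\mathrm{poly}(K)=K^{C'}$ of $D$-independent degree. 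The total query count is $m|E|=\Theta(D)\,m$, which gives the stated $D^3$ bound.

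The main obstacle is the extra $\log D$ factor coming from the union bound, which the statement absorbs silently. I would first handle it by writing $\log(K^{C'}D/\delta)$ and noting that it only adds lower-order polylog factors. A cleaner alternative that avoids it is to apply the matrix Bernstein inequality directly to the full sum $\bar H-H=\sum_e(\bar h_e-h_e)$. This works because the terms are independent across edges and, using locality, the variance proxy is $O(D)$ in norm. The drawback is that the ambient dimension becomes $K^D$, so the log factor grows to $D\log K$. I would therefore keep the per-edge argument and state the dependence on $\log D$ explicitly.
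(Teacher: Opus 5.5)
Your argument is essentially the paper's. Both use the triangle inequality to force $\|\bar h_e-h_e\|_{S^\infty}\le\epsilon/|E|$ on every edge, then Tropp's matrix Bernstein on the $K^{O(1)}$-dimensional local block with $\|Y_i\|_{S^\infty}=O(J)$, then a combination over edges. The only difference is in that last step. The paper multiplies per-edge success probabilities, using independence of the samples across edges, with per-edge failure budget $\eta=1-(1-\delta)^{1/|E|}$. You use a union bound with budget $\delta/|E|$. Since $\delta/|E|\le\eta\le\delta$, these agree to first order, and your version does not need cross-edge independence.

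You are right that this route leaves a $\log D$, and the paper's proof does not remove it either. It argues $1-(1-\delta)^{1/s}\le\delta$, hence $\log\frac{2K^{c_1}}{1-(1-\delta)^{1/s}}\ge\log\frac{2K^{c_1}}{\delta}$. That lower-bounds the required logarithm instead of upper-bounding it, so the honest sufficient threshold is $\log\frac{2K^{c_1}|E|}{\delta}$, as you wrote.

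However, your reason for discarding the global Bernstein alternative is mistaken, and that alternative gives the statement exactly. Take summands $X_{e,i}=\frac{1}{m}\bigl(L^Ch_e(x_i^e)-h_e\bigr)$, tensored with identities, and set $R=2L^CJ$. Then $\|X_{e,i}\|_{S^\infty}\le R/m$ and $\sum_{e,i}\mathbb{E}X_{e,i}^2\preceq\frac{|E|R^2}{m}I$. The failure probability is therefore at most $2K^D\exp\bigl(-\frac{m\epsilon^2}{2|E|R^2+2R\epsilon/3}\bigr)$. So it suffices that $m\gtrsim\frac{D}{\epsilon^2}\bigl(D\log K+\log\frac{2}{\delta}\bigr)$, which is at most $\frac{D^2}{\epsilon^2}\log\frac{2K}{\delta}$.

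The factor $D\log K$ from the ambient dimension multiplies a variance of order $D$, not $D^2$, because independent edge errors add in quadrature. The $\epsilon/|E|$ allocation instead makes them add linearly. The extra factor of $D$ inside the logarithm is thus paid for by the factor of $D$ saved in the variance. This leaves no $\log D$ and gives a better $\delta$-dependence ($D\log\frac{1}{\delta}$ instead of $D^2\log\frac{1}{\delta}$).

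This global version is the one to write down. Note that it uses independence of the samples across edges, which the paper's product step assumes anyway.
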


The proof is given in Appendix~\ref{sec:proof_concentraction} based on Tropp's Matrix Bernstein inequality~\citep{tropp2015introduction}.

Lemma~\ref{thm:concentration} shows that, under locality assumptions, the estimation error of $H$ concentrates at a polynomial rate in $D$, implying that the query complexity for estimating $H$ is polynomial in $D$.
In contrast, global importance sampling over $\mathbb{R}^D$, as used in EigenVI, requires a number of samples that grows super-exponentially with $D$ (see Appendix~\ref{sec:app_additional_result}), thereby suffering from the curse of dimensionality.

Combining the concentration result with the MPO approximation guarantee yields the following theorem.

\begin{theorem}\label{thm:query_compelxity}
Let $p$ be a probability density satisfying $LSI(c)$ for some $c>0$, and let $\Phi$ be a differentiable feature map.
Suppose the induced matrix $H=\int H(x)\,dx$ by~\eqref{eq:H} has a positive spectral gap $\gamma = O(1)$, where $H(x)=\sum_{e\in{}E}h_e(x)$ is a one-dimensional $O(1)$-local Hamiltonian on $D$ sites, with local dimension of $K$, and $h_e(x)$ depends only on $O(1)$ entries of $x$ with $\Vert{}h_e(x)\Vert_{S^\infty} = O(1)$ for all $e\in{}E$. 
Let $\bar{H}$ is obtained as in Lemma~\ref{thm:concentration}. 
Define
$
\eta
:=
\min\left\{
\frac{\gamma}{2},
\frac{\gamma c\epsilon}{c\epsilon+2\lambda_{\max}}
\right\}$.
If $m_{\mathrm{total}}
=
\Omega\left(
\frac{D^3}{\eta^2}
\log\frac{\mathrm{poly}(K)}{\delta}
\right)$,
        then there exists a MPO-BM $(\epsilon+\lambda_{\min}/c)$-KL approximator with $R=\mbox{poly}(\frac{D\lambda_{max}}{c\epsilon})$, such that with probability at least
    $1-\delta$,
    \begin{equation}\label{eq:mpo_rho_bar_bound}
        \begin{split}
            \Vert\rho_{\mathrm{mpo}} - \bar{\rho}\Vert_{S^1}
        \;\le\; \frac{2c}{\lambda_{\max}}\,\epsilon,
        \end{split}
    \end{equation}
     where $\bar{\rho}$ denotes the solution of~\eqref{eq:EigenVI_extended_estimated},
    and $\lambda_{\max},\lambda_{\min}$ are the maximum/minimum eigenvalues of~$H$.
\end{theorem}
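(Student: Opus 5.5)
The plan combines three earlier results: Lemma~\ref{thm:concentration} to control $\bar H$, a Davis--Kahan-type perturbation bound to relate the empirical ground-state projector $\bar\rho$ to the population one $\rho^*$, and Theorem~\ref{thm:model_complexity} to get the MPO approximant together with its KL guarantee.

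\textbf{Step 1 (concentration).} Apply Lemma~\ref{thm:concentration} with accuracy $\eta$ in place of $\epsilon$. With the stated $m_{\mathrm{total}}$, with probability at least $1-\delta$ we have $\|\bar H-H\|_{S^\infty}\le\eta\le\gamma/2$. The hypotheses $\|h_e(x)\|_{S^\infty}=O(1)$ and $O(1)$-locality are exactly those of that lemma. All remaining steps are deterministic on this event.

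\textbf{Step 2 (stability of the ground space).} Let $P$ be the projector onto the $r$-dimensional ground space of $H$, and set $\rho^*=P/r$. This $\rho^*$ is an optimal solution of \eqref{eq:EigenVI_extended}, since every optimal solution is supported on the ground space.

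By Weyl's inequality, the lowest $r$ eigenvalues of $\bar H$ lie within $\eta$ of $\lambda_{\min}$. All other eigenvalues of $\bar H$ lie above $\lambda_{\min}+\gamma-\eta$. Hence the minimizer of \eqref{eq:EigenVI_extended_estimated} is $\bar\rho=\bar P/r$, where $\bar P$ projects onto the bottom-$r$ eigenspace of $\bar H$.

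The Davis--Kahan $\sin\Theta$ theorem, with separation at least $\gamma-\eta\ge\gamma/2$, gives $\|\bar P-P\|_{S^\infty}\le \eta/(\gamma-\eta)$. Since $\bar P-P$ has rank at most $2r$, this yields
\[
\|\bar\rho-\rho^*\|_{S^1}\le \tfrac{2}{r}\cdot r\,\|\bar P-P\|_{S^\infty}\le \tfrac{2\eta}{\gamma-\eta}.
\]
The definition of $\eta$, in particular $\eta\le \gamma c\epsilon/(c\epsilon+2\lambda_{\max})$, is designed so that $\eta/(\gamma-\eta)\le c\epsilon/(2\lambda_{\max})$. This gives $\|\bar\rho-\rho^*\|_{S^1}\le c\epsilon/\lambda_{\max}$.

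\textbf{Step 3 (MPO approximant and KL).} Invoke Theorem~\ref{thm:model_complexity}, or more precisely its proof. It provides $\rho_{\mathrm{mpo}}\in\mathrm{MPO}(R)$ with $R=\mathrm{poly}(D\lambda_{\max}/(c\epsilon))$ and $\|\rho_{\mathrm{mpo}}-\rho^*\|_{S^1}\le c\epsilon/\lambda_{\max}$. The same theorem gives the $(\epsilon+\lambda_{\min}/c)$-KL guarantee, through the chain
\[
\operatorname{tr}(\rho_{\mathrm{mpo}}H)\le\lambda_{\min}+\lambda_{\max}\|\rho_{\mathrm{mpo}}-\rho^*\|_{S^1},
\]
together with $D_F=2\operatorname{tr}(\rho H)$ up to the paper's normalization, and then Lemma~\ref{thm:upper_bound_of_Fisher_div_LSI}.

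The triangle inequality then gives
\[
\|\rho_{\mathrm{mpo}}-\bar\rho\|_{S^1}\le \frac{c\epsilon}{\lambda_{\max}}+\frac{c\epsilon}{\lambda_{\max}}=\frac{2c}{\lambda_{\max}}\epsilon,
\]
which is \eqref{eq:mpo_rho_bar_bound}.

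\textbf{Main obstacle.} The delicate step is Step 2. We must confirm that fixing the rank $r$ makes $\bar\rho$ well defined despite the degeneracy-breaking noise; the condition $\eta\le\gamma/2$ handles this. We must also get the constants in the Davis--Kahan trace-norm bound to match the exact form of $\eta$.

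If the arithmetic does not close with the crude rank factor, I would instead use the Frobenius-norm version $\|\bar P-P\|_{S^2}\le\sqrt{r}\,\eta/(\gamma-\eta)$. Combined with $\|\cdot\|_{S^1}\le\sqrt{2r}\|\cdot\|_{S^2}$, this gives the same scaling after dividing by $r$.

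A secondary concern is making sure that the MPO error tolerance fed into the area-law result of Theorem~\ref{thm:model_complexity} is chosen as $c\epsilon/\lambda_{\max}$, so that both the KL bound and the triangle inequality close simultaneously.
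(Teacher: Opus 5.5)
Your proposal is correct and follows the paper's proof essentially step for step. The ingredients are the same: concentration at accuracy $\eta$; the Weyl/Davis--Kahan stability bound $\|\rho^*-\bar\rho\|_{S^1}\le 2\eta/(\gamma-\eta)\le c\epsilon/\lambda_{\max}$, where the paper's two-case check on the minimum defining $\eta$ is equivalent to your monotonicity observation; the area-law MPO at tolerance $c\epsilon/\lambda_{\max}$, with the KL bound obtained via trace-norm/operator-norm duality and Lemma~\ref{thm:upper_bound_of_Fisher_div_LSI}; and the final triangle inequality. One small correction: with the paper's $H$ one has exactly $D_F(q_\theta\Vert p)=\theta^\top H\theta$, with no factor $2$, and this only improves the KL bound to $\tfrac{1}{2c}(\lambda_{\min}+c\epsilon)\le \epsilon+\lambda_{\min}/c$.
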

The proof is given in Appendix~\ref{sec:proof_query_comoplexity}.
Compared to Theorem~\ref{thm:model_complexity}, Theorem~\ref{thm:query_compelxity} shows that an efficient MPO-BM can be obtained using only polynomially many score queries.
The bound~\eqref{eq:mpo_rho_bar_bound} further implies that such a solution can be computed by solving~\eqref{eq:EigenVI_extended_estimated}.

\begin{remark}[\textbf{On the spectral gap assumption}]
We treat the spectral gap condition on $H$ as a working \emph{conjecture} in this paper.
Establishing such a gap directly from properties of $p$ remains challenging, since the spectral gap depends on the global spectrum of $H$, while $H$ itself is constructed through nonlinear operations on the score function $s(x)=\nabla \log p(x)$ and analyzed here primarily via its local decomposition.
In particular, local structure does not directly control global spectral properties, making it difficult to infer a uniform gap from assumptions on $p$.
Characterizing this relationship is an important direction for future work.
Nevertheless, our numerical results suggest that the induced Hamiltonians exhibit effectively gapped behavior for a range of structured targets.
\end{remark}

\section{Related Work}

\paragraph{Tensor-network-based probabilistic modeling.}
In probabilistic modeling, MPS, MPO and various TN models are widely used to construct density or variational families~\citep{han2018unsupervised,cheng2019tree,stokes2019probabilistic,sun2020generative,novikov2021tensor,bonnevie2021matrix,miller2021tensor,liu2023tensor,wu2025term,harvey2025sequence,ben2025regularized}. 
To ensure nonnegativity of the model: one is to impose entrywise nonnegativity on TN parameters~\citep{akamatsu2026plastic}, or adopt Born machines based on quadratic forms of amplitudes or PSD operators, which are closely related to quantum generative models~\citep{moore2025using,tian2023recent} and are generally more expressive~\citep{glasser2019expressive}.

Despite this progress, the capability boundary of Born machines remains unclear. 
Existing theory focuses on expressivity or approximation guarantees without addressing when efficient learning is possible. 
For instance, \citet{peng2023generative} analyze TN density estimation via hierarchical sketching and provide finite-sample Frobenius-norm estimation guarantees under low-rank assumptions, and \citet{meiburg2025generative} prove universal approximation for continuous-valued Born machines. 
In contrast, we provide a unified characterization of computational and statistical efficiency for MPO-BMs: we show that universal KL approximation is NP-hard in general, while identifying a broad tractable regime, based on locality and spectral gap, where both bond dimension and query complexity scale polynomially. 
Our analysis is inspired by gapped local Hamiltonians in quantum many-body theory and leverages the connection to probabilistic circuits~\citep{loconte2025relationship,loconte2025sum} to establish hardness.

\paragraph{Score-based variational inference.}
In the second half of this work, we study the efficient regime of MPO-BMs under SBVI. 
We focus on SBVI due to both its empirical success in modern generative modeling, particularly diffusion models~\citep{hyvarinen2005estimation,song2020score}, and its favorable algorithmic structure for variational inference~\citep{modi2023variational,caibatch}. 
Our analysis extends the EigenVI framework of~\citet{cai2024eigenvi} from rank-one parameterizations to operator-valued representations with tensor network structure. 
This leads to an eigenvalue formulation over local Hamiltonians, enabling us to explicitly exploit locality and spectral gap. 
As a result, we obtain polynomial guarantees on both model complexity (bond dimension) and score-query complexity in the structured regime.

\section{Numerical Evaluation}
We conduct numerical experiments on synthetic target distributions to validate the theoretical bounds on query complexity and MPO bond-dimension scaling. 
The Python code used to reproduce the experiments is included in the supplementary material.

\subsection{Data Preparation}

We evaluate MPO-BMs on five synthetic target distributions with qualitatively different structures: \emph{Gaussian}, \emph{GMM-3}, \emph{X-shape}, \emph{Ring}, and \emph{Funnel}.
See Figure~\ref{fig:groundtruth} in the appendix for two-dimensional visualizations.
Detailed constructions are deferred to Appendix~\ref{sec:app_data_preparation}.
To study scaling with dimension $D$, 
we lift the low-dimensional targets to higher dimensions by augmenting each base density \(p(x_1,x_2)\) with a nonlinear Markov chain
\(z_0=x_2\) and
\(z_i\sim\mathcal{N}(h_i(z_{i-1}),\sigma_{\mathrm{aug}}^2)\) for
\(i=1,\ldots,D-2\), where each \(h_i\) is drawn from a fixed bank of smooth
element-wise functions of sinusoidal, tanh, and sigmoid type. The resulting
\(D\)-dimensional density inherits the non-Gaussian two-dimensional structure in
\((x_1,x_2)\) and adds nonlinear, locally coupled dependencies along
\((z_1,\ldots,z_{D-2})\).
The Gaussian target is constructed directly in \(D\) dimensions by taking the
covariance matrix as \(\Sigma=\Lambda^{-1}\), where \(\Lambda\) is tridiagonal
with diagonal entries \(1\) and neighboring off-diagonal entries \(0.3\), corresponding perfectly to a path-graph MRF.



\subsection{Experimental Setup and Results}

\paragraph{Settings.}
We solve~\eqref{eq:EigenVI_extended_estimated} following EigenVI, in which the optimal $U$ consists of eigen vectors of the Hamiltonian $H$ w.r.t. $R$ minimum eigenvalues.
In the experiment, we use the weighted Hermite polynomials~\citep{cai2024eigenvi} as basis functions, and then estimate the Hamiltonian $H$ with two importance sampling strategies:
``\emph{global}'' represents applying to importance sampling over $D$-dimensional space as in EigenVI,
while ``\emph{local}'' follows Lemma~\ref{thm:concentration}, and exploits the locality of $H$ by sampling only the low-dimensional variables on which each local term depends.
In our path-graph setting as Proposition~\ref{thm:MRF}, each local term involves at most three variables, so local sampling is performed over three-dimensional subspaces instead of the full ambient space.

\paragraph{Metrics.}
Like EigenVI, we evaluate approximation quality using the \emph{forward} KL divergence $D_{\mathrm{KL}}(p\Vert q)$, estimated with $10^5$ samples, between the target distribution $p$ and the learned model $q$.
To assess model complexity, we report the maximum MPO bond dimension of the learned density operator $\rho$, obtained via alternating SVD truncation with threshold $\texttt{err}=10^{-6}$.
Additional implementation details are deferred to the supplementary material.


\begin{figure}[t]
    \centering
    \includegraphics[width=0.9\linewidth]{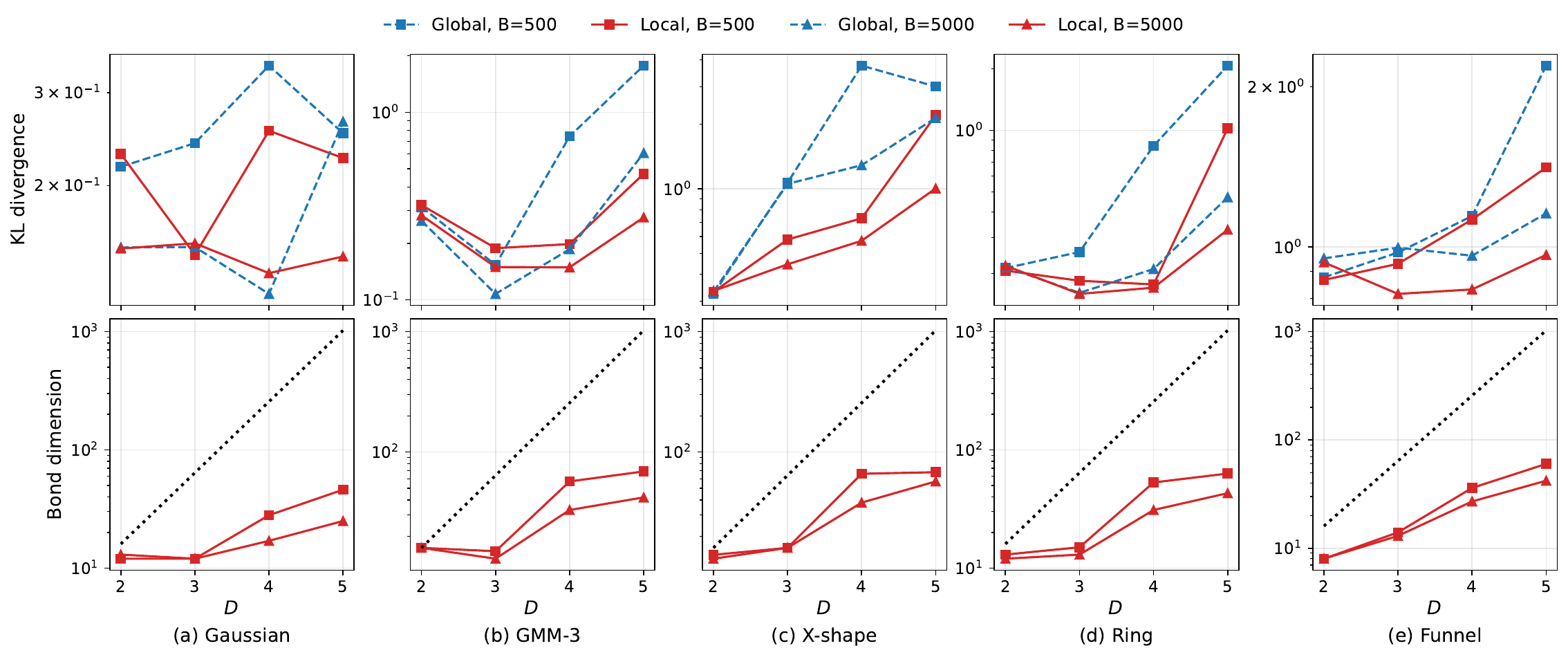}
    \caption{
Approximation performance under global and local estimation of the Hamiltonian $H$ with $K=4,\,r=2$.
The first row shows the forward KL divergence under varying dimension $D$ and query budget $B$, while the second row reports the corresponding maximum MPO bond dimension.
The black dashed line indicates the exponentially growing upper bound on the bond dimension.
}
    \label{fig:KL_with_various_D_samples}
    \vspace{-0.4cm}
\end{figure}

\begin{figure}[t]
    \centering
    \includegraphics[width=0.9\linewidth]{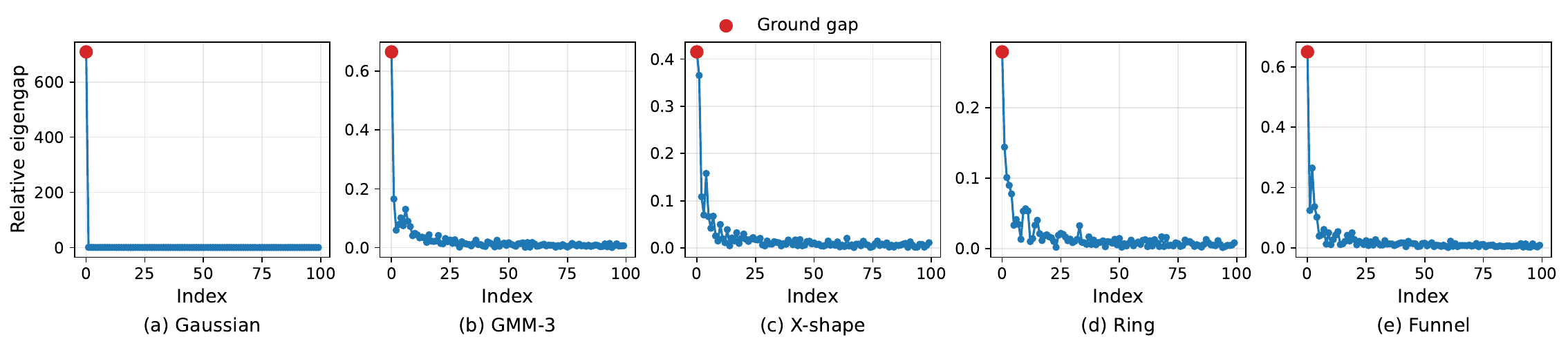}
    \caption{
Relative spectral gaps of the induced Hamiltonian $H$ for different target distributions at $D=5,\,K=4,\,B=5000$.
The highlighted point corresponds to the ground-state spectral gap.
}\label{fig:spectra_gap}
\vspace{-0.4cm}
\end{figure}

\paragraph{Results.}
Figure~\ref{fig:KL_with_various_D_samples} compares KL performance under global (aligned with EigenVI) and local estimation of the Hamiltonian $H$ under different dimensions $D$ and query budgets $B$.
The local estimator consistently achieves lower KL error in the low-query regime, while the performance of global estimation deteriorates rapidly as $D$ increases.
This empirically supports the polynomial query-complexity advantage proved in Lemma~\ref{thm:concentration} compared with global estimation.
The second row of Figure~\ref{fig:KL_with_various_D_samples} shows that the required MPO bond dimension grows much more slowly than the exponential upper bound, consistent with the polynomial-scaling prediction of Theorem~\ref{thm:model_complexity}.

\begin{wrapfigure}{r}{0.4\linewidth}
    \centering
    \vspace{-10pt}
    \includegraphics[width=\linewidth]{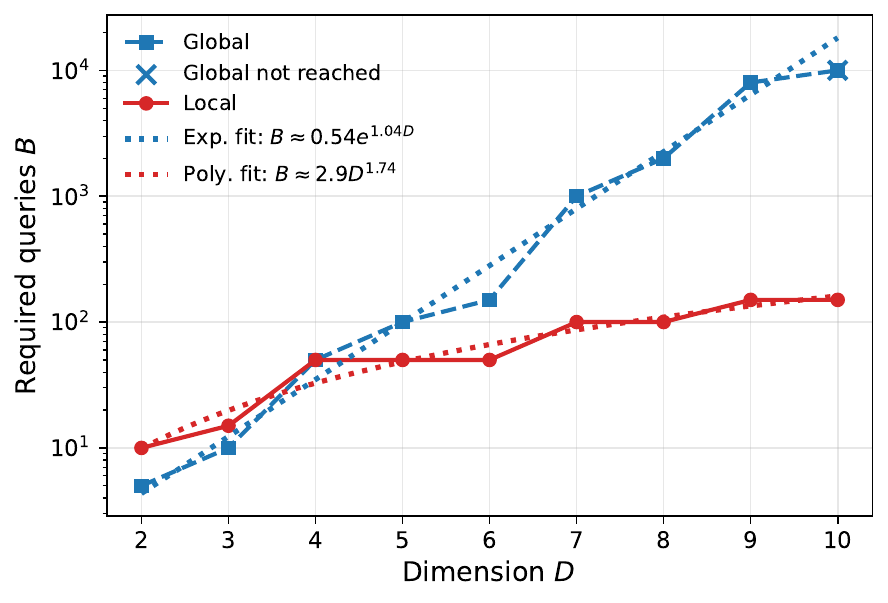}
    \caption{
    Fixed-accuracy query scaling.
    }
    \label{fig:query_scaling_law}
    \vspace{-10pt}
\end{wrapfigure}

Figure~\ref{fig:query_scaling_law} further studies the scaling law of the query budget required to achieve a fixed KL threshold (Gaussian, $K=2$, $D_{KL}\leq{}0.5$).
The fitted curves indicate polynomial growth for local estimation and exponential growth for global estimation, consistent with the theoretical distinction between local and global Hamiltonian estimation.
Finally, Figure~\ref{fig:spectra_gap} visualizes the low-energy spectral structure of the Hamiltonian (truncated to the first 100 eigenvalues for visualization).
Across all target distributions, we observe a clear spectral gap above the ground state, providing empirical evidence for the spectral-gap assumption used in our analysis.



\section{Concluding Remarks}
We studied the approximation capability of MPO-BMs from both computational and statistical perspectives.
We proved that universal KL approximation is NP-hard in general, showing that MPO-BMs are not universally efficient in the worst case.
On the positive side, under locality, spectral-gap, and LSI assumptions, we established provable polynomial scaling of both MPO bond dimension and query complexity by connecting score-based variational inference with gapped one-dimensional local Hamiltonians.
These results characterize a broad tractable regime for efficient probabilistic modeling with MPO-BMs.

\bibliographystyle{plainnat}
\bibliography{ref}


\newpage

\appendix

\section{Appendix}

\subsection{Notation and Concepts}\label{sec:notation_concepts}
In this subsection, we provide formal definitions of the key concepts used throughout the main text and the proofs.
\begin{definition}[Ensemble]\label{def:ensemble}
    Let $\mathbb{H}$ be a Hilbert space.
    An ensemble is a finite collection of pairs
    \begin{equation}
        \begin{split}
            \Theta := \{(\theta_i, \alpha_i)\}_{i=1}^{L}
        \end{split},
    \end{equation}
 where $L$ denotes the dimension of $\mathbb{H}$, each $\theta_i \in \mathbb{H}$ is a unit vector (i.e., $\|\theta_i\|_F = 1,\,\forall{}i$) and each $\alpha_i \in [0,1]$ is a nonnegative scalar weight satisfying the normalization condition $\sum_{i=1}^{L} \alpha_i = 1$.
\end{definition}
Note that in quantum information theory, the concept of ensembles is widely used, and induces the so-called \emph{density operator} $\zeta = \sum_{i=1}^{K^D} \alpha_i\, \theta_i \theta_i^\top$, which is a positive semi-definite (PSD) matrix with unit trace, i.e., $\zeta \succeq 0$ and $\operatorname{tr}(\zeta)=1$.

To describe the approximation error, we widely use the Schatten norm defined as follows:
\begin{definition}[Schatten $p$-norm]\label{def:Schatten_p}
    Let $A \in \mathbb{R}^{m \times n}$ be a matrix with singular values $\{\sigma_i(A)\}_{i=1}^{r}$, where $r = \min\{m,n\}$. 
    For $p \ge 1$, the \emph{Schatten $p$-norm} of $A$ is defined as
    \begin{equation}
        \|A\|_{S^p} := \left( \sum_{i=1}^{r} \sigma_i(A)^p \right)^{1/p}.
    \end{equation}
    In particular, $\|A\|_{S^2}$ corresponds to the Frobenius norm (denoted also by $\|A\|_{F}$ in the paper),  and $\|A\|_{S^\infty} = \max_i \sigma_i(A)$ corresponds to the spectral norm.
\end{definition}

Log-Sobolev Inequality~\citep{gross1975logarithmic} is used in the paper as a necessary role to connect Fisher divergence and KL divergence.
A formal definition is given as follows:
\begin{definition}[Log-Sobolev inequality with constant $c$]\label{def:LSI}
    Suppose the entropy functional:
    \begin{equation}
        Ent_\mu(f)=\int(f\log{}f)d\mu-\int{}f\left(\int{}fd\mu\right)d\mu.
    \end{equation}
    We say a probability measure $\mu$ on $\mathbb{R}^n$ satisfy the log-Sobolev inequality with constant $c$, i.e. $\mu$ is $\operatorname{LSI}(c)$, if for any smooth function $f$ it obeys:
    \begin{equation}
        Ent_\mu(f^2)\leq{}\frac{2}{c}\int_{\mathbb{R}^n}\Vert\nabla{}f(x)\Vert_F^2d\mu(x).
    \end{equation}
\end{definition}

\paragraph{Markov Random Fields (MRFs).}
Below, we provide a brief review of MRFs, which are used in Theorem~\ref{thm:MRF} to clarify for what distributions the optimal density operator $\rho^*$ can be approximated with low-bond-dimension MPO.

In the field of (Bayesian) statistical inference, a MRF is a probability distribution $p$ over variables $x_1,x_2,\ldots,x_D$ defined by an undirected graph $G$ in which nodes correspond to variables $x_i$.
The probability $p$ has the form
\begin{equation}
    \begin{split}
        p(x_1,x_2,\ldots,x_D)=\frac{1}{Z}\prod_{c\in{}C}\phi(x_c)
    \end{split},
\end{equation}
where $C$ denotes the set of \emph{cliques} (i.e., complete subgraphs) of $G$, and each \emph{factor} $\phi_c$ is a non-negative function over the variables in a clique.
The \emph{partition function}
\begin{equation}
    \begin{split}
        Z=\sum_{x_1,x_2,\ldots,x_D}\prod_{c\in{}C}\phi_c(x_c)
    \end{split}
\end{equation}
is a normalizing constant that ensures that the distribution sums to one.

The following definitions are used in the proof for Theorem~\ref{thm:kl-hardness}.

\begin{definition}[Total Variation Distance]
    The \emph{total variation distance (TVD)} between two densities $p$ and $q$ is defined as 
    \begin{equation}
        \begin{split}
            D_{TV}(p\Vert{}q):=\frac{1}{2}\int\left\vert{}p(x)-q(x)\right\vert{}dx
        \end{split},
    \end{equation}
    or equivalently,
    \begin{equation}
        \begin{split}
            D_{TV}(p\Vert{}q)
            =\sup_{A}\left\vert{}P(A)-P(B)\right\vert
        \end{split},
    \end{equation}
    where it satisfies $P(A)=\int_Ap(x)dx$ and $Q(A)=\int_Aq(x)dx$, respectively.
\end{definition}

\begin{definition}[Tractable Marginalization)]
    Let $q(x)$ be a pdf over $\mathbb{R}^D$, and let $x=(x_S,\,x_{\bar{S}})$ be a partition of variables, where $S\subseteq{}\{1,2,\ldots,D\}$.
    We say that $q(x)$ admits \emph{tractable marginalization} if for any subset $S\subseteq{}\{1,2,\ldots,D\}$, the marginal probability
    \begin{equation}
        \begin{split}
            Q(A):=\int_{A\subseteq{}\mathbb{R}^D}q(x)dx
        \end{split}
    \end{equation}
    can be computed exactly in $poly(D)$.
\end{definition}

\subsection{Proofs}
In this subsection, we provide the complete proofs of formal claims in the main text.

\subsubsection{Proof of Lemma~\ref{thm:upper_bound_of_Fisher_div_LSI}}
\begin{proof}
    By the definition of the entropy functional (see Def.~\ref{def:LSI}), we have
    \begin{equation}
        \begin{split}
            Ent_\mu(f)&=\int(f\log{}f)d\mu-\int{}f\left(\int{}fd\mu\right)d\mu\\
            &=\int{}fd\mu\log\frac{f}{\mathbb{E}_\mu{}f}\\
            &=\int{}fd\mu\log\frac{fd\mu}{\mathbb{E}_\mu{}fd\mu}=D_{KL}(fd\mu\Vert\mathbb{E}_\mu{}fd\mu)
        \end{split},
    \end{equation}
where the first line follows the definition of the entropy functional; the second line follows from $\mathbb{E}_\mu{}f=\int{}fd\mu$;
and the last line follows from the definition of the KL-divergence.
Because $p$ is $\operatorname{LSI}(c)$, we thus have 
\begin{equation}
    \begin{split}
        Ent_\mu(f^2)&=D_{KL}\left(f^2d\mu\Vert\mathbb{E}_\mu{}f^2d\mu\right)\\
        &\leq\frac{2}{c}\int_{\mathbb{R}^n}\Vert\nabla{}f(x)\Vert_F^2d\mu(x)
    \end{split},\label{eq:app_LSI_upper_bound}
\end{equation}
 where the inequality follows from the definition of the Log-Sobolev inequality.

Next, let $f=\sqrt{q/p}$ and $d\mu=pdx$, then
\begin{equation}
    \begin{split}
        Ent_\mu(f^2)=\int{}f^2d\mu\log\frac{f^2}{\mathbb{E}_\mu{}f^2}&=\int\frac{q}{p}d\mu\log\frac{q/p}{\int{}q/pd\mu}\\
        &=\int{}q\log\frac{q}{p}dx\\
        &=D_{KL}(q\Vert{}p)
    \end{split},
    \end{equation}
where the second line holds by the fact $\int{}q/pd\mu=\int{}qdx=1$.
Next, we note that
   \begin{equation}
       \begin{split}
           \nabla{}f=\nabla\left(\frac{q}{p}\right)^{\frac{1}{2}}&=\frac{1}{2}\left(\frac{q}{p}\right)^{-\frac{1}{2}}\nabla\left(\frac{q}{p}\right)=\frac{1}{2}\left(\frac{q}{f}\right)^{-\frac{1}{2}}\frac{\nabla{}q\cdot{}p-q\nabla{}p}{p^2}\\
           &=\frac{1}{2}\left(\frac{q}{p}\right)^{\frac{1}{2}}\cdot{}\frac{p}{q}\cdot\frac{\nabla{}q\cdot{}p-q\nabla{}p}{p^2}\\
           &=\frac{1}{2}\left(\frac{q}{p}\right)^{\frac{1}{2}}\frac{\nabla{}q\cdot{}p-q\nabla{}p}{qp}
           =\frac{1}{2}\left(\frac{q}{p}\right)^{\frac{1}{2}}\left(\nabla\log{}q-\nabla\log{}p\right)
       \end{split},
   \end{equation}
where the last equality follows from $\nabla\log{}q=\nabla{}q/q$ and $\nabla\log{}p=\nabla{}p/p$.

Therefore,
\begin{equation}
       \begin{split}
       \int\Vert{}\nabla{}f\Vert_F^2d\mu&= \int{}p\Vert{}\nabla{}f\Vert^2dx\\
       &=\frac{1}{4}\int{}p\frac{q}{p}\left\Vert\nabla\log{}q-\nabla\log{}p\right\Vert_F^2dx\\
       &=\frac{1}{4}\int{}q\left\Vert\nabla\log{}q-\nabla\log{}p\right\Vert_F^2dx=\frac{1}{4}D_F(q\Vert{}p)
       \end{split}.
   \end{equation}
Using the inequality in~\eqref{eq:app_LSI_upper_bound}, we finally have
\begin{equation}
       \begin{split}
           Ent_\mu(f^2)=D_{KL}(q\Vert{}p)\leq\frac{2}{c} \int\Vert{}\nabla{}f\Vert^2d\mu=\frac{1}{2c}D_f(q\Vert{}p)
       \end{split}.
   \end{equation}
The proof is completed.
\end{proof}

\subsubsection{Proof of Theorem~\ref{thm:kl-hardness}}\label{sec:proof-kl-hardness}
\begin{proof}
    We prove the theorem by reducing from SAT, inspired by the work~\citep{lelandhardness} in circuit theory.
    Unlike the prior work, we construct $p$ for \emph{continuous} random variables encoding the boolean function.

    First of all, we define a integrable function $\eta(x)$ with the support $supp(\eta(x))\subset{}[-\frac{1}{4},\frac{1}{4}]$ and $\int_\mathbb{R}\eta(x)dx=1$.
    Based on it, we further construct two functions $\eta_0(x)=\eta(x),\,\eta_1(x)=\eta(x+1)$.
    It easily finds that the supports of $\eta_0$ and $\eta_1$ are not overlapped, by the fact of  $I_0:=supp(\eta_0(x))\subset{}[-\frac{1}{4},\frac{1}{4}]$ and $I_1:=supp(\eta_1(x))\subset{}[\frac{3}{4},\frac{5}{4}]$.

    Next, for any assignment $z=(z_1,z_2,\ldots,z_n)\in\{0,1\}^{D-1}$, we construct a $(D-1)$-dimensional function as follows:
    \begin{equation}
        \begin{split}
            b_z(x)=\prod_{i=1}^{D-1}\eta_{z_i}(x_i),\quad{}x\in\mathbb{R}^{D-1}
        \end{split}.
    \end{equation}
    Note that $b_z$ can be seen as a continuous embedding of a assignment $x$ into a high-dimensional real space.

    With the same goal, we also  continuize the boolean function $f(z)$,
    i.e., we construct a function $F_f:\mathbb{R}^{D-1}\rightarrow\mathbb{R}$, such that $f(z)=F_f(x)$ with $x=z$ for any assignment $z\in\{0,1\}^{D-1}$.
    In doing so, we construct $F_f$ with the following rules:
    \begin{enumerate}
        \item mapping $z_i$ to $x_i$ for $i\in{}[D-1]$;\\
        \item mapping all \texttt{NOT} $\neg{}a$ to $1-a$;\\
        \item mapping all \texttt{AND} $a\land{}b$ to $a\times{}b$;\\
        \item mapping all \texttt{OR} $a\lor{}b$ to $a+b-ab$,
    \end{enumerate}
    where $a,b$ are symbols denoting any boolean values.
    You can see that logic operators in a boolean function are translated into polynomials.
    If we see $b_z$ as assignment embedding, then $F_f$ can be seen as SAT formula embedding.

    Up to embeddings for assignments and boolean functions, we also need the connection between continuous and binary values.
    For any $x\in\mathbb{R}$, we construct
    \begin{equation}
        \begin{split}
        s(x)=
        \begin{cases}
            \frac{\eta_1(x)}{\eta_0(x)+\eta_1(x)},&x\in{}I_0\cup{}I_1\\
            0&\mbox{otherwise}
        \end{cases}
        \end{split}.
    \end{equation}
    Let $s(u)=\left(s(x_1),s(x_2),\ldots,s(x_{D-1})\right)^\top$.
    Then, we can see that $s(u)=z$ if $u_i\in{}I_{z_i}$ for $i\in{}[D-1]$.

    Last, we construct the target pdf $p$ using $F_f,\,s(x)$ and $\eta_0/\eta_1$.
    Specifically,
    \begin{equation}
        \begin{split}
            p_f(x,y):=\frac{g_f(x)\eta_1(y)+u(x)\eta_0(y)}{MC(f)+1}
        \end{split},
    \end{equation}
    where $MC(f)$ denotes the model count of $f$, $g_f(x)$ is defined as follows:
    \begin{equation}
        \begin{split}
            g_f(x):=F_f\left[s(x)\right]\prod_{i=1}^{D-1}\left(\eta_0(x_i)+\eta_1(x_i)\right)
        \end{split},
    \end{equation}
    and $u(x):=\prod_{i=1}^{D-1}\eta_1(x_i)$.
    In the following, we need to prove that the function $p_f$ is a valid pdf.
    Since it is not hard to find that the values of $p_f$ are always non-negative, what we need to verify is thus the integral of $p_f$ equals one.

    In doing so, let $B_z:=\bigcup_{z\in\{0,1\}^{D-1}} supp(b_z)$, and $\bar{B_z}$ be the complement of $B_z$, then
    \begin{equation}
        \begin{split}
            \int{}g_f(x)dx&=\int_{B_z\cup\bar{B_z}}{}g_f(x)dx\\
            &=\int_{B_z}{}g_f(x)dx+\underbrace{\int_{\bar{B_z}}{}g_f(x)dx}_{=0}\\
            &=\sum_{z}\int{}b_z(x)dx=MC(f)
        \end{split},
    \end{equation}
    where the second term in the second line equals zeros because, if $x\in\bar{B_z}$, there must exist an index $i\in{}[D-1]$ such that $x_i\notin{}I_0\cup{}I_1$ holds.
    In this situation, we have $\eta_0(x_i)+\eta_1(x_i)=0$, making $g_f(x)=0$.
    The third row holds because $supp(b_z)\cup{}supp(b_h)=\emptyset$ for any two different assignments $z$ and $h$.

    We also easily know that $\int{}u(x)dx=\prod_{i=1}^{D-1}\int{}u(x_i)dx_i=1$
    Hence, combining the results together, we have 
    \begin{equation}
        \begin{split}
            \int{}p_f(x,y)&=\frac{1}{MC(f)+1}
            \left(\int{}g_f(x)dx\int\eta_1(y)dy+\int{}u(x)dx\int\eta_0(y)dy\right)\\
            &=\frac{1}{MC(f)+1}\left(MC(f)+1\right)=1
        \end{split}.
    \end{equation}

    Once we have $p_f(x,y)$, we know, if we let $A:=\mathbb{R}^{D-1}\times{}I_1$, then
    \begin{equation}
        \begin{split}
            P_f(A)=P_f(A)=\int_A p_f(x,y)\,dx\,dy
=
\frac{1}{MC(f)+1}\int_{\mathbb R^n} g_f(x)\,dx
=
\frac{MC(f)}{MC(f)+1}
        \end{split}.
    \end{equation}
Hence,
if $f$ is unsatisfiable, then $MC(f)=0$, thus $P_f(A)=0$; otherwise, $P_f(A)=\frac{MC(f)}{MC(f)+1}\geq\frac{1}{2}$.

Now suppose that $q$ is an $\epsilon$-KL-approximator of $p_f$, i.e.,
$D_{\mathrm{KL}}(q\Vert p_f)<\epsilon$.
By Pinsker's inequality,
\begin{equation}
    D_{\mathrm{TV}}(q\Vert{}p_f)
    \leq \sqrt{\frac{1}{2}D_{\mathrm{KL}}(q\Vert p_f)}
    < \sqrt{\frac{\epsilon}{2}}.
\end{equation}
Thus, for any measurable event $A$,
\begin{equation}
    |Q(A)-P_f(A)|
    \leq D_{\mathrm{TV}}(q,p_f)
    < \sqrt{\frac{\epsilon}{2}}.
\end{equation}
if $f$ is unsatisfiable, then
\begin{equation}
    Q(A)<\sqrt{\epsilon/2}<\frac{1}{4}.
\end{equation}
if $f$ is satisfiable, then
\begin{equation}
    Q(A)>P_f(A)-\sqrt{\epsilon/2}
    \geq \frac{1}{2}-\sqrt{\epsilon/2}
    > \frac{1}{4}.
\end{equation}
Therefore, $f$ is a satisfiable if and only if $Q(A)\geq{}1/4$.
In other words, we can decide SAT if we can efficiently compute an
$\epsilon$-KL approximation as a model that supports tractable marginals.
Since deciding SAT is NP-hard, this gives a contradiction.
It is known that deciding SAT is NP-hard, thus contradiction appears.
\end{proof}

\paragraph{Discussion on the representation of the target density.}
In the proof above, we follow the basic framework in~\citep{lelandhardness}, in which $MC(f)$ is used in the construction for the probability.
However, we note that the reduction above only requires oracle access to the unnormalized function
$
\tilde p_f(x,y)
:=
g_f(x)\eta_1(y)+u(x)\eta_0(y),
$
rather than to the normalized density
$
p_f(x,y)
=
\frac{\tilde p_f(x,y)}{MC(f)+1}.
$
In particular, the reduction never explicitly computes the normalization constant
$
MC(f)+1,
$
which is \#P-hard in general. The normalization factor is introduced only for analysis purposes in defining the target probability distribution. Therefore, the reduction remains polynomial-time computable under the standard setting where the target distribution is specified through an unnormalized representation of its support/function values.

\subsection{Proof of Theorem~\ref{thm:model_complexity}}\label{sec:proof_model_comolexity}
Before the formal proof for Theorem~\ref{thm:model_complexity}, we need the following results.
\begin{lemma}
    Suppose an ensemble $\Theta:=\{(\theta_i,\alpha_i)\}_{i=1}^{K^D}$ and the corresponding random function $q=\left\vert\left<\theta,\Phi(x)\right>\right\vert^2,\,\theta\sim\Theta$. Then, the optimization~\eqref{eq:EigenVI_extended} is equivalent to solving 
    \begin{equation}
        \begin{split}
            \min_{\Theta}\mathbb{E}_\Theta{}D_F(q\Vert{}p)
        \end{split};
    \end{equation}
    that is, $\rho^*:=\sum_{i=1}^{K^D}\alpha_i^*\theta^{*}\theta^{*,\top}$ is the optimal solution of~\eqref{eq:EigenVI_extended}, where $\Theta^*=\{(\theta_i^*,\alpha_i^*)\}=\arg\min\mathbb{E}_\Theta{}D_F(q\Vert{}p)$.
\end{lemma}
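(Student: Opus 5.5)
The plan is to show that the objective of \eqref{eq:EigenVI_extended} and the expected Fisher divergence coincide up to a scaling fixed by the EigenVI normalization, and that both optimize over the same set of density operators. The argument rests on the EigenVI identity recalled in Sec.~\ref{sec:SBVI_EigenVI}: for any unit vector $\theta$, the Fisher divergence of the rank-one Born model $q_\theta(x)=\langle\theta,\Phi(x)\rangle^2$ is a quadratic form in $H$. Concretely, I will first establish, or cite from \cite{cai2024eigenvi},
\begin{equation}
D_F(q_\theta\Vert p)=\kappa\,\theta^\top H\theta
\end{equation}
for a fixed positive constant $\kappa$ that depends only on how the factors in \eqref{eq:def_H} are normalized. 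To check this, expand $q_\theta\Vert s-\nabla\log q_\theta\Vert^2$ using
\begin{equation}
\nabla\log q_\theta=\frac{2\,\dot\Phi(x)\theta}{\langle\theta,\Phi(x)\rangle}.
\end{equation}
This gives three terms: $q_\theta\Vert s\Vert^2=\theta^\top H_1(x)\theta$, a cross term $-4\langle\theta,\Phi\rangle s^\top\dot\Phi\theta$ that matches $-2\theta^\top H_2(x)\theta$, and $4\Vert\dot\Phi\theta\Vert^2=4\theta^\top H_3(x)\theta$. Integrating over $x$ then yields the identity. Only linearity in $\theta\theta^\top$ and the positivity of $\kappa$ matter for what follows.

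Second, I use linearity of the trace. For an ensemble $\Theta=\{(\theta_i,\alpha_i)\}$ with $\rho_\Theta=\sum_i\alpha_i\theta_i\theta_i^\top$,
\begin{equation}
\mathbb{E}_\Theta D_F(q\Vert p)=\sum_i\alpha_i\,\kappa\,\theta_i^\top H\theta_i=\kappa\operatorname{tr}(\rho_\Theta H).
\end{equation}
So the two objectives agree up to the factor $\kappa>0$ on every ensemble.

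Third, I show the feasible sets correspond. Every ensemble yields a feasible $\rho_\Theta$, which is PSD with unit trace since $\sum_i\alpha_i=1$ and each $\Vert\theta_i\Vert=1$. Conversely, every feasible $\rho$ arises from an ensemble: take its spectral decomposition $\rho=\sum_{i=1}^{K^D}\lambda_i v_iv_i^\top$, where $\lambda_i\ge0$, $\sum_i\lambda_i=1$ and the $v_i$ are orthonormal. This is exactly an ensemble of size $L=K^D$ in the sense of Definition~\ref{def:ensemble}. The map $\Theta\mapsto\rho_\Theta$ is therefore surjective onto the feasible set and preserves the objective up to $\kappa$. Hence the minimum values agree, again up to $\kappa$. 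If $\Theta^*$ minimizes $\mathbb{E}_\Theta D_F$, then $\rho_{\Theta^*}$ attains the minimum of \eqref{eq:EigenVI_extended}. Conversely, the eigendecomposition of any optimal $\rho^*$ gives an optimal ensemble.

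The main obstacle is the first step, namely justifying the pointwise-to-integral Fisher identity rigorously. The difficulty is that $\nabla\log q_\theta$ is singular on the zero set of $\langle\theta,\Phi(x)\rangle$. I would handle this by working with $q_\theta\nabla\log q_\theta=2\langle\theta,\Phi\rangle\dot\Phi\theta$ directly, which is smooth, and noting that the zero set has measure zero for analytic bases such as weighted Hermite functions. I also need integrability of $q_\theta\Vert s\Vert^2$, which I take as implicit in $H$ being well defined. Everything after that is linear algebra.
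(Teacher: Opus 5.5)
Your proposal is correct and follows the paper's proof. The paper likewise writes $\mathbb{E}_\Theta D_F(q\Vert p)=\sum_i\alpha_i\theta_i^\top H\theta_i=\operatorname{tr}(\zeta H)$ and then appeals to the correspondence between ensembles and unit-trace PSD matrices; you make explicit the two steps the paper treats as known, namely the rank-one EigenVI identity (your expansion gives $\kappa=1$ under the paper's definition of $D_F$ without the factor $\tfrac{1}{2}$) and surjectivity of $\Theta\mapsto\rho_\Theta$ via the spectral decomposition.
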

\begin{proof}
    It is known that the loss function satisfies
    \begin{equation}
        \begin{split}
            \mathbb{E}_\Theta{}D_F(q\Vert{}p)&=\sum_{i=1}^{K^D}\alpha_i{}D_F\left(q_i\Vert{}p(x)\right)\\
            &=\sum_{i=1}^{K^D}\alpha_i{}\theta_i^\top{}H\theta_i\\
            &=\sum_{i=1}^{K^D}\alpha_i{}\operatorname{tr}\left(\theta_i\theta_i^\top{}H\right)=\operatorname{tr}\left(\underbrace{\sum_{i=1}^{K^D}\alpha_i\theta_i\theta_i^\top}_{\zeta:=}{}H\right)
            =\operatorname{tr}\left(\zeta{}H\right)
        \end{split}.
    \end{equation}
    It is known from the definition of an ensemble that $\zeta\in\mathbb{R}^{K^D\times{}K^D}$ can be an arbitrary positive semi-definite matrix with the unit trace, i.e., $\operatorname{tr}(\zeta)=1$.
    Thus, we can find that minimizing $\mathbb{E}_\Theta{}D_F(q\Vert{}p)$ is exactly equivalent to solving~\eqref{eq:EigenVI_extended} with the correspondence $\rho^*:=\sum_{i=1}^{K^D}\alpha_i^*\theta^{*}\theta^{*,\top}$.
\end{proof}

Note that $\|h_i(x)\|_{S^\infty}=\Theta(1)$ implies
$
\lambda_{\max}(H)
\le
\sum_i \mathbb{E}_x \|h_i(x)\|_{S^\infty}
=
O(D),
$
since the Hamiltonian contains $O(D)$ local terms.
Therefore, the bond-dimension bound in Theorem~\ref{thm:model_complexity} is polynomial in the system size $D$.

In the following, we review the results stating that for a one-dimensional local Hamiltonian, its maximally-mixed ground-state can be approximated by an MPO with low bond dimension.
\begin{theorem}[An MPO approximation for the maximally-mixed ground-state in 1D, \citep{arad2026area}]\label{thm: MPO_approximiation_MMGS}
    Let $H=\sum_{i=1}^{n-1}h_i$ be a 1D local Hamiltonian of qubits with $O(1)$ site dimension (also known as mode dimension in the tensor literature) and an $O(1)$ spectral gap, and let $\Omega$ be its maximally-mixed ground-state and $\epsilon>0$.
    Then there is an MPO $\Psi$ with poly($n/\epsilon$)-bond dimension and unit trace such that $\Vert\Omega-\Psi\Vert_{S^1}\leq\epsilon$. 
\end{theorem}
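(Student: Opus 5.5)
The statement is a quoted result from~\citep{arad2026area}. Our plan for (re)proving it follows the standard approximate-ground-space-projector (AGSP) route of Arad, Kitaev, Landau and Vazirani, lifted from pure states to the normalized ground-space projector. Let $P$ be the orthogonal projector onto the ground space of $H$ and $r=\operatorname{tr}P$, so that $\Omega=P/r$.

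\textbf{Step 1: reduction to a per-cut statement.} An MPO with bond dimension $R$ at every cut is obtained by sequential truncation (an operator-Schmidt/SVD sweep from left to right). The total trace-norm error is at most the sum of the per-cut truncation errors, up to the usual orthogonality bookkeeping. It therefore suffices to show that, for each cut $(1..j\,|\,j+1..n)$, $\Omega$ is $\epsilon/n$-close in $\|\cdot\|_{S^1}$ to an operator of operator-Schmidt rank $\mathrm{poly}(n/\epsilon)$ across that cut. After the sweep we renormalize the trace, which costs at most a factor $2$ in the error.

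\textbf{Step 2: an AGSP with bounded operator-Schmidt rank.} Fix a cut and group $O(1)$ sites around it. Following AKLV, we first replace $H$ by $H'$, in which the terms far from the cut are spectrally truncated at an energy scale $t=O(\log(n/\epsilon))$. This keeps the gap $\Omega(\gamma)$ and the ground space approximately unchanged, while the relevant norm near the cut drops from $O(n)$ to $O(t)$. We then take $K=T_\ell(H')$, a rescaled Chebyshev polynomial of degree $\ell\sim\sqrt{t/\gamma}\,\log(1/\Delta)$ with $K|_{\text{ground}}=I$ and $\|K(I-P)\|_{S^\infty}\le\Delta$. Expanding $H'^{\,k}$ and counting how many terms straddle the cut (the AKLV combinatorial bound) gives operator-Schmidt rank $\exp(O(\ell\log\ell))$ across the cut. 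With $\Delta=\mathrm{poly}(\epsilon/n)$ this rank is $\mathrm{poly}(n/\epsilon)$, because $t$ is only logarithmic.

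\textbf{Step 3: bootstrapping to $\Omega$, the main obstacle.} This is where we expect the real difficulty. Operator-norm closeness $\|K-P\|\le\Delta$ does not give trace-norm closeness of $K\,X\,K^\dagger$ to $\Omega$ when $r$ or the full Hilbert-space dimension is exponential: a naive estimate loses a factor of order $2^n$.

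Our first attempt is to sandwich a well-chosen low-rank seed. Choose $X$ of small operator-Schmidt rank across the cut whose restriction to the ground space is close to $I_{\text{ground}}/r$ in normalized trace norm; a product of maximally mixed states on the two halves, projected, is the natural candidate. Then $\|PXP-\Omega\|_{S^1}$ is controlled through the ground-space overlap. The excited-state leakage of $KXK^\dagger$ is bounded by $\Delta^2\|X\|_{S^1}$, since each $K$ suppresses the excited part by a factor $\Delta$. The rank multiplies by $\mathrm{rank}(K)^2\cdot\mathrm{rank}(X)$. If $X$ has poor ground-space overlap, we would iterate $K$ in the AKLV bootstrapping fashion, truncating to the best Schmidt rank between applications so that the rank stays polynomial.

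We expect the delicate part to be the normalization: controlling $\operatorname{tr}(P X P)$ from below so that dividing by it does not blow up the error. Establishing that lower bound is the crux, and it is exactly where the $O(1)$ gap and the 1D locality must be used jointly.

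\textbf{Step 4: assembly.} Combining Steps 1 and 3 over all $n-1$ cuts, with per-cut error $\epsilon/n$, yields an MPO $\Psi$ of bond dimension $\mathrm{poly}(n/\epsilon)$ and unit trace with $\|\Omega-\Psi\|_{S^1}\le\epsilon$.
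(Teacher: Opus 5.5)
The paper gives no proof of this statement. It is imported from \citep{arad2026area}, so your sketch must stand on its own, and it has a genuine gap. Step~3, which you yourself call the crux, is the entire content of the cited theorem and is left open, and the mechanism you propose for it cannot close it. Let $K=f(H')$ with $f=1$ at the ground energy, and let $P'$ be the ground projector of $H'$. Then $KP'=P'K=P'$, so $P'(KXK^\dagger)P'=P'XP'$ for every seed $X$ and any number of applications of $K$. An AGSP removes excited weight but never redistributes weight inside the ground space. AKLV bootstrapping works for a unique ground state precisely because this block is one-dimensional, so any nonzero overlap suffices. Here you would instead need a low-rank seed whose compression is already close to $P'/r$ with non-negligible weight, which is essentially the statement being proved.

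For an arbitrary degenerate ground space, the only seed that guarantees $P'XP'\propto P'$ is $X=I/d^n$, which is your product of maximally mixed halves. Projecting it with $P$ merely returns $\Omega$, which is circular. For this seed, $KXK^\dagger=(P'+K_eK_e^\dagger)/d^n$ with $K_e=K(I-P')$. After normalization the trace-norm error is $2\operatorname{tr}(K_eK_e^\dagger)/(r+\operatorname{tr}(K_eK_e^\dagger))$. This is bounded only by $2\Delta^2(d^n-r)/r$ and is generically of that order for a Chebyshev AGSP. That forces $\log(1/\Delta)=\Theta(n)$ (e.g.\ for $r=\mathrm{poly}(n)$), and at that point the AKLV rank bound is superpolynomial.

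Truncating between applications does not rescue this. Operator-Schmidt (SVD) truncation is optimal in Hilbert--Schmidt norm, not trace norm, and it distorts the ground block, which later applications of $K$ can never repair. Also, for seeds not commuting with $P'$, the off-diagonal blocks of $KXK^\dagger$ are first order in $\Delta$, not $\Delta^2$ as you state. What is missing is a genuinely degeneracy-independent ingredient, which is what \citep{arad2026area} supplies.

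Two other steps are also wrong as written.
\begin{itemize}
\item In Step~1, ``total error $\le$ sum of per-cut errors'' is the Verstraete--Cirac estimate for vectors in $2$-norm. It relies on orthogonal projections and has no trace-norm analogue for operators; passing from Hilbert--Schmidt to trace norm costs $\sqrt{\operatorname{rank}}$, which is exponential here. A workable version sweeps the purification $\mathrm{vec}(P)/\sqrt{r}$ (local dimension $d^2$). This requires per-cut Hilbert--Schmidt tails of $P/\sqrt{r}$ of order $\epsilon^2/n$; tracing out the ancillas then gives a PSD MPO of bond dimension $R^2$. Note that this is a different per-cut target from the trace-norm one your Step~3 aims at.
\item In Step~2, $\exp(O(\ell\log\ell))$ with $\ell\sim\sqrt{t/\gamma}\,\log(1/\Delta)=\Theta(\log^{3/2}(n/\epsilon))$ is not polynomial in $n/\epsilon$. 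You need AKLV's coarse-grained count, roughly $d^{O(\ell/s+s)}$ up to $\mathrm{poly}(\ell)$ factors with an optimized block size $s$. That is what brings the rank down to $\exp(\tilde O(\log^{3/4}(n/\epsilon)))$, which is subpolynomial.
\end{itemize}
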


With these results, we give the proof for Theorem~\ref{thm:model_complexity} as follows.
\begin{proof}
    Let $q(x):=\Phi(x)^\top\rho\Phi(x)$ with $\rho\succeq{}0$ and $\operatorname{tr}(\rho)=1$.
    According to Definition~\ref{def:ensemble}, $q(x)$ can be rewritten as $q(x)=\mathbb{E}_{\theta}\left\vert\left<\theta,\phi(x)\right>\right\vert^2$ from $\mathbb{E}_{\theta}\left\vert\left<\theta,\phi(x)\right>\right\vert^2=\sum_i\alpha_i\left\vert\left<\theta_i,\phi(x)\right>\right\vert^2$ with $\rho=\sum_i\alpha_i\theta_i\theta_i^\top$.
    Thus, we have the following inequality using Lemma~\ref{thm:upper_bound_of_Fisher_div_LSI}:
    \begin{equation}
        \begin{split}
            D_{\operatorname{KL}}\left(q\Vert{}p\right)&\leq\mathbb{E}_\theta{}D_{\operatorname{KL}}(q_\theta\Vert{}p)\\
        &\leq{}\frac{1}{2c}\mathbb{E}_{\theta}D_{F}(q_\theta\Vert{}p)\\
        &=\frac{1}{2c}tr(\rho{}H)
        \end{split},
    \end{equation}
where $q_\theta=\left\vert\left<\theta,\phi(x)\right>\right\vert^2$.
In the inequalities above, the first line holds by Jensen inequality;
the second line follows from Lemma~\ref{thm:upper_bound_of_Fisher_div_LSI} since $p$ is $\operatorname{LSI}(c)$ by assumption;
the third line holds because of the construction of $H$ defined in~\eqref{eq:def_H}.
We thus see that the KL divergence from $q$ to $p$ is upper bounded with the loss function given in~\eqref{eq:EigenVI_extended} for any density operator $\rho$.

Next, it is known from Theorem~\ref{thm: MPO_approximiation_MMGS} that there exists an MPO state $\rho_{mpo}$ with $\mbox{poly}(\frac{D\lambda_{max}}{c\epsilon})$-bond dimension such that $\Vert\rho_{mpo}-\rho^*\Vert_{S^1}\leq{}\frac{2c\epsilon}{\lambda_{max}}$ if $H$ forms a one-dimensional local Hamiltonian over $D$ sites with local dimension $K=O(1)$ and $O(1)$ spectral gap, where $\rho^*$ represents the optimal solution of~\eqref{eq:EigenVI_extended}.
We thus have 
\begin{equation}
    \begin{split}
        tr(\rho_{mpo}H)-tr(\rho^*H)&=tr\left((\rho_{mpo}-\rho^*)H\right)\\
        &\leq{}\Vert\rho_{mpo}-\rho^*\Vert_{S^1}\Vert{}H\Vert_{S^\infty}\\
        &\leq{}\frac{2c\epsilon}{\lambda_{max}}\lambda_{max}=2c\epsilon
    \end{split},
\end{equation}
where the second line follows from the duality with operator norm.
By the fact $\operatorname{tr}(\rho^*{}H)=\lambda_{min}$, we further have $\operatorname{tr}(\rho_{mpo}{}H)\leq{}2c\epsilon+\lambda_{min}$.
Taking the inequailies together, we finally have
\begin{equation}
    D_{\operatorname{KL}}\left(q_{mpo}\Vert{}p\right)\leq{}\frac{1}{2c}\operatorname{tr}\left(\rho_{mpo}H\right)\leq\frac{1}{2c}\left(\lambda_{\min}+2c\epsilon\right)=\lambda_{\min}/c+\epsilon
\end{equation}
In the last inequality, we further loosen the bound by absorbing the constant factors into the asymptotic notation for simplicity
The proof is completed.
\end{proof}

\subsection{Proof of Lemma~\ref{thm:MRF}}\label{sec:proof_theorem_MRF}
\begin{proof}
    By the assumption that $p$ is a path-graph MRF, we have
    \begin{equation}
        \begin{split}
            &p\propto\prod_{i=1}^{D-1}\psi(x_i,x_{i+1})
            \quad{}\Rightarrow{}\quad
            \log{}p\propto\sum_{i=1}^{D-1}\log\psi(x_i,x_{i+1})\label{eq:appendix_path_MRF}
        \end{split}.
    \end{equation}
    Let $s(x):=\nabla_x\log{}p(x)$ be the score function. 
    Using \eqref{eq:appendix_path_MRF}, the $d$-th entry of $s(x)$, denoted $s_d(x)$ for $d=2,3,\ldots,D-1$ satisfies
    \begin{equation}
    \begin{split}
        s_d(x)\propto\frac{\partial}{\partial{}x_d}\left(\log\psi(x_{d-1},x_d)+\log\psi(x_d,x_{d+1})\right)
    \end{split},
\end{equation}
from which we know $s_d(x)$ depends only on $(x_{d-1},x_{d},x_{d+1})$, i.e., those nearest-neighbor variables.
Next, we use this property to derive $H_i$ for $i=1,2,3$ given in~\eqref{eq:def_H}, respectively.

We have known from ~\eqref{eq:def_H} that $H_1$ satisfies
\begin{equation}
    \begin{split}
        H_1&=\int{}H_1(x)dx=\int\sum_{j=1}^D\bigotimes_{d\in{}[D]}\left(\phi(x_d)\phi(x_d)^\top\right)\vert{}s_j(x)\vert^2dx
    \end{split}.
\end{equation}
Because we have gotten that $s_j(x)$ depends locally on $(x_{j-1},x_{j},x_{j+1})$, so is $\vert{}s_j(x)\vert^2$ for all $j\in{}[D]$. 
Suppose $H_1=\sum_{j=1}^DH_1^{(j)}$, for all $j=1,2,\ldots,D$, we thus have\footnote{Note that in the equation we do not consider the boundary cases like $j=1$ and $j=D$, for which the derivation is trivially similar.} 
\begin{equation}
    \begin{split}
        H_1^{(j)}&:=\int\bigotimes_{d\in{}[D]}\left(\phi(x_d)\phi(x_d)^\top\right)\vert{}s_j(x)\vert^2dx\\
        &=\left(\bigotimes_{d=1}^{j-2}\int\phi(x_d)\phi(x_d)^\top{}dx_d\right)\otimes{}\left(\int\bigotimes_{d=j-1}^{j+1}\phi(x_d)\phi(x_d)^\top{}\vert{}s_j(x)\vert^2dx_{j-1,j,j+1}\right)\otimes{}\left(\bigotimes_{d=j+2}^{D}\int\phi(x_d)\phi(x_d)^\top{}dx_d\right)
    \end{split},
\end{equation}
where $dx_{j-1,j,j+1}$ is the short-hand expression for $dx_{j-1}dx_{j}dx_{j+1}$, the multiplication of differentials in local entries of $x$.
By $\int\phi(x)\phi(x)^\top{}dx=I$, $H_1^{(j)}$ can be simplified as follows:
\begin{equation}
    \begin{split}
        H_1^{(j)}=I\otimes{}h_j\otimes{}I,\quad{}h_j:=\int\bigotimes_{d=j-1}^{j+1}\phi(x_d)\phi(x_d)^\top{}\vert{}s_j(x)\vert^2dx_{j-1,j,j+1}\in\mathbb{R}^{K^3\times{}K^3}
    \end{split}.
\end{equation}
Taken all $j$ together, we have $H_1=\sum_{j=1}^D{}\underbrace{I\otimes{}I\otimes\cdots\otimes{}I}_{j-1\times}\otimes{}h_j\otimes{}\underbrace{I\otimes\cdots\otimes{}I\otimes{}I}_{j+2\times}$, i.e., $H_1$ is one-dimensional $3$-local.
Next, we derive $H_2$, which can be written from~\eqref{eq:def_H} as follows:
\begin{equation}
    \begin{split}
        H_2&=\int{}H_2(x)dx\\
        &=\int{}\sum_{j=1}^D{}s_j(x)\bigotimes_{d=1}^D\phi(x_d)\left(\bigotimes_{i=1}^{j-1}\phi(x_i)\otimes\dot{\phi}(x_j)^\top\otimes\bigotimes_{k=j+1}^D\phi(x_k)\right)^\top\\
        &\quad\quad+\sum_{j=1}^D{}s_j(x)\left(\bigotimes_{i=1}^{j-1}\phi(x_i)\otimes\dot{\phi}(x_j)^\top\otimes\bigotimes_{k=j+1}^D\phi(x_k)\right)\left(\bigotimes_{i=1}^D\phi(x_i)\right)^\top{}dx\\
    \end{split}.
\end{equation}
Let 
\begin{equation}
    \begin{split}
        H_2^{L,(j)}:=\int{}s_j(x)\bigotimes_{d=1}^D\phi(x_d)\left(\bigotimes_{i=1}^{j-1}\phi(x_i)\otimes\dot{\phi}(x_j)^\top\otimes\bigotimes_{k=j+1}^D\phi(x_k)\right)^\top{}dx
    \end{split},
\end{equation}
then we have $H_2=\sum_{j=1}^D\left(H_2^{L,(j)}+H_2^{L,(j),\top}\right)$.
By again the fact that $s_j(x)$ depends only on $(x_{j-1},x_j,x_{j+1})$, we can derive as follows:
\begin{equation}
    \begin{split}
        &H_2^{L,(j)}=\int{}s_j(x)\left(\bigotimes_{i=1}^{j-1}\phi(x_i)\otimes\phi(x_j)\otimes\bigotimes_{k=j+1}^D\phi(x_k)\right)\left(\bigotimes_{i=1}^{j-1}\phi(x_i)\otimes\dot{\phi}(x_j)\otimes\bigotimes_{k=j+1}^D\phi(x_k)\right)^\top{}dx\\
        &=\left(\bigotimes_{i=1}^{j-2}\int\phi(x_i)\phi(x_i)^\top{}dx_i\right)
        \otimes\underbrace{\int\left[\bigotimes_{d=j-1}^{j+1}\phi(x_d)\left(\phi(x_{j-1})\otimes{}\dot{\phi}(x_j)^\top\otimes\phi(x_{j+1})\right)^\top\right]dx_{j-1,j,j+1}}_{h_j^{(L)}:=}\\
        &\otimes\left(\bigotimes_{i=j+2}^{D}\int\phi(x_i)\phi(x_i)^\top{}dx_i\right)\\
        &=I\otimes{}h_j^{(L)}\otimes{}I.
    \end{split}
\end{equation}
So, we have $H_2=\sum_{j=1}^D{}I\otimes(h_j^{(L)}+h_j^{(L),\top})\otimes{}I$, which is one-dimensional $3$-local Hamiltonian.
Last, it is not difficult to know $H_3$ is 1-local following the same way, but independent of $s(x)$.
Taken together, we have proved that $H_i$ for $i=1,2,3$ are one-dimensional $k$-local with $k\leq{}3$.
Therefore the linear combination $H=H_1-2H_2+4H_3$ is one-dimensional $k$-local with $k\leq{}3$.
The proof is complete.
\end{proof}

\subsection{Proof of Lemma~\ref{thm:concentration}}\label{sec:proof_concentraction}
We use Tropp's Matrix Bernstein inequality, re-introduced formally as follows:
\begin{proposition}[Tropp's Matrix Bernstein inequality~\cite{tropp2015introduction}]\label{thm:tropp_matrix_berstein}
    Consider a finite sequence $\{S_k\}$ of independent, random matrices with common dimension $d_1\times{}d_2$. Assume that
    \begin{equation}
        \begin{split}
            \mathbb{E}S_k=0\quad\mbox{and}\quad{}\Vert{}S_k\Vert_{S^\infty}\leq{}L\mbox{ for each index }k
        \end{split}.
    \end{equation}
    Introduce the random matrix
    \begin{equation}
        \begin{split}
            Z=\sum_{k}S_k
        \end{split}.
    \end{equation}
    Let $\upsilon(Z)$ be the matrix variance statistic of the sum:
    \begin{equation}
        \begin{split}
            \upsilon(Z)&=\max\{\Vert{}\mathbb{E}(ZZ^\top)\Vert_{S^\infty},\Vert{}\mathbb{E}(Z^\top{}Z)\Vert_{S^\infty}\}\\
            &=\max\{\sum_k\Vert{}\mathbb{E}(S_kS_k^\top)\Vert_{S^\infty},\sum_k\Vert{}\mathbb{E}(S_k^\top{}S_k)\Vert_{S^\infty}\}
        \end{split}
    \end{equation}
    Then
    \begin{equation}
        \begin{split}
            \mathbb{E}\Vert{}Z\Vert_{S^\infty}
            \leq\sqrt{2\upsilon(Z)\log(d_1+d_2)}+\frac{1}{3}L\log(d_1+d_2)
        \end{split}.
    \end{equation}
    Furthermore, for all $t\geq{}0$,
    \begin{equation}
        \begin{split}
            Pr\left\{\Vert{}Z\Vert_{S^\infty}\geq{}t\right\}\leq(d_1+d_2)exp\left(\frac{-t^2/2}{\upsilon(Z)+Lt/3}\right)
        \end{split}.
    \end{equation}
\end{proposition}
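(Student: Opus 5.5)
This is the standard matrix Bernstein inequality, and I would follow Tropp's matrix Laplace-transform route. The work splits into four steps: reduce to the Hermitian case, prove a one-sided tail for the largest eigenvalue, bound each summand's matrix moment generating function, and then optimise over the free parameter.

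\textbf{Step 1: reduction to Hermitian matrices.} Given the rectangular summands $S_k\in\mathbb{R}^{d_1\times d_2}$, I pass to their Hermitian dilations
\begin{equation*}
\mathscr{D}(S_k)=\begin{pmatrix}0 & S_k\\ S_k^\top & 0\end{pmatrix}\in\mathbb{R}^{(d_1+d_2)\times(d_1+d_2)}.
\end{equation*}
The dilation is linear, so $\mathscr{D}(Z)=\sum_k\mathscr{D}(S_k)$. It also satisfies
\begin{itemize}
\item $\lambda_{\max}(\mathscr{D}(Z))=\|Z\|_{S^\infty}$ and $\|\mathscr{D}(S_k)\|_{S^\infty}=\|S_k\|_{S^\infty}\le L$;
\item $\mathbb{E}\,\mathscr{D}(S_k)=0$;
\item $\mathscr{D}(S_k)^2=\mathrm{diag}(S_kS_k^\top,\,S_k^\top S_k)$.
\end{itemize}
The last identity gives $\|\sum_k\mathbb{E}\,\mathscr{D}(S_k)^2\|_{S^\infty}=\upsilon(Z)$. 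It therefore suffices to prove both claims for a sum $Y=\sum_k X_k$ of independent, centred, symmetric $d\times d$ matrices with $\lambda_{\max}(X_k)\le L$, controlling $\lambda_{\max}(Y)$ with $d=d_1+d_2$ and variance $v=\|\sum_k\mathbb{E}X_k^2\|_{S^\infty}$.

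\textbf{Step 2: matrix Laplace transform and subadditivity of cumulants.} For every $\theta>0$ I use
\begin{equation*}
\Pr\{\lambda_{\max}(Y)\ge t\}\le \inf_{\theta>0} e^{-\theta t}\,\mathbb{E}\operatorname{tr} e^{\theta Y},\qquad
\mathbb{E}\lambda_{\max}(Y)\le\inf_{\theta>0}\tfrac{1}{\theta}\log\mathbb{E}\operatorname{tr}e^{\theta Y}.
\end{equation*}
The first follows from Markov's inequality together with $e^{\theta\lambda_{\max}(Y)}\le\operatorname{tr}e^{\theta Y}$. The second follows from the same trace bound and Jensen's inequality.

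The key step is
\begin{equation*}
\mathbb{E}\operatorname{tr}\exp\Bigl(\sum_k\theta X_k\Bigr)\le\operatorname{tr}\exp\Bigl(\sum_k\log\mathbb{E}\,e^{\theta X_k}\Bigr).
\end{equation*}
I would obtain it from Lieb's theorem, namely that $A\mapsto\operatorname{tr}\exp(H+\log A)$ is concave on positive definite matrices. Apply Jensen's inequality to one summand at a time, conditioning on the others, which is legitimate by independence.

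\textbf{Step 3: the Bernstein moment generating function bound.} Set $g(\theta)=\frac{\theta^2/2}{1-\theta L/3}$ for $0<\theta<3/L$. The claim is
\begin{equation*}
\mathbb{E}\,e^{\theta X_k}\preceq \exp\bigl(g(\theta)\,\mathbb{E}X_k^2\bigr).
\end{equation*}
\begin{enumerate}
\item Scalar inequality: $e^{\theta x}\le 1+\theta x+g(\theta)x^2$ for all $x\le L$. This comes from the series bound $\sum_{q\ge2}\theta^q x^{q-2}/q!\le \frac{\theta^2/2}{1-\theta L/3}$, using $q!\ge 2\cdot 3^{q-2}$.
\item Transfer rule: because this is an inequality between functions of a single symmetric matrix evaluated on its spectrum, it lifts to $e^{\theta X_k}\preceq I+\theta X_k+g(\theta)X_k^2$.
\item Take expectations and use $\mathbb{E}X_k=0$ and $I+A\preceq e^{A}$.
\item Since $\log$ is operator monotone, this gives $\log\mathbb{E}e^{\theta X_k}\preceq g(\theta)\mathbb{E}X_k^2$.
\item Since $\operatorname{tr}\exp$ is monotone, summing and substituting into Step 2 yields $\mathbb{E}\operatorname{tr}e^{\theta Y}\le d\,e^{g(\theta)v}$.
\end{enumerate}

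\textbf{Step 4: choosing $\theta$.} For the tail bound, take $\theta=t/(v+Lt/3)$. This gives $-\theta t+g(\theta)v\le -\frac{t^2/2}{v+Lt/3}$, which yields the stated inequality $\Pr\{\|Z\|_{S^\infty}\ge t\}\le(d_1+d_2)\exp\bigl(\frac{-t^2/2}{v+Lt/3}\bigr)$. For the expectation bound, minimise $\frac{\log d}{\theta}+\frac{\theta v/2}{1-\theta L/3}$ over $\theta$. An explicit choice of $\theta$ balancing the two terms gives $\sqrt{2v\log d}+\frac{L}{3}\log d$.

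The main obstacle is Step 2. Because the summands do not commute, $e^{A+B}\ne e^Ae^B$, and subadditivity of the matrix cumulants genuinely requires Lieb's concavity theorem. I would cite that theorem rather than reprove it. The remaining steps are routine scalar calculus carried over via the transfer rule.
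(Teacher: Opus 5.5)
The paper gives no proof of this proposition---it is imported directly from Tropp (2015)---and your proposal is exactly Tropp's argument: Hermitian dilation, Lieb-based subadditivity of the matrix cumulant generating functions, the Bernstein bound $\log\mathbb{E}\,e^{\theta X_k}\preceq g(\theta)\,\mathbb{E}X_k^2$, and optimisation over $\theta$ (the substitution $u=\theta/(1-\theta L/3)$ turns the expectation objective into $\frac{\log d}{u}+\frac{uv}{2}+\frac{L}{3}\log d$, which gives the stated bound exactly), so it is correct and follows the same route. Two small remarks: your termwise series estimate only justifies $e^{\theta x}\le 1+\theta x+g(\theta)x^2$ for $|x|\le L$ (under the one-sided hypothesis $\lambda_{\max}(X_k)\le L$ you would need the monotonicity of $(e^{\theta x}-\theta x-1)/x^2$, as Tropp uses), which is harmless here because $\Vert\mathcal{D}(S_k)\Vert_{S^\infty}\le L$; and you control the norm-of-sum form of $\upsilon(Z)$, whereas the statement's second, sum-of-norms expression is in general only an upper bound on it rather than equal to it, so both inequalities still hold a fortiori but that claimed equality does not.
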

we then prove Theorem~\ref{thm:concentration} as follows.
\begin{proof}
    First of all, we find an sufficient condition for the probability inequality as follows:
    \begin{equation}
        \begin{split}
            &Pr\left\{\Vert\bar{H}-H\Vert_{S^{\infty}}\leq\epsilon\right\}\geq{}1-\delta\\
            \Leftrightarrow&Pr\left\{\left\Vert\sum_{e\in{}E}\bar{h}_e-\sum_{e\in{}E}h_e\right\Vert_{S^{\infty}}\leq\epsilon\right\}\geq{}1-\delta\\
            \Leftrightarrow&Pr\left\{\left\Vert\sum_{e\in{}E}\left(\bar{h}_e-h_e\right)\right\Vert_{S^{\infty}}\leq\epsilon\right\}\geq{}1-\delta\\
            \Leftarrow&Pr\left\{\forall{}e\in{}E,\,\left\Vert{}\bar{h}_e-h_e\right\Vert_{S^\infty}\leq{}\frac{\epsilon}{\vert{}E\vert}\right\}\geq{}1-\delta\\
            \Leftrightarrow&\prod_{e\in{}E}Pr\left\{\left\Vert{}\bar{h}_e-h_e\right\Vert_{S^\infty}\leq{}\frac{\epsilon}{\vert{}E\vert}\right\}\geq{}1-\delta\\
            \Leftrightarrow&Pr\left\{\left\Vert{}\bar{h}_e-h_e\right\Vert_{S^\infty}\leq{}\frac{\epsilon}{\vert{}E\vert}\right\}\geq{}\left(1-\delta\right)^{\frac{1}{\vert{}E\vert}}
            \end{split}
    \end{equation}
    for all $e\in{}E$.
    Let $\eta:=1-\left(1-\delta\right)^{\frac{1}{\vert{}E\vert}}$. Then we continue the derivation as follows:
    \begin{equation}
        \begin{split}
            &Pr\left\{\left\Vert{}\bar{h}_e-h_e\right\Vert_{S^\infty}\leq{}\frac{\epsilon}{\vert{}E\vert}\right\}\geq{}\left(1-\delta\right)^{\frac{1}{\vert{}E\vert}}\\
            \Leftrightarrow&Pr\left\{\left\Vert{}\bar{h}_e-h_e\right\Vert_{S^\infty}\leq{}\frac{\epsilon}{\vert{}E\vert}\right\}\geq{}1-\eta\\
            \Leftrightarrow&Pr\left\{\left\Vert{}\frac{1}{m}\sum_{i=1}^m{}h_e(x_i^e)-\frac{1}{L^C}h_e\right\Vert_{S^\infty}\leq{}\frac{\epsilon}{L^C\vert{}E\vert}\right\}\geq{}1-\eta.
        \end{split}
    \end{equation}
    Let $\pi^C:=\frac{1}{L^C}$ and $\epsilon_0:=\frac{\pi^C\epsilon}{\vert{}E\vert}$ for simplicity, then we have
    \begin{equation}
        \begin{split}
            &Pr\left\{\left\Vert{}\frac{1}{m}\sum_{i=1}^m{}h_e(x_i^e)-\frac{1}{L^C}h_e\right\Vert_{S^\infty}\leq{}\frac{\epsilon}{L^C\vert{}E\vert}\right\}\geq{}1-\eta\\
            \Leftrightarrow&Pr\left\{\left\Vert{}\frac{1}{m}\sum_{i=1}^m{}h_e(x_i^e)-\pi^Ch_e\right\Vert_{S^\infty}\leq{}\epsilon_0\right\}\geq{}1-\eta
        \end{split}.
    \end{equation}
    Next, let us move to the concentration part.
    Let $Y_i^e:=h_e(x_i^e)-\pi^C{}h_e$.
    Because $h_e(x)$ is a Hamiltonian for any $x$, and $\{x_i^e\}_{i=1}^m$ are i.i.d. sampled with the uniform distribution in $[-\frac{L}{2},\frac{L}{2}]^C$, then $Y_i^e$ is thus symmetric, and satisfies $\mathbb{E}Y_i^e=\mathbb{E}h_e(x_i^e)-\pi^C{}h_e=0$.
    By the assumption $\Vert{}h_e(x_i^e)\Vert_{S^\infty}\leq{}J$, we have
    \begin{equation}
        \begin{split}
            \Vert{}\pi^C{}h_e\Vert_{S^\infty}&=\Vert{}\int\pi^C{}h_e(x)dx\Vert_{S^\infty}=\Vert{}\mathbb{E}_{\pi^C}{}h_e(x)\Vert_{S^\infty}\\
            &\leq\mathbb{E}_{\pi^C}\Vert{}{}h_e(x)\Vert_{S^\infty}\leq{}J,
        \end{split}
    \end{equation}
    where the second line follows from Jensen’s inequality.
    Therefore, we have
    \begin{equation}
        \begin{split}
            \Vert{}Y_i^e\Vert_{S^\infty}=\Vert{}h_e(x_i^e)-\pi^C{}h_e\Vert_{S^\infty}
            \leq{}\Vert{}h_e(x_i^e)\Vert_{S^\infty}+\Vert{}\pi^C{}h_e\Vert_{S^\infty}\leq{}2J=:R,
        \end{split}
    \end{equation}
    and
    \begin{equation}
        \begin{split}
            \upsilon:=\Vert{}\mathbb{E}Y_i^{e,2}\Vert_{s^\infty}\leq\mathbb{E}\Vert{}Y_i^{e,2}\Vert_{s^\infty}\leq\mathbb{E}\Vert{}Y_i^{e}\Vert^2_{s^\infty}\leq{}R^2.
        \end{split}
    \end{equation}
Given a $e\in{}E$, let $S:=\sum_{i=1}^m{}Y_i^e$. Then, the matrix variance statistic of the sum defined in 
\begin{equation}
        \begin{split}
            \upsilon(S)&=\max\{\sum_{i=1}^m\Vert{}\mathbb{E}(Y_i^eY_i^{e,\top})\Vert_{S^\infty},\sum_{i=1}^m\Vert{}\mathbb{E}(Y_i^{e,\top}{}Y_i^e)\Vert_{S^\infty}\}\\
            &=\sum_{i=1}^m\Vert{}\mathbb{E}(Y_i^{e,2})\Vert_{S^\infty}\leq{}mR^2
        \end{split},
    \end{equation}
    From Prop.~\ref{thm:tropp_matrix_berstein}, we have
    \begin{equation}
        \begin{split}
            Pr\left\{\Vert\frac{1}{m}S\Vert_{S^\infty}\leq{}t\right\}&\geq{}1-2d\exp\left(\frac{-m^2t^2/2}{\upsilon(S)+Rmt/3}\right),
        \end{split}
    \end{equation}
    where $d:=K^{c_1}$ and $c_1$ is a constant because of the locality assumption.

    Next, let $1-2d\exp\left(\frac{-m^2t^2/2}{\upsilon(S)+Rmt/3}\right)\geq{}1-\eta$, then
    \begin{equation}
        \begin{split}
            &\eta\geq{}2d\exp\left(\frac{-m^2t^2/2}{\upsilon(S)+Rmt/3}\right)\\
           &\Leftrightarrow\log\frac{\eta}{2d}\geq{}\frac{-m^2t^2/2}{\upsilon(S)+Rmt/3}\\
           &\Leftrightarrow{}L:=\log\frac{2d}{\eta}\leq{}\frac{m^2t^2/2}{\upsilon(S)+Rmt/3}\\
           &\Leftarrow{}L\leq\frac{m^2t^2/2}{mR^2+Rmt/3}=\frac{mt^2/2}{R^2+Rt/3}
        \end{split}.
    \end{equation}
    Here, the last line follows from $\upsilon(S)\leq{}mR^2$.
    Continuously, 
    \begin{equation}
        \begin{split}
            &L\leq\frac{mt^2/2}{R^2+Rt/3}\\
            &\Leftrightarrow{}LR^2+\frac{LR}{3}t\leq{}\frac{m}{2}t^2\\
            &\Leftrightarrow{}\frac{m}{2}t^2-\frac{LR}{3}t-LR^2\geq{}0\\
            &\Leftrightarrow{}t\geq{}\frac{LR+\sqrt{L^2R^2+18mLR^2}}{3m}\mbox{ or }
            t\leq{}\frac{LR-\sqrt{L^2R^2+18mLR^2}}{3m}\\
            &\Leftarrow{}t\geq{}\frac{LR+\sqrt{L^2R^2+18mLR^2}}{3m}\\
            &\Leftrightarrow{}t\geq{}\frac{2JL}{3m}+\frac{\sqrt{4L^2J^2+72mLJ^2}}{3m}=2J\left(\frac{L}{3m}+\sqrt{\frac{L^2}{9m^2}+\frac{2L}{m}}\right).
        \end{split}
    \end{equation}
    Here the last line holds by $R=2J$.
    
    Let $\epsilon_0\geq{}t$, thus we have
    \begin{equation}
        \begin{split}
            &\epsilon_0\geq{}2J\left(\frac{L}{3m}+\sqrt{\frac{L^2}{9m^2}+\frac{2L}{m}}\right)\\
            &\Leftrightarrow{}\frac{\epsilon_0}{2J}\geq{}\frac{L}{3m}+\sqrt{\frac{L^2}{9m^2}+\frac{2L}{m}}\\
            &\Leftarrow{}\frac{\epsilon_0}{4J}\geq{}\frac{L}{3m}\mbox{ and }\frac{\epsilon_0}{4J}\geq{}\sqrt{\frac{L^2}{9m^2}+\frac{2L}{m}}\\
            &\Leftarrow{}\frac{\epsilon_0}{4J}\geq{}\sqrt{\frac{L^2}{9m^2}+\frac{2L}{m}}\\
            &\Leftarrow{}m \;\ge\; \frac{16 L J^2}{\epsilon_0^2}
\left(\sqrt{1 + \frac{\epsilon_0^2}{144 J^2}} + 1\right)
        \end{split}.
    \end{equation}
    Recall that 
    \begin{itemize}
        \item $J$ is a constant;
        \item $\eta=1-(1-\delta)^{\frac{1}{\vert{}E\vert}}$;
        \item $\vert{}E\vert=\Theta(D)$;
        \item $L=\log\frac{2d}{\eta}=\log\frac{2K^{c_1}}{1-(1-\delta)^{\frac{1}{\vert{}E\vert}}}$;
        \item $d=K^{c_1}$, where $c_1$ is a constant;
        \item $\epsilon_0=\frac{\pi^C\epsilon}{\vert{}E\vert}$.
    \end{itemize}
    Then we continue to have 
    \begin{equation}
        \begin{split}
            &m \;\ge\; \frac{16 L J^2}{\epsilon_0^2}\left(\sqrt{1 + \frac{\epsilon_0^2}{144 J^2}} + 1\right)\\
             &m\geq{}c_2\left(\frac{s^2}{\epsilon^2}\right)
             \left(1+\sqrt{1+c_3\left(\frac{\epsilon^2}{s^2}\right)}\right)
             \log\frac{2K^{c_1}}{1-(1-\delta)^{\frac{1}{s}}}
        \end{split},
    \end{equation}
    where $s:=\vert{}E\vert$ for simplicity, and $c_2:=\frac{16J^2}{\pi^C},c_3:=\frac{\pi^C}{144J}$ are constants.
    Since $c_3>0$, we have $1+\sqrt{1+c_3\epsilon^2/s^2}\ge 2$,
and, for the regime $\epsilon=O(1)$ and $s=\Theta(D)$, this factor is $\Theta(1)$.
Therefore, it does not affect the asymptotic sample complexity.
Thus, the bound on $m$ can be simplified as
\begin{equation}
    \begin{split}
        m=\Omega\left(\frac{s^2}{\epsilon^2}\log\frac{2K^{c_1}}{1-(1-\delta)^{\frac{1}{s}}}\right).
    \end{split}
\end{equation}
Since $(1-\delta)^{1/s}\geq{}1-\delta$, then $1-(1-\delta)^{\frac{1}{s}}\leq{}1-(1-\delta)\leq\delta$.
Thus, $\log\frac{2K^{c_1}}{1-(1-\delta)^{1/s}}]\geq{}\log\frac{2K^{c_1}}{\delta}$, resulting in
\begin{equation}
    \begin{split}
        m=\Omega\left(\frac{D^2}{\epsilon^2}\log\frac{\operatorname{poly}(K)}{\delta}\right),
    \end{split}
\end{equation}
in which the constants $c_1,c_2,c_3,J$ are simplified.

Finally, note that the $m$ above corresponds to one local block.
Then the total number of queries equals
\begin{equation}
    \begin{split}
        m_{total}=\vert{}E\vert{}\cdot{}m=\Omega\left(\frac{D^3}{\epsilon^2}\log\frac{\operatorname{poly(K)}}{\delta}\right)
    \end{split}.
\end{equation}
The proof is complete.
\end{proof}

\subsection{Proof of Theorem~\ref{thm:query_compelxity}}\label{sec:proof_query_comoplexity}
Before the proof of Theorem~\ref{thm:query_compelxity}, we also need the following important lemma, showing that the maximally mixed states are stable with weak perturbation if $M$ is gapped.
First, recall the Davis--Kahan $\sin\theta$ Theorem known in linear algebra.

\begin{theorem}[Davis--Kahan $\sin\theta$ theorem~\cite{davis1970rotation}]
Let $\Sigma, \hat{\Sigma} \in \mathbb{R}^{p \times p}$ be symmetric, with eigenvalues
$\lambda_1 \ge \cdots \ge \lambda_p$ and $\hat{\lambda}_1 \ge \cdots \ge \hat{\lambda}_p$
respectively. Fix $1 \le r \le s \le p$, let $d := s-r+1$, and let
$V = (v_r, v_{r+1}, \ldots, v_s) \in \mathbb{R}^{p \times d}$ and
$\hat{V} = (\hat{v}_r, \hat{v}_{r+1}, \ldots, \hat{v}_s) \in \mathbb{R}^{p \times d}$
have orthonormal columns satisfying $\Sigma v_j = \lambda_j v_j$ and
$\hat{\Sigma} \hat{v}_j = \hat{\lambda}_j \hat{v}_j$ for $j = r, r+1, \ldots, s$.
If
\begin{equation}
\delta := \inf\{\,|\hat{\lambda} - \lambda| :
\lambda \in [\lambda_s, \lambda_r], \;
\hat{\lambda} \in (-\infty, \hat{\lambda}_{s-1}] \cup [\hat{\lambda}_{r+1}, \infty)\,\} > 0,\end{equation}

where $\hat{\lambda}_0 := -\infty$ and $\hat{\lambda}_{p+1} := \infty$, then
\begin{equation}
\Vert\sin \Theta(V, \hat{V})\Vert_{S^\infty} \;\le\; \frac{\Vert\hat{\Sigma} - \Sigma\Vert_{S^\infty}}{\delta}.
\end{equation}
In fact, both occurrences of the spectral norm above can be replaced with the Frobenius
norm $\Vert\cdot\Vert_{F}$, or any other orthogonally invariant norm like Schatten norms.
\end{theorem}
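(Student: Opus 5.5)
The plan is the classical Sylvester-equation argument. Let $\Lambda=\mathrm{diag}(\lambda_r,\ldots,\lambda_s)$, so that $\Sigma V=V\Lambda$. Complete $\hat V$ to an orthonormal eigenbasis of $\hat\Sigma$ by adding $\hat W\in\mathbb{R}^{p\times(p-d)}$, whose columns are the eigenvectors $\hat v_j$ with $j\notin\{r,\ldots,s\}$. Write $\hat\Lambda_\perp$ for the diagonal matrix of the corresponding eigenvalues, so that $\hat\Sigma\hat W=\hat W\hat\Lambda_\perp$. I read the separation hypothesis as saying that the eigenvalues of $\hat\Lambda_\perp$ lie at distance at least $\delta$ from the interval $[\lambda_s,\lambda_r]$. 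With the decreasing ordering, these are the values $\hat\lambda_j$ for $j<r$ or $j>s$; the indices $s-1$ and $r+1$ in the displayed condition should be read this way.

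\textbf{Step 1 (angles as a block of $\hat W^\top V$).} Since $\hat V\hat V^\top+\hat W\hat W^\top=I$, we have $V^\top V=I_d$. Hence the singular values of $\hat W^\top V$ are $\sqrt{1-\cos^2\theta_i}=\sin\theta_i$, where the $\cos\theta_i$ are the singular values of $\hat V^\top V$. This gives $\|\sin\Theta(V,\hat V)\|=\|\hat W^\top V\|$ for every orthogonally invariant norm.

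\textbf{Step 2 (Sylvester identity).} Multiply $\hat\Sigma-\Sigma$ on the left by $\hat W^\top$ and on the right by $V$:
\begin{equation*}
\hat W^\top(\hat\Sigma-\Sigma)V=\hat\Lambda_\perp(\hat W^\top V)-(\hat W^\top V)\Lambda .
\end{equation*}
Because $\hat W$ and $V$ have orthonormal columns, the left-hand side has norm at most $\|\hat\Sigma-\Sigma\|$, in any orthogonally invariant norm.

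\textbf{Step 3 (lower bound on the Sylvester operator).} I expect this to be the main step. We need $\|\hat\Lambda_\perp X-X\Lambda\|\ge\delta\|X\|$ for all $X$. Let $c=(\lambda_r+\lambda_s)/2$ and $w=(\lambda_r-\lambda_s)/2$, and shift to $A=\hat\Lambda_\perp-cI$ and $B=\Lambda-cI$. Then $\|B\|_{S^\infty}\le w$. Every eigenvalue of $A$ has modulus at least $w+\delta$, because each $\hat\lambda_j$ lies outside $(\lambda_s-\delta,\lambda_r+\delta)$. This holds whether $\hat\lambda_j$ sits above or below the interval, so the two half-lines are handled simultaneously.

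Thus $A$ is invertible with $\|A^{-1}\|_{S^\infty}\le (w+\delta)^{-1}$. Applying $\|YZ\|\le\|Y\|_{S^\infty}\|Z\|$ to $Y=A^{-1}$ and $Z=AX$ gives
\begin{equation*}
\|AX\|\ge (w+\delta)\|X\| .
\end{equation*}
Therefore
\begin{equation*}
\|AX-XB\|\ge\|AX\|-\|X\|\,\|B\|_{S^\infty}\ge(w+\delta)\|X\|-w\|X\|=\delta\|X\| .
\end{equation*}
Since $AX-XB=\hat\Lambda_\perp X-X\Lambda$ (the shifts by $cI$ cancel), this is the required bound.

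\textbf{Conclusion.} Take $X=\hat W^\top V$ and combine Steps 1–3:
\begin{equation*}
\delta\|\sin\Theta(V,\hat V)\|\le\|\hat\Sigma-\Sigma\| ,
\end{equation*}
which is the claim. The argument uses only unitary invariance and the submultiplicativity $\|YZ\|\le\|Y\|_{S^\infty}\|Z\|$. It therefore applies verbatim to the spectral norm, the Frobenius norm, and all Schatten norms, which covers the final remark of the statement.
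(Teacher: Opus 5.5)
The paper gives no proof of this statement. It imports the result from Davis and Kahan as a black box and uses it only inside the stability lemma for the maximally mixed ground state, so there is no argument of the paper's to match.

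Your proof is correct. It is the classical Sylvester-equation proof of the $\sin\Theta$ theorem:
\begin{itemize}
\item identify $\sin\Theta$ with the off-diagonal block $X=\hat W^\top V$;
\item note that $X$ solves $\hat\Lambda_\perp X-X\Lambda=\hat W^\top(\hat\Sigma-\Sigma)V$;
\item bound this Sylvester operator below by $\delta$, using the centring shift $c=(\lambda_r+\lambda_s)/2$.
\end{itemize}

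Your reading of the separation hypothesis is the right repair of the printed indices. The standard condition is $\hat\lambda\in(-\infty,\hat\lambda_{s+1}]\cup[\hat\lambda_{r-1},\infty)$ with $\hat\lambda_0:=+\infty$ and $\hat\lambda_{p+1}:=-\infty$. This forces every $\hat\lambda_j$ with $j\notin\{r,\dots,s\}$ to avoid $(\lambda_s-\delta,\lambda_r+\delta)$. Your version does not even need to know on which side each such eigenvalue sits, so it is slightly more general.

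Compared with the bare citation, your route buys three things the paper tacitly relies on:
\begin{itemize}
\item It justifies the closing remark for all orthogonally invariant norms at once, since you use only invariance and $\|YZ\|\le\|Y\|_{S^\infty}\|Z\|$.
\item It gives the slightly sharper residual bound $\|\sin\Theta\|\le\|(\hat\Sigma-\Sigma)V\|/\delta$.
\item It applies directly to the one-sided configuration in the stability lemma, with no index bookkeeping. There the ground-state eigenvalues of $H$ form the interval, and by Weyl all remaining eigenvalues of $\bar H$ lie above it by at least $\gamma-\|E\|_{S^\infty}$.
\end{itemize}

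One wording fix in Step~1. $V^\top V=I_d$ is a hypothesis, not a consequence of $\hat V\hat V^\top+\hat W\hat W^\top=I$. What you actually use is $I_d=V^\top V=(\hat V^\top V)^\top(\hat V^\top V)+(\hat W^\top V)^\top(\hat W^\top V)$, which you get by inserting $\hat V\hat V^\top+\hat W\hat W^\top=I$ between $V^\top$ and $V$.
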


Using the Davis--Kahan $\sin\theta$ Theorem, we have

\begin{lemma}[Stability of the maximally mixed ground state for a gapped Hamiltonian]\label{thm:stability_maximally_mixed_ground_state}
    Let $H \in \mathbb{R}^{p \times p}$ be a symmetric (Hermitian) matrix with
    ground-space projector $P$ of rank $r := \operatorname{tr}(P)$, and suppose
    there is a spectral gap $\gamma > 0$ above the ground space, i.e.
    \begin{equation}
      \lambda_{r+1}(H) - \lambda_r(H) \;\ge\; \gamma,
    \end{equation}
    where the eigenvalues of $H$ are ordered increasingly.
    Let $\bar{H} = H + E$ with $E$ symmetric and
    $\|E\|_{S^\infty} \le \gamma/2$. Denote by $Q$ the orthogonal projector
    onto the $r$ lowest eigenvalues of $\bar{H}$, and define the 
    maximally mixed states
    \begin{equation}
      \Omega := \frac{P}{r}, 
      \qquad
      \bar{\Omega} := \frac{Q}{r}.
    \end{equation}
    Then
    \begin{equation}
      \|\Omega - \bar{\Omega}\|_{S^1}
      \;\le\;
      \frac{2\|E\|_{S^\infty}}{\gamma - \|E\|_{S^\infty}}.
    \end{equation}
\end{lemma}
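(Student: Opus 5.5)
The plan is to reduce the trace-norm distance between the two normalized projectors to the $\sin\Theta$ angles between the subspaces $\operatorname{ran}P$ and $\operatorname{ran}Q$, and then control those angles with the Davis--Kahan $\sin\theta$ theorem.

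\textbf{Step 1: locate the perturbed spectrum.} By Weyl's inequality, $|\lambda_j(\bar H)-\lambda_j(H)|\le\|E\|_{S^\infty}$ for every $j$. Hence the $r$ lowest eigenvalues of $\bar H$ lie in $(-\infty,\lambda_r(H)+\|E\|_{S^\infty}]$. All the remaining eigenvalues of $\bar H$ are at least $\lambda_{r+1}(H)-\|E\|_{S^\infty}\ge\lambda_r(H)+\gamma-\|E\|_{S^\infty}$.

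\textbf{Step 2: apply Davis--Kahan.} We apply the theorem to the lowest block, reversing the ordering convention, which is harmless after replacing $H$ by $-H$. The ground eigenvalues of $H$ all lie in $(-\infty,\lambda_r(H)]$. The unwanted eigenvalues of $\bar H$ lie in $[\lambda_r(H)+\gamma-\|E\|_{S^\infty},\infty)$. The separation is therefore $\delta\ge\gamma-\|E\|_{S^\infty}\ge\gamma/2>0$, which uses the assumption $\|E\|_{S^\infty}\le\gamma/2$. If $V$ and $\hat V$ are orthonormal bases of $\operatorname{ran}P$ and $\operatorname{ran}Q$, Davis--Kahan gives
\[
\|\sin\Theta(V,\hat V)\|_{S^\infty}\le \frac{\|E\|_{S^\infty}}{\gamma-\|E\|_{S^\infty}}.
\]

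\textbf{Step 3: convert angles to trace norm.} Since $P$ and $Q$ have equal rank $r$, the standard CS-decomposition fact says that the nonzero eigenvalues of $P-Q$ come in pairs $\pm\sin\theta_i$, where $\theta_1,\dots,\theta_r$ are the principal angles. It follows that
\[
\|P-Q\|_{S^1}=2\sum_{i=1}^r\sin\theta_i\le 2r\,\|\sin\Theta\|_{S^\infty}.
\]
Dividing by $r$ gives
\[
\|\Omega-\bar\Omega\|_{S^1}\le 2\|\sin\Theta\|_{S^\infty}\le\frac{2\|E\|_{S^\infty}}{\gamma-\|E\|_{S^\infty}},
\]
which is exactly the claimed bound.

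\textbf{Main obstacle.} The delicate point is Step 3. It relies on the eigenvalue structure of a difference of two equal-rank projectors, and here the normalization by $r$ exactly cancels the factor $r$ coming from summing the $r$ angles.

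If citing the CS decomposition is undesirable, there is a fallback: bound $\|P-Q\|_{S^1}\le\sqrt{2r}\,\|P-Q\|_{F}$ using that $P-Q$ has rank at most $2r$, together with $\|P-Q\|_F=\sqrt2\,\|\sin\Theta\|_F$ and the Frobenius version of Davis--Kahan. However, this route gives a worse constant. The pairing argument is the one to carry out.

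A secondary check is that the separation $\delta$ must be measured against the perturbed spectrum in the form the theorem requires. Step 1 supplies exactly this, with Weyl's inequality.
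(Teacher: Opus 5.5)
Your proposal is correct and follows the paper's route: Weyl's inequality gives the separation $\gamma-\|E\|_{S^\infty}$, Davis--Kahan then bounds $\|\sin\Theta\|_{S^\infty}$, and the conversion $\|P-Q\|_{S^1}\le 2r\,\|\sin\Theta\|_{S^\infty}$ finishes after dividing by $r$. The only cosmetic difference is in that last conversion: the paper uses $\|P-Q\|_{S^1}\le\operatorname{rank}(P-Q)\,\|P-Q\|_{S^\infty}$ with $\operatorname{rank}(P-Q)\le 2r$ and $\|P-Q\|_{S^\infty}=\|\sin\Theta\|_{S^\infty}$, whereas you use the $\pm\sin\theta_i$ eigenvalue pairing to get the exact identity $\|P-Q\|_{S^1}=2\sum_i\sin\theta_i$, which yields the same bound.
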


\begin{proof}
Let the eigenvalues of $H$ be
$\lambda_1 \le \cdots \le \lambda_r < \lambda_{r+1} \le \cdots \le \lambda_p$,
so that the ground space is spanned by the first $r$ eigenvectors and
$P$ is the corresponding projector. By assumption,
$\lambda_{r+1} - \lambda_r \ge \gamma$.
Let the eigenvalues of $\bar{H} = H + E$ be $\hat{\lambda}_1 \le \cdots \le \hat{\lambda}_p$.

Consider the block $[\lambda_1,\ldots,\lambda_r]$ of $H$, and the complement
eigenvalues of $\bar{H}$, i.e.\ $\{\hat{\lambda}_{r+1},\ldots,\hat{\lambda}_p\}$.
Define
\begin{equation}
  \delta
  := \inf\bigl\{\, |\hat{\lambda} - \lambda| :
      \lambda \in [\lambda_1,\lambda_r],\;
      \hat{\lambda} \in [\hat{\lambda}_{r+1},\infty)
    \bigr\}.
\end{equation}
Using Weyl's inequality and the gap assumption, we obtain
\begin{equation}
  \hat{\lambda}_{r+1} - \lambda_r
  \;\ge\;
  (\lambda_{r+1} - \|E\|_{S^\infty}) - \lambda_r
  \;\ge\;
  \gamma - \|E\|_{S^\infty},
\end{equation}
where the first inequality follows from Weyl's inequality, and the second inequality follows from the gap assumption.
Hence
\begin{equation}
  \delta \;=\; \gamma - \|E\|_{S^\infty} \;>\; 0,
\end{equation}
where the last inequality follows from $\Vert{}E\Vert_{S^\infty}\leq{}\gamma/2$.

By the Davis--Kahan $\sin\Theta$ theorem (in an operator-norm / Schatten-$\infty$
version), this spectral separation implies
\begin{equation}
  \|\sin\Theta(P,Q)\|_{S^\infty}
  \;\le\;
  \frac{\|E\|_{S^\infty}}{\delta}
  \;\le\;
  \frac{\|E\|_{S^\infty}}{\gamma - \|E\|_{S^\infty}}.
\end{equation}

Next, we relate the trace-norm distance of the normalized states
$\Omega$ and $\bar{\Omega}$ to the difference of projectors $P$ and $Q$:
\begin{equation}
  \|\Omega - \bar{\Omega}\|_{S^1}
  = \left\|\frac{P-Q}{\operatorname{tr}(P)}\right\|_{S^1}
  = \frac{1}{r}\,\|P - Q\|_{S^1}.
\end{equation}
Since $P$ and $Q$ are orthogonal projectors of the same rank $r$,
the matrix $P-Q$ is symmetric with rank at most $2r$, and
\begin{equation}
  \|P - Q\|_{S^1}
  \;\le\; \operatorname{rank}(P-Q)\,\|P-Q\|_{S^\infty}
  \;\le\; 2r\,\|P-Q\|_{S^\infty}.
\end{equation}
Therefore,
\begin{equation}
  \|\Omega - \bar{\Omega}\|_{S^1}
  \;\le\; \frac{1}{r} \cdot 2r\,\|P-Q\|_{S^\infty}
  \;=\; 2\,\|P-Q\|_{S^\infty}.
\end{equation}

Finally, it is a standard fact in subspace geometry that the operator norm
difference between two orthogonal projectors equals the operator norm of
$\sin\Theta$:
\begin{equation}
  \|P - Q\|_{S^\infty} = \|\sin\Theta(P,Q)\|_{S^\infty}.
\end{equation}
Combining all these bounds,
\begin{equation}
  \|\Omega - \bar{\Omega}\|_{S^1}
  \;\le\;
  2\,\|\sin\Theta(P,Q)\|_{S^\infty}
  \;\le\;
  \frac{2\|E\|_{S^\infty}}{\gamma - \|E\|_{S^\infty}},
\end{equation}
which proves the claim.
\end{proof}

Below, we prove Theorem~\ref{thm:query_compelxity} in the main text.

\begin{proof}
Let $P$ be the orthogonal projector onto the ground-state subspace of $H$, and let
$r=\operatorname{tr}(P)$ be its rank. Then the population solution of
\eqref{eq:EigenVI_extended} is the maximally mixed ground state
\begin{equation}
    \rho^*=\frac{P}{r}.
\end{equation}

By Theorem~\ref{thm: MPO_approximiation_MMGS} and Theorem~\ref{thm:model_complexity}, choosing approximation accuracy
$c\epsilon/\lambda_{\max}$, there exists an MPO state $\rho_{\mathrm{mpo}}$ with
bond dimension
\begin{equation}
R=\operatorname{poly}\!\left(\frac{D\lambda_{\max}}{c\epsilon}\right)
\end{equation}
such that
\begin{equation}
    \|\rho_{\mathrm{mpo}}-\rho^*\|_{S^1}
    \leq \frac{c\epsilon}{\lambda_{\max}}.
\end{equation}
Moreover, by the duality between the trace norm and the spectral norm,
\begin{equation}
\begin{aligned}
\operatorname{tr}(\rho_{\mathrm{mpo}}H)
&\leq
\operatorname{tr}(\rho^*H)
+
\| \rho_{\mathrm{mpo}}-\rho^*\|_{S^1}\|H\|_{S^\infty}  \\
&\leq
\lambda_{\min}+c\epsilon .
\end{aligned}
\end{equation}

Therefore, by Lemma~\ref{thm:upper_bound_of_Fisher_div_LSI},
$q_{\mathrm{mpo}}(x)=\Phi(x)^\top\rho_{\mathrm{mpo}}\Phi(x)$ satisfies the desired KL guarantee,
\begin{equation}
    D_{\mathrm{KL}}(q_{\mathrm{mpo}}\Vert p)
    \leq \epsilon+\frac{\lambda_{\min}}{c}.
\end{equation}

It remains to show that the empirical solution $\bar{\rho}$ is close to
$\rho_{\mathrm{mpo}}$.
Let
\begin{equation}
    E:=\bar{H}-H,
    \qquad
    \eta:=\frac{\gamma c\epsilon}{c\epsilon+2\lambda_{\max}} .
\end{equation}
By Lemma~\ref{thm:concentration}, 
with probability at least $1-\delta$,
\begin{equation}
    \|E\|_{S^\infty}=\|\bar{H}-H\|_{S^\infty}\leq \eta .
\end{equation}
Since $\eta\leq \gamma/2$ by construction, Lemma~\ref{thm:stability_maximally_mixed_ground_state}
applies. Let $Q$ be the projector onto the $r$ lowest eigenvalues of $\bar H$.
Then the solution of \eqref{eq:EigenVI_extended_estimated} is
$\bar\rho=Q/r$, and
\begin{equation}
\begin{aligned}
\|\rho^*-\bar{\rho}\|_{S^1}
&\leq
\frac{2\|E\|_{S^\infty}}{\gamma-\|E\|_{S^\infty}}  \\
&\leq
\frac{2\eta}{\gamma-\eta}
.
\end{aligned}
\end{equation}
We now show that
\begin{equation}
\frac{2\eta}{\gamma-\eta}
\le
\frac{c\epsilon}{\lambda_{\max}}.
\end{equation}

Indeed, there are two cases.

\medskip

\noindent
\textbf{Case 1.}
If
\begin{equation}
\eta
=
\frac{\gamma c\epsilon}{c\epsilon+2\lambda_{\max}},
\end{equation}
then exactly as before,
\begin{equation}
\frac{2\eta}{\gamma-\eta}
=
\frac{c\epsilon}{\lambda_{\max}}.
\end{equation}

\noindent
\textbf{Case 2.}
If
\begin{equation}
\eta=\frac{\gamma}{2},
\end{equation}
then
\begin{equation}
\frac{2\eta}{\gamma-\eta}
=
\frac{2(\gamma/2)}{\gamma-\gamma/2}
=
2.
\end{equation}

Since this branch occurs only when
\begin{equation}
\frac{\gamma}{2}
\le
\frac{\gamma c\epsilon}{c\epsilon+2\lambda_{\max}},
\end{equation}
we have
\begin{equation}
c\epsilon\ge 2\lambda_{\max},
\end{equation}
and hence
\begin{equation}
2
\le
\frac{c\epsilon}{\lambda_{\max}}.
\end{equation}

Therefore, in both cases,
\begin{equation}
\frac{2\eta}{\gamma-\eta}
\le
\frac{c\epsilon}{\lambda_{\max}}.
\end{equation}

Consequently,
\begin{equation}
\|\rho^*-\bar\rho\|_{S_1}
\le
\frac{c\epsilon}{\lambda_{\max}}.
\end{equation}

Finally, by the triangle inequality,
\begin{equation}
\begin{aligned}
\|\rho_{\mathrm{mpo}}-\bar{\rho}\|_{S^1}
&\leq
\|\rho_{\mathrm{mpo}}-\rho^*\|_{S^1}
+
\|\rho^*-\bar{\rho}\|_{S^1} \\
&\leq
\frac{c\epsilon}{\lambda_{\max}}
+
\frac{c\epsilon}{\lambda_{\max}}
=
\frac{2c\epsilon}{\lambda_{\max}} .
\end{aligned}
\end{equation}
The proof on the $D_{KL}$ bound is same to Theorem~\ref{thm:model_complexity}.
\end{proof}

\section{Additional Theoretical Results}\label{sec:app_additional_result}

\begin{lemma}[Concentration of $H$ with global importance sampling]
\label{thm:concentration_of_global_H}
    Let $M(x)=\sum_{e\in{}E}h_e(x)$ be a one-dimensional $k$-local Hamiltonian on $D$ sites, with local dimension of $K$, $\vert{}E\vert=\Omega(D)$, and the support of $M(x)$ satisfies $supp(M(x))\subseteq{}[-\frac{L}{2},\frac{L}{2}]^D$.
    Assume each local term satisfies $\Vert{}h_e(x)\Vert_{S^\infty}\le J$ almost surely,  where $J=\Theta(1)$.  
    Let $M:=\int{}M(x)dx$ and $\bar{M}:=\frac{1}{\pi{}m}\sum_{i=1}^m{}M(x_i)$, where $\{x_i\}_{i=1}^m$ are sampled \emph{i.i.d.} over uniform distribution on $[-\frac{L}{2},\frac{L}{2}]^D$.
    Then, for any $\epsilon,\delta\in(0,1)$, $Pr\left\{\Vert{}\bar{M}-M\Vert_{S^\infty}\leq\epsilon\right\}\geq{}1-\delta$ holds provided that  
    \begin{equation}
        m=\Omega\left(\frac{L^{2D}D^2}{\epsilon}\log\frac{K^D}{\delta}\right).
    \end{equation}
\end{lemma}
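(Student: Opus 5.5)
The plan is to mirror the proof of Lemma~\ref{thm:concentration}, but with a single global sum instead of a union over local blocks. We write the estimator as $\bar M = \frac{1}{m}\sum_{i=1}^m L^D M(x_i)$, where $\pi = L^{-D}$ is the uniform density on $[-\frac L2,\frac L2]^D$. We then define the centered summands
\[
Y_i := L^D M(x_i) - M.
\]
Since $\mathrm{supp}(M(x))\subseteq[-\frac L2,\frac L2]^D$, we have $\mathbb{E}[L^D M(x_i)] = \int M(x)\,dx = M$. Hence $\mathbb{E}Y_i=0$, and each $Y_i$ is symmetric of size $K^D\times K^D$. This is the setting of Tropp's Matrix Bernstein inequality (Proposition~\ref{thm:tropp_matrix_berstein}) with $d_1=d_2=K^D$.

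First, bound the summands. By the triangle inequality over the $|E|=\Theta(D)$ local terms, $\|M(x)\|_{S^\infty}\le |E|J = O(D)$ almost surely. By Jensen, $\|M\|_{S^\infty}\le L^D\,\mathbb{E}\|M(x)\|_{S^\infty}\le L^D|E|J$. Therefore
\[
\|Y_i\|_{S^\infty}\le 2L^D|E|J =: R_D .
\]
For the variance statistic, we use $\|\mathbb{E}Y_i^2\|_{S^\infty}\le \mathbb{E}\|Y_i\|_{S^\infty}^2\le R_D^2$, so $\upsilon(\sum_i Y_i)\le mR_D^2$.

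Second, apply the tail bound to $\frac1m\sum_i Y_i$ at level $t=\epsilon$:
\[
\Pr\Bigl\{\bigl\|\bar M - M\bigr\|_{S^\infty}\ge \epsilon\Bigr\}\le 2K^D\exp\!\left(\frac{-m\epsilon^2/2}{R_D^2+R_D\epsilon/3}\right).
\]
Setting the right-hand side to at most $\delta$ and solving for $m$ (the same quadratic manipulation as in Lemma~\ref{thm:concentration}) gives the sufficient condition
\[
m \gtrsim \frac{R_D^2 + R_D\epsilon}{\epsilon^2}\log\frac{2K^D}{\delta} = O\!\left(\frac{L^{2D}D^2}{\epsilon^2}\log\frac{K^D}{\delta}\right),
\]
after absorbing $J$ and constants. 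The stated form has $\epsilon$ rather than $\epsilon^2$ in the denominator. Because $\epsilon<1$, an $\epsilon^{-2}$ requirement is stronger than $\epsilon^{-1}$, so I will check whether the statement intends the linear-in-$\epsilon$ regime coming from the $R_D\epsilon/3$ term. If not, I will simply report the $\epsilon^{-2}$ bound, which implies a sufficient condition of the stated flavour once the dominant $L^{2D}D^2$ factor is tracked.

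The main obstacle is not the upper bound but the exponent $L^{2D}$. The uniform bound $R_D$ is crude: the importance weight $L^D$ multiplies a quantity whose integral is only $O(D)$, and this is exactly the curse of dimensionality being illustrated. If a matching lower bound (necessity) were required, I would exhibit a score concentrated on a small cell of volume $O(1)$, so that the variance is genuinely $\Theta(L^{2D})$ times the squared norm. For the statement as written, only sufficiency is needed, so the Bernstein argument above suffices.
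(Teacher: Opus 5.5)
Your argument matches the paper's. Both use matrix Bernstein with the crude almost-sure bound $\|M(x)\|_{S^\infty}\le |E|J$, inflated by the importance weight $L^D$, together with the variance proxy $\upsilon\le mR^2$ and the dimension factor $2K^D$. The paper keeps $Y_i=M(x_i)-\pi M$ and asks for tolerance $\pi\epsilon$ with $\pi=L^{-D}$, which is just your rescaling.

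On your $\epsilon$ versus $\epsilon^2$ question: the paper's derivation also ends at $m\ge \frac{16|E|^2J^2}{(\pi\epsilon)^2}\log\frac{2K^D}{\delta}$ times a $\Theta(1)$ factor, i.e.\ at $L^{2D}D^2\epsilon^{-2}\log(K^D/\delta)$. The $\epsilon^{-1}$ in the displayed bound is therefore a slip in the paper, not something you are missing. Report your $\epsilon^{-2}$ condition and drop the fork in your plan.

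The linear Bernstein term cannot rescue the $\epsilon^{-1}$ form. It contributes only $L^D D\epsilon^{-1}$, and it is dominated whenever $R_D\gtrsim\epsilon$.

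In fact the $\epsilon^{-1}$ form is false in general. Take $h_e(x)=J\,\mathrm{sign}(x_1)P_e$ on the cube, where the $P_e$ are commuting $k$-local projectors with a common eigenvector (e.g.\ each projects every site of $e$ onto the first basis vector). Then $M=0$ and $\|\bar M-M\|_{S^\infty}=L^DJ|E|\,\bigl|\frac1m\sum_{i=1}^m\mathrm{sign}(x_{i,1})\bigr|$. Anti-concentration of this Rademacher average forces $m\gtrsim L^{2D}J^2|E|^2/\epsilon^2$ for constant $\delta$. This exceeds the stated requirement once $\epsilon$ is small compared with $1/(D\log K)$.
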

\begin{proof}
     Let $\pi:=\frac{1}{L^D}$ be the proposal distribution, and 
     $Y_i:=M(x_i)-\pi{}M$.
     Because $M(x)$ is a Hamiltonian for any $x$ and $\{x_i\}_{i=1}^m$ are \emph{i.i.d.} sampled, $Y_i$ is thus Hermitian, mean zero, and \emph{i.i.d.} for different $i\in{}[m]$.
     From the assumption of the locality of $M(x)$ and the bounded spectral norm for each $h_e(x)$, we have for almost all $x$:
     \begin{equation}
        \begin{split}
        \Vert{}M(x)\Vert_{S^\infty}\leq\sum_{e\in{}E}\Vert{}h_e(x)\Vert_{S^\infty}\leq{}sJ,\,a.e.,\,\mbox{and }\Vert{}\pi{}M\Vert_{S^\infty}\leq{}\mathbb{E}_\pi\Vert{}M(x)\Vert_{S^\infty}\leq{}sJ
        \end{split},
    \end{equation}
    where $s:=\vert{}E\vert$ and the constant $J$ denotes a unified upper bound of the spectral norm, i.e., $\Vert{}h_e(x)\Vert_{S^\infty}\leq{}J,\,a.e.$ for all $e\in{}E$. Hence, the spectral norm of $Y_i$ satisfies
    \begin{equation}
        \begin{split}
            \Vert{}Y_i\Vert_{S^\infty}\leq{}\Vert{}M(x)\Vert_{S^\infty}+\Vert{}\pi{}M\Vert_{S^\infty}\leq{}2sJ=:R,\quad{}a.s.
        \end{split},
    \end{equation}
    for all $i\in{}[m]$; that is, the random matrices $\{Y_i\}$ have a unified upper bound on the spectral norm.
    Furthermore, we have the matrix variance satisfying:
    \begin{equation}
        \begin{split}
            \upsilon:=\Vert{}\mathbb{E}Y_i^2\Vert_{s^\infty}\leq\mathbb{E}\Vert{}Y_i^2\Vert_{s^\infty}\leq{}R^2
        \end{split},
    \end{equation}
    where the middle inequality follows from Jensen's inequality, and the inequality on the right holds with the fact $\Vert{}Y_i^2\Vert_{s^\infty}=\Vert{}Y_i\Vert^2_{s^\infty}$ for symmetric matrices.
    Next, let $S:=\sum_{i=1}^m{}Y_i$. 
    Thus, the matrix variance statistic of the sum defined in Prop.~\ref{thm:tropp_matrix_berstein} satisfies
    \begin{equation}
        \begin{split}
            \upsilon(S)&=\max\{\sum_{i=1}^m\Vert{}\mathbb{E}(Y_iY_i^\top)\Vert_{S^\infty},\sum_{i=1}^m\Vert{}\mathbb{E}(Y_i^\top{}Y_i)\Vert_{S^\infty}\}\\
            &=\sum_{i=1}^m\Vert{}\mathbb{E}(Y_i^2)\Vert_{S^\infty}\leq{}mR^2
        \end{split},
    \end{equation}
    where the second line follows also from the symmetry of $Y_i$.
    Applying Tropp's matrix Bernstein's inequality (Prop.~\ref{thm:tropp_matrix_berstein}), we have 
    \begin{equation}
        \begin{split}
            Pr\left\{\Vert\frac{1}{m}S\Vert_{S^\infty}\leq{}t\right\}&\geq{}1-2d\exp\left(\frac{-m^2t^2/2}{\upsilon(S)+Rmt/3}\right)
        \end{split},
    \end{equation}
where $d:=K^D$ denotes the dimension of the full Hilbert space.
Let $1-2d\exp\left(\frac{-m^2t^2/2}{\upsilon(S)+Rmt/3}\right)\geq{}1-\delta$, the inequality is equivalent to 
\begin{equation}
        \begin{split}
            &\delta\geq{}2d\exp\left(\frac{-m^2t^2/2}{\upsilon(S)+Rmt/3}\right)\\
           &\Leftrightarrow\log\frac{\delta}{2d}\geq{}\frac{-m^2t^2/2}{\upsilon(S)+Rmt/3}\\
           &\Leftrightarrow{}L:=\log\frac{2d}{\delta}\leq{}\frac{m^2t^2/2}{\upsilon(S)+Rmt/3}\\
           &\Leftarrow{}L\leq\frac{m^2t^2/2}{mR^2+Rmt/3}=\frac{mt^2/2}{R^2+Rt/3}
        \end{split}.
    \end{equation}
    Here, the last line follows from $\upsilon(S)\leq{}mR^2$.
    Continuously, 
    \begin{equation}
        \begin{split}
            &L\leq\frac{m^2t^2/2}{mR^2+Rmt/3}=\frac{mt^2/2}{R^2+Rt/3}\\
            &\Leftrightarrow{}LR^2+\frac{LR}{3}t\leq{}\frac{m}{2}t^2\\
            &\Leftrightarrow{}\frac{m}{2}t^2-\frac{LR}{3}t-LR^2\geq{}0\\
            &\Leftrightarrow{}t\geq{}\frac{LR+\sqrt{L^2R^2+18mLR^2}}{3m}\mbox{ or }
            t\leq{}\frac{LR-\sqrt{L^2R^2+18mLR^2}}{3m}\\
            &\Leftarrow{}t\geq{}\frac{LR+\sqrt{L^2R^2+18mLR^2}}{3m}\\
            &\Leftrightarrow{}t\geq{}\frac{2sJL}{3m}+\frac{\sqrt{4L^2s^2J^2+72mLs^2J^2}}{3m}=2sJ\left(\frac{L}{3m}+\sqrt{\frac{L^2}{9m^2}+\frac{2L}{m}}\right)
        \end{split}.
    \end{equation}
    Let $\pi\epsilon\geq{}t$. Thus we have
    \begin{equation}
        \begin{split}
            &\pi\epsilon\geq{}2sJ\left(\frac{L}{3m}+\sqrt{\frac{L^2}{9m^2}+\frac{2L}{m}}\right)\\
            &\Leftrightarrow{}\frac{\pi\epsilon}{2sJ}\geq{}\frac{L}{3m}+\sqrt{\frac{L^2}{9m^2}+\frac{2L}{m}}\\
            &\Leftarrow{}\frac{\pi\epsilon}{4sJ}\geq{}\frac{L}{3m}\mbox{ and }\frac{\epsilon}{4sJ}\geq{}\sqrt{\frac{L^2}{9m^2}+\frac{2L}{m}}\\
            &\Leftarrow{}\frac{\pi\epsilon}{4sJ}\geq{}\sqrt{\frac{L^2}{9m^2}+\frac{2L}{m}}\\
            &\Leftarrow{}m \;\ge\; \frac{16 L s^2 J^2}{(\pi\varepsilon)^2}
\left(\sqrt{1 + \frac{(\pi\varepsilon)^2}{144 s^2 J^2}} + 1\right)
        \end{split}.
    \end{equation}
    Applying $s=\Theta(D)$, $J=\Theta(1)$ and $d=K^D$, we finally have
\begin{equation}
        m=\Omega\left(\frac{D^2L^{2D}}{\epsilon}\log\frac{K^D}{\delta}\right).
\end{equation}
The proof is complete.
\end{proof}

\section{Experiment Details}
\subsection{Data preparation}
\label{sec:app_data_preparation}

We consider five synthetic target families covering qualitatively distinct
challenges for variational inference, including multimodality, anisotropy,
curved support, and heterogeneous local scales.

\paragraph{Base targets.}
The Gaussian target is constructed directly in $D$ dimensions as a multivariate
Gaussian whose precision matrix is tridiagonal with diagonal entries $1$ and
off-diagonal entries $0.2$. This gives local but nontrivial correlations across
coordinates and corresponds to a path-graph MRF.

The remaining four targets are first defined in two dimensions:
\begin{itemize}
    \item \textbf{GMM3}: a three-component Gaussian mixture with unequal weights
    and mildly correlated covariance matrices.
    \item \textbf{XShape}: an equal-weight mixture of two anisotropic Gaussians
    rotated by $\pm45^\circ$, producing a cross-shaped density.
    \item \textbf{Ring}: a rotationally symmetric ring concentrated near radius
    $r=3$ with radial standard deviation $\sigma=0.5$.
    \item \textbf{Funnel}: a two-dimensional funnel distribution with
    $x_1\sim\mathcal N(0,\sigma^2)$ and
    $x_2\mid x_1\sim\mathcal N(0,e^{x_1})$, where $\sigma=1.2$.
\end{itemize}

\paragraph{Lifting two-dimensional targets to higher dimensions.}
For each two-dimensional base density $p_0(x_1,x_2)$ and any target dimension
$D\ge 2$, we introduce $N=D-2$ auxiliary variables
$z_1,\ldots,z_N$ and set $z_0=x_2$.
The $D$-dimensional target density is defined by
\begin{equation}
p(x_1,x_2,z_1,\ldots,z_N)
=
p_0(x_1,x_2)
\prod_{i=1}^{N}
\mathcal{N}\!\left(
z_i;\,h_i(z_{i-1}),\,\sigma_{\mathrm{aug}}^2
\right).
\end{equation}
Equivalently, the dependency structure is the path
\[
x_2 \to z_1 \to z_2 \to \cdots \to z_N,
\]
or, in coordinate indices,
\[
2 \to 3 \to \cdots \to D.
\]
Thus the marginal distribution of $(x_1,x_2)$ remains the original
two-dimensional target, while the additional coordinates introduce nonlinear
local dependencies.

Each function $h_i:\mathbb R\to\mathbb R$ is selected deterministically from
the following fixed smooth function bank:
\begin{align}
h(x)\in\Big\{
&\pm 1.5\sin(x),\,
\pm 1.5\cos(x),\,
\pm 1.5\sin(2x),\,
\pm 1.5\cos(2x), \nonumber\\
&\pm 1.5\big(\sigma(4x)-1/2\big),\,
\pm 1.5\tanh(2x),\,
\pm 1.5\tanh(4x), \nonumber\\
&\pm 1.5\sin(x)\tanh(x),\,
\pm 1.5\cos(x)\tanh(x),\,
\pm 1.5\,\frac{x}{1+x^2}
\Big\},
\end{align}
where $\sigma(\cdot)$ denotes the logistic sigmoid.
In implementation, this function bank is repeated deterministically if
necessary, and the first $N=D-2$ functions are used as
$h_1,\ldots,h_N$.
Therefore, as $D$ increases, the nonlinear transformation pattern is extended
in a fixed order rather than resampled randomly.

Sampling from the lifted target is sequential:
\[
(x_1,x_2)\sim p_0,
\qquad
z_i = h_i(z_{i-1})+\sigma_{\mathrm{aug}}\epsilon_i,
\qquad
\epsilon_i\sim\mathcal N(0,1),
\quad i=1,\ldots,N.
\]
This gives exact samples from the joint density above.

The log-density is also available in closed form:
\begin{equation}
\log p(x_1,x_2,z_1,\ldots,z_N)
=
\log p_0(x_1,x_2)
-\frac{N}{2}\log(2\pi\sigma_{\mathrm{aug}}^2)
-\frac{1}{2\sigma_{\mathrm{aug}}^2}
\sum_{i=1}^{N}
\left(z_i-h_i(z_{i-1})\right)^2 .
\end{equation}
Consequently, the score can be computed analytically.
Let
$s_0(x_1,x_2)=\nabla_{(x_1,x_2)}\log p_0(x_1,x_2)$.
Then
\begin{align}
\nabla_{x_1}\log p
&=
[s_0(x_1,x_2)]_1,\\
\nabla_{x_2}\log p
&=
[s_0(x_1,x_2)]_2
+
\frac{h_1'(x_2)}{\sigma_{\mathrm{aug}}^2}
\left(z_1-h_1(x_2)\right),
\end{align}
and for $i=1,\ldots,N$,
\begin{equation}
\nabla_{z_i}\log p
=
\frac{h_i(z_{i-1})-z_i}{\sigma_{\mathrm{aug}}^2}
+
\mathbf{1}_{\{i<N\}}
\frac{h_{i+1}'(z_i)}{\sigma_{\mathrm{aug}}^2}
\left(z_{i+1}-h_{i+1}(z_i)\right).
\end{equation}
The first term comes from the conditional factor
$\log p(z_i\mid z_{i-1})$, while the second term, when present, comes from
the child factor $\log p(z_{i+1}\mid z_i)$.

Overall, this lifting construction preserves the original non-Gaussian
two-dimensional geometry while adding controllable nonlinear dependencies
through a path-structured Markov chain. Exact sampling, exact density
evaluation, and exact score evaluation are all available, and their cost scales
linearly with $D$.


\begin{figure}[t]
    \centering
    \includegraphics[width=0.9\linewidth]{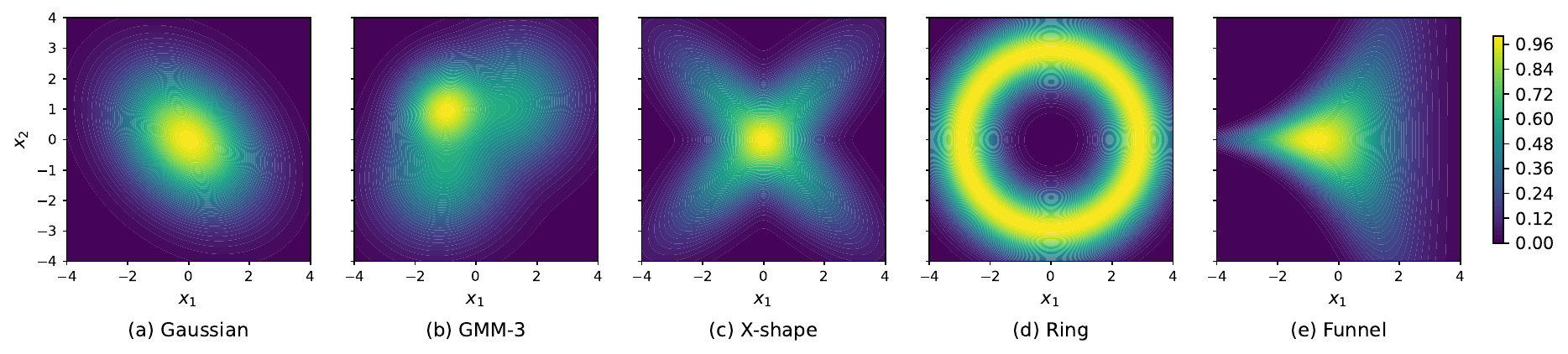}
    \caption{
        Two-dimensional visualizations of the synthetic target distributions used in the experiments.
    }
    \label{fig:groundtruth}
\end{figure}

\subsection{Detailed settings for Figure~\ref{fig:KL_with_various_D_samples}.}
For this experiments, we use weighted Hermite polynomials~\citep{cai2024eigenvi} as basis functions with $K=4$ basis functions per dimension.
The variational problem is solved following the EigenVI framework by computing the eigenvectors corresponding to the $r$ smallest eigenvalues of the estimated Hamiltonian $H$, where we set $r=2$.
Hamiltonian estimation is performed using either global importance sampling or the proposed local estimation strategy.
The proposal distribution is chosen as the uniform distribution on $[-5,5]^D$, and all experiments use the same random seed ($43$) for reproducibility.
The eigendecomposition of $H$ is computed using the LAPACK backend in SciPy.

For MPO analysis, after obtaining the density operator $\rho$, we apply the standard TT-SVD-like procedure to convert $\rho$ into MPO form.
The truncation threshold is fixed to $\texttt{err}=10^{-6}$ and the maximum allowable bond dimension is set to $256$.
The reported bond dimension corresponds to the maximum retained intermediate bond dimension across all MPO cores.

For KL evaluation, the forward KL divergence $D_{\mathrm{KL}}(p\Vert q)$ is estimated using Monte Carlo integration with $10^4$ samples drawn from the target distribution.

\subsection{Detailed settings for Figure~\ref{fig:spectra_gap}.}
To empirically examine the spectral-gap assumption used in our theoretical analysis, we visualize the low-energy relative eigengap structure of the estimated Hamiltonian $H$ across different target distributions.
The experiment considers the five synthetic targets Gaussian, GMM-3, X-shape, Ring, and Funnel at fixed dimension $D=5$.

For this experiments, we use weighted Hermite polynomial basis functions with $K=4$ basis functions per dimension and set $r=2$, corresponding to the ground-state setting analyzed in the theory.
The Hamiltonian $H$ is estimated using the EigenVI framework with global importance sampling and query budget $B=5000$.
The proposal distribution is chosen as the uniform distribution on $[-5,5]^D$, and eigendecomposition is performed using the LAPACK backend in SciPy.

After estimating $H$, we compute its ordered eigenvalues
\[
\lambda_0 \le \lambda_1 \le \cdots,
\]
and evaluate the relative eigengaps
\[
\frac{\lambda_{i+1}-\lambda_i}{|\lambda_i|+10^{-12}}.
\]
For visualization, we truncate the spectrum to the first $100$ eigengaps.
The highlighted red marker corresponds to the ground-state relative spectral gap,
which is the quantity appearing in the perturbation and query-complexity analysis.
All experiments use the same random seed ($43$) for reproducibility.

We report \emph{relative} eigengaps instead of absolute eigengaps because the overall spectral scale of $H$ varies largely across different target distributions.
Using absolute gaps may therefore obscure the low-energy separation structure when the spectrum contains large-magnitude eigenvalues.
In contrast, the relative eigengap normalizes each gap by the magnitude of the corresponding eigenvalue, providing a scale-invariant measure of spectral separation.
This normalization makes the low-energy spectral structure more directly comparable across different targets and more consistent with the perturbation stability analysis underlying the Davis--Kahan theorem.

\subsection{Detailed settings for Figure~\ref{fig:query_scaling_law}.}
To evaluate fixed-accuracy query scaling, we use the Gaussian target and vary the dimension over $D\in\{2,\ldots,10\}$.
For each dimension, we run both global and local Hamiltonian estimation with query budgets
\[
B\in\{1,5,10,15,20,50,75,100,150,200,500,1000,2000,4000,6000,8000,10000\}.
\]
We use weighted Hermite polynomial basis functions with $K=2$ basis functions per dimension and set the variational rank to $r=1$.
The proposal distribution is uniform on $[-5,5]^D$, and the eigendecomposition of the estimated Hamiltonian is computed using the LAPACK backend in SciPy.
For each pair $(D,B)$, we repeat the experiment three times with different random seeds and report the mean forward KL divergence estimated from $10^4$ target samples.

For each dimension and estimation method, we define the required query budget as the smallest $B$ such that the mean forward KL satisfies
\[
D_{\mathrm{KL}}(p\Vert q)\le 0.5.
\]
If the threshold is not reached within the scanned budgets, we mark the largest budget as not reached.
The fitted curves in Figure~\ref{fig:query_scaling_law} are obtained by fitting an exponential law $B\approx a\exp(bD)$ for global estimation and a polynomial law $B\approx aD^b$ for local estimation.

\subsubsection{Additional numerical results}

Figure~\ref{fig:Gaussian_kl_vs_queries} provides additional query-scaling
results for the Gaussian target distribution shown in Figure~\ref{fig:query_scaling_law}.
The top and bottom panels respectively show the forward KL divergence
as a function of the query budget $B$ under the global and local
estimation strategies.
The dashed horizontal line indicates the target threshold
$\mathrm{KL}\le 0.5$ used for defining successful approximation.

Figures~\ref{fig:xshape_query_scaling_appendix},
\ref{fig:ring_query_scaling_appendix},
\ref{fig:gmm3_query_scaling_appendix}, and
\ref{fig:funnel_query_scaling_appendix} report analogous query-scaling
results for the X-shape, Ring, GMM-3, and Funnel target distributions.
Despite the strongly non-Gaussian geometry, multimodality, and
anisotropic structure of these targets, the local estimation strategy
consistently achieves substantially lower query complexity compared with
the global estimator.

Unlike the Gaussian experiments, we use a larger basis size $K=4$ in
order to better capture the more complex target geometries.
The target KL thresholds are set to $0.5$ for the X-shape target and
$2$ for the Ring, GMM-3, and Funnel targets, reflecting the increased
difficulty of approximating these distributions.
Because the Hamiltonian dimension scales exponentially with both $K$ and
$D$, we restrict the ambient dimension range to $D\in[2,5]$ in these
experiments due to computational limitations.

\begin{figure}[t]
    \centering

    \begin{subfigure}[b]{0.6\textwidth}
        \centering
        \includegraphics[width=\textwidth]{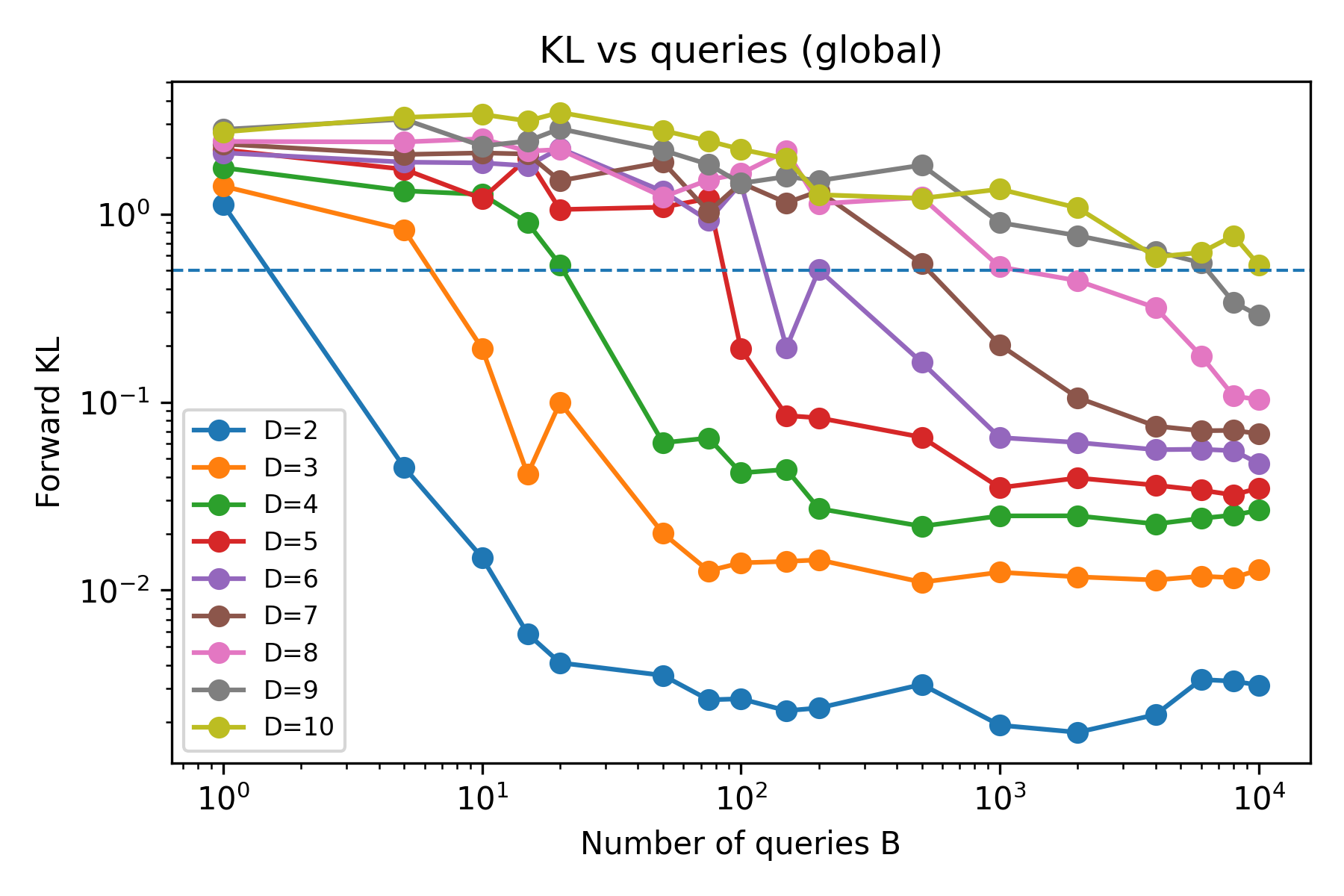}
        \caption{Global estimation. }
        \label{fig:kl_global}
    \end{subfigure}

    \vspace{0.5cm}

    \begin{subfigure}[b]{0.6\textwidth}
        \centering
        \includegraphics[width=\textwidth]{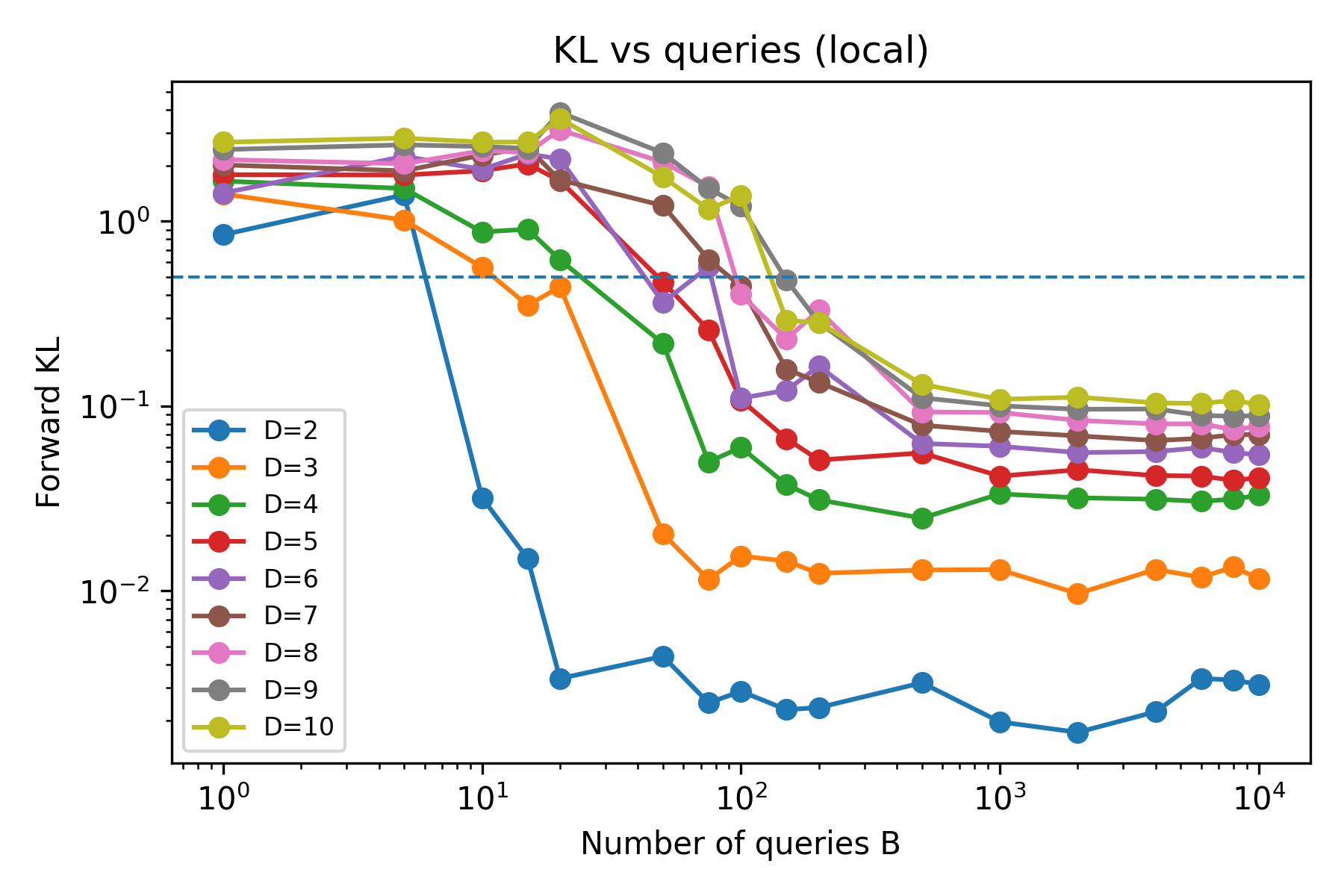}
        \caption{Local estimation.}
        \label{fig:kl_local}
    \end{subfigure}

    \caption{
    More details for Figure~\ref{fig:query_scaling_law}.
    Comparison of query complexity between global and local estimation strategies in MPO Born machine learning. 
    The dashed horizontal line indicates the target KL threshold used for defining successful approximation. 
    }
    
    \label{fig:Gaussian_kl_vs_queries}
\end{figure}

\begin{figure}[t]
    \centering

    \begin{subfigure}[b]{0.6\textwidth}
        \centering
        \includegraphics[width=\textwidth]{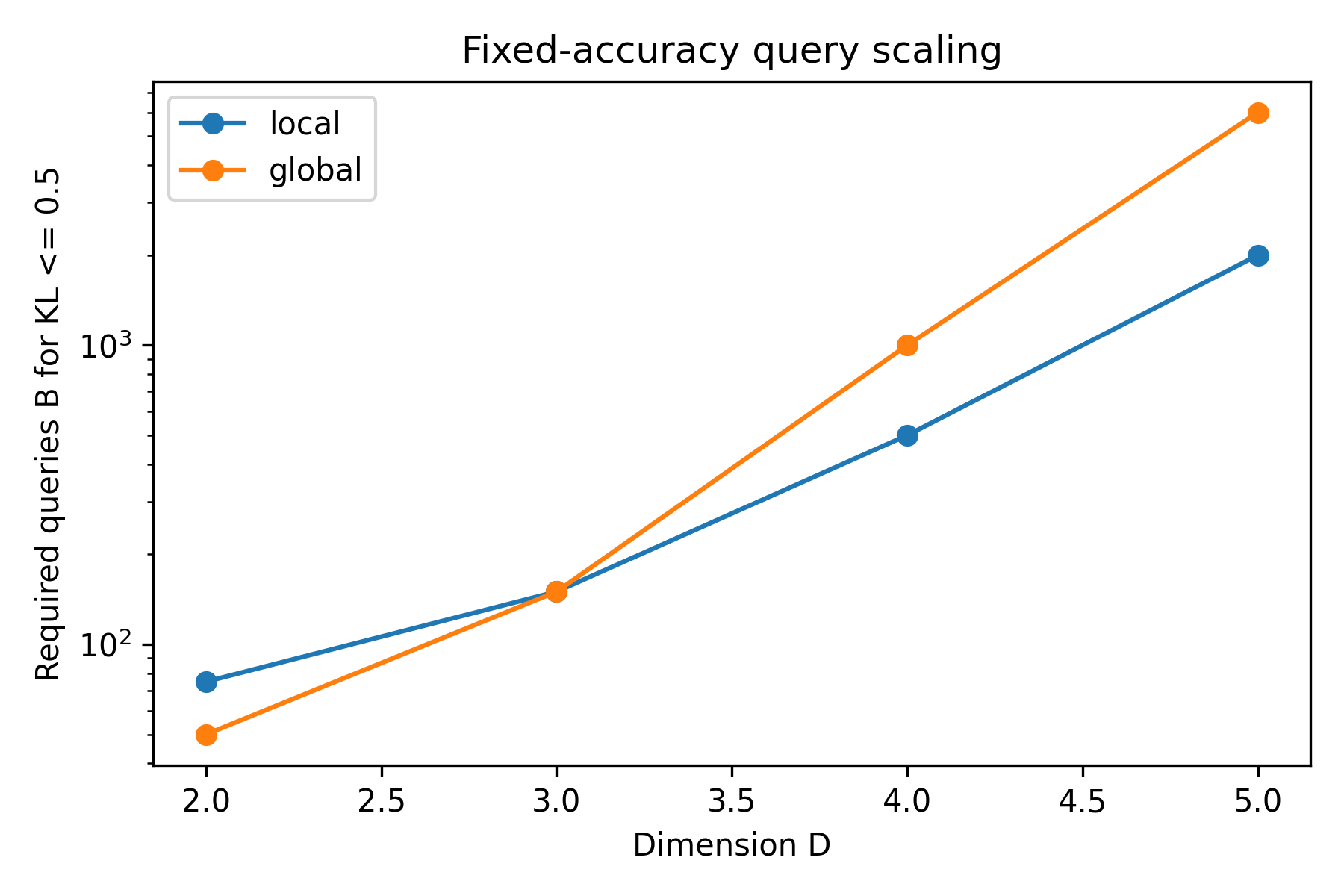}
        \caption{Fixed-accuracy query scaling for the X-shape target. 
        The figure reports the minimum number of score queries $B$
        required to achieve the target accuracy threshold
        $\mathrm{KL}\le 0.5$ as the ambient dimension increases.}
        \label{fig:xshape_required_B}
    \end{subfigure}


    \begin{subfigure}[b]{0.6\textwidth}
        \centering
        \includegraphics[width=\textwidth]{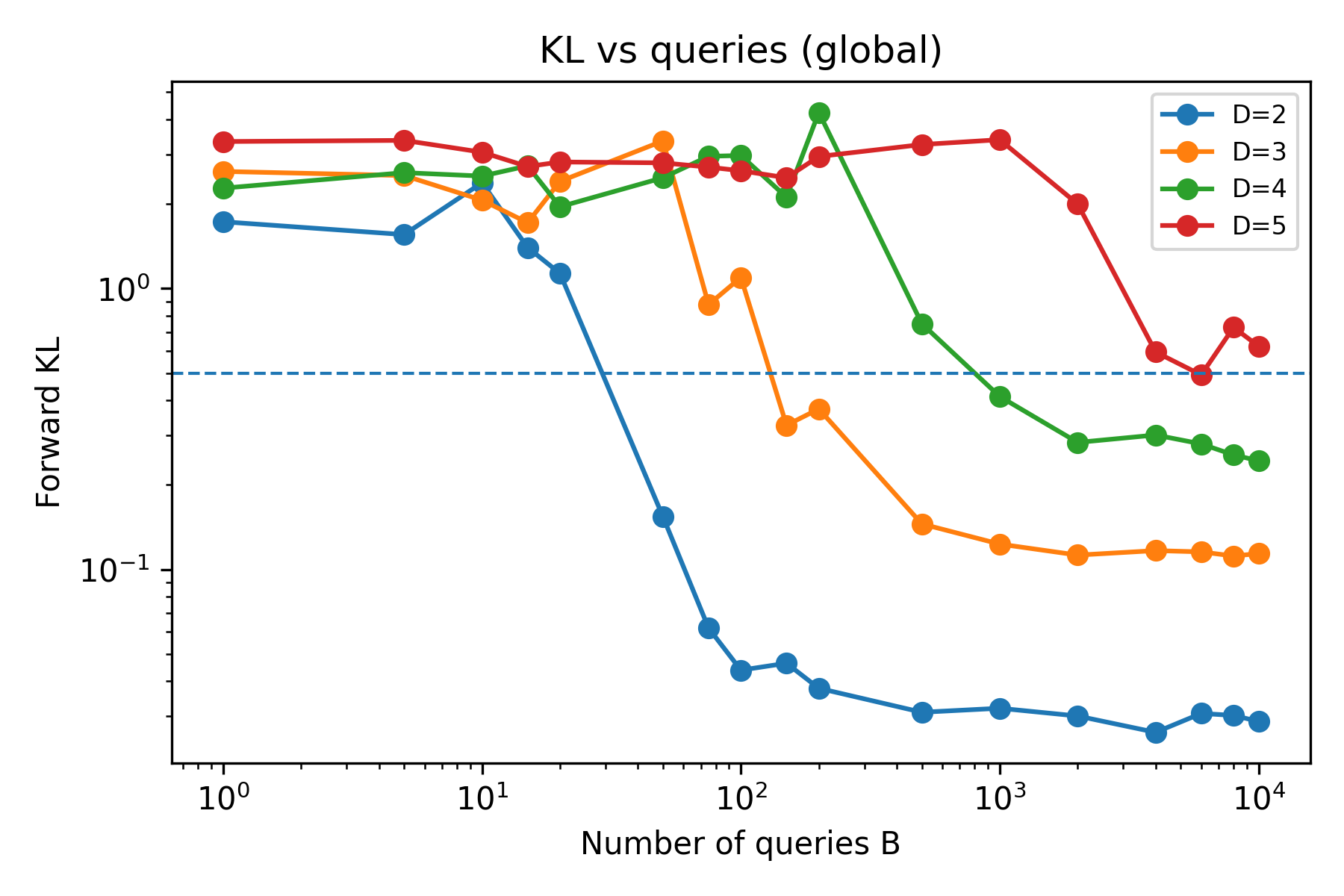}
        \caption{Global estimation. 
        Forward KL divergence versus query budget $B$ for the X-shape
        target under the global estimation strategy.}
        \label{fig:xshape_kl_global}
    \end{subfigure}


    \begin{subfigure}[b]{0.6\textwidth}
        \centering
        \includegraphics[width=\textwidth]{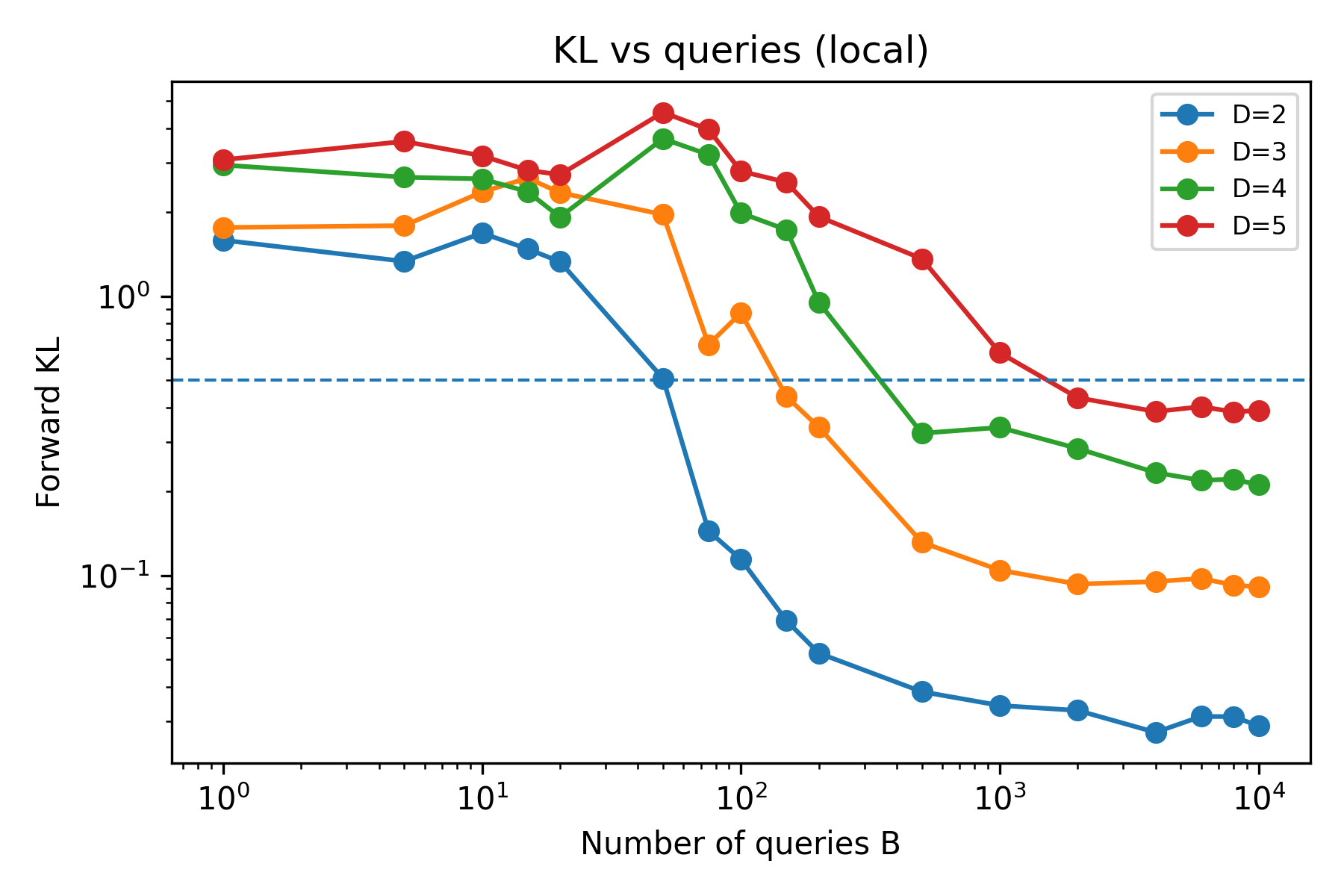}
        \caption{Local estimation. 
        Forward KL divergence versus query budget $B$ for the X-shape
        target under the local estimation strategy.}
        \label{fig:xshape_kl_local}
    \end{subfigure}

    \caption{
    Additional query-scaling results for the X-shape target distribution.
    The dashed horizontal line in the KL plots indicates the target
    approximation threshold $\mathrm{KL}\le 0.5$.
    Compared with global estimation, the local estimation strategy
    achieves substantially improved query efficiency by exploiting
    local structure in the induced Hamiltonian.
    }

    \label{fig:xshape_query_scaling_appendix}
\end{figure}

\begin{figure}[t]
    \centering

    \begin{subfigure}[b]{0.6\textwidth}
        \centering
        \includegraphics[width=\textwidth]{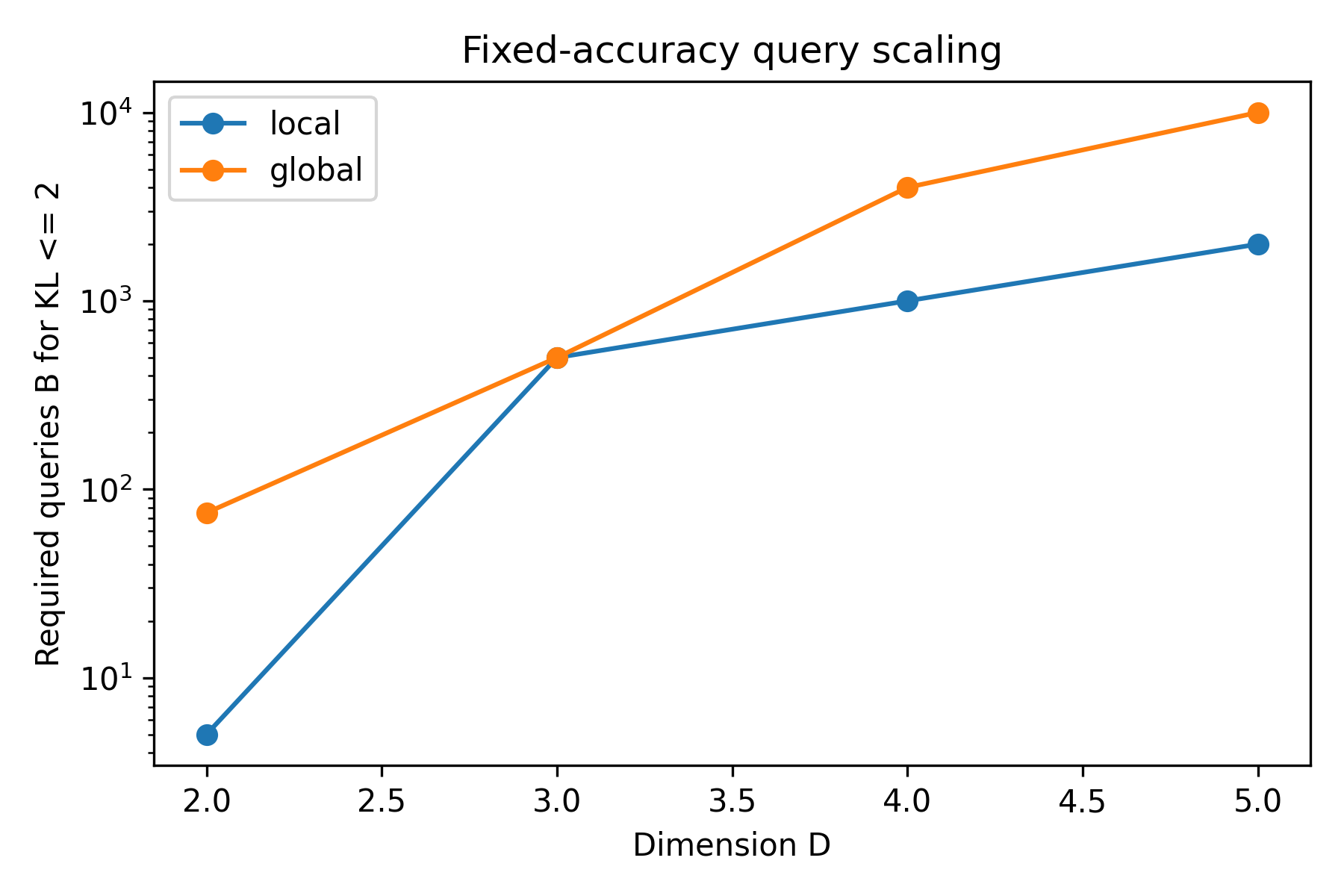}
        \caption{Fixed-accuracy query scaling for the Ring target. 
        The figure reports the minimum number of score queries $B$
        required to achieve the target accuracy threshold
        $\mathrm{KL}\le 0.5$ as the ambient dimension increases.}
        \label{fig:xshape_required_B}
    \end{subfigure}


    \begin{subfigure}[b]{0.6\textwidth}
        \centering
        \includegraphics[width=\textwidth]{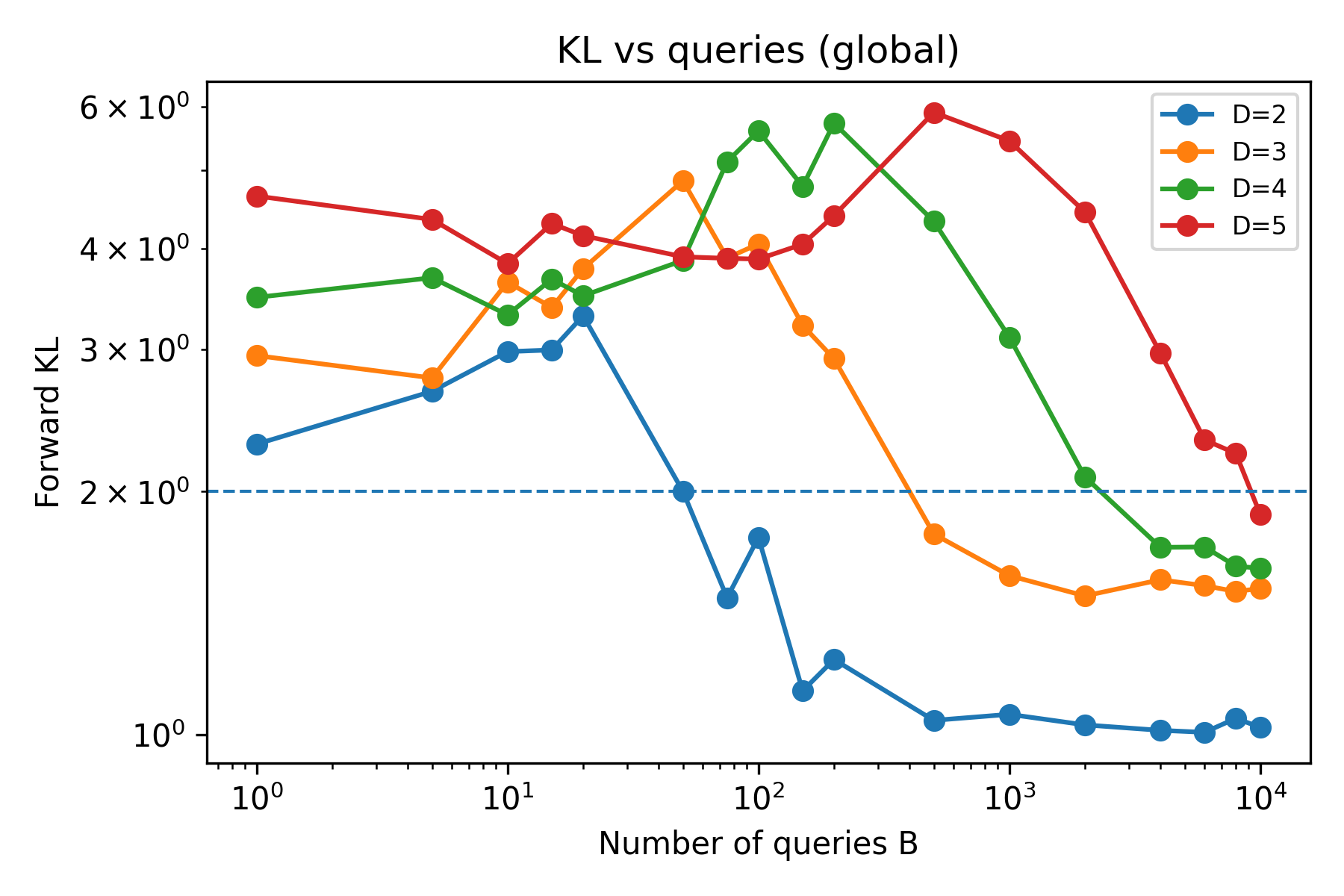}
        \caption{Global estimation. 
        Forward KL divergence versus query budget $B$ for the Ring
        target under the global estimation strategy.}
        \label{fig:ring_kl_global}
    \end{subfigure}


    \begin{subfigure}[b]{0.6\textwidth}
        \centering
        \includegraphics[width=\textwidth]{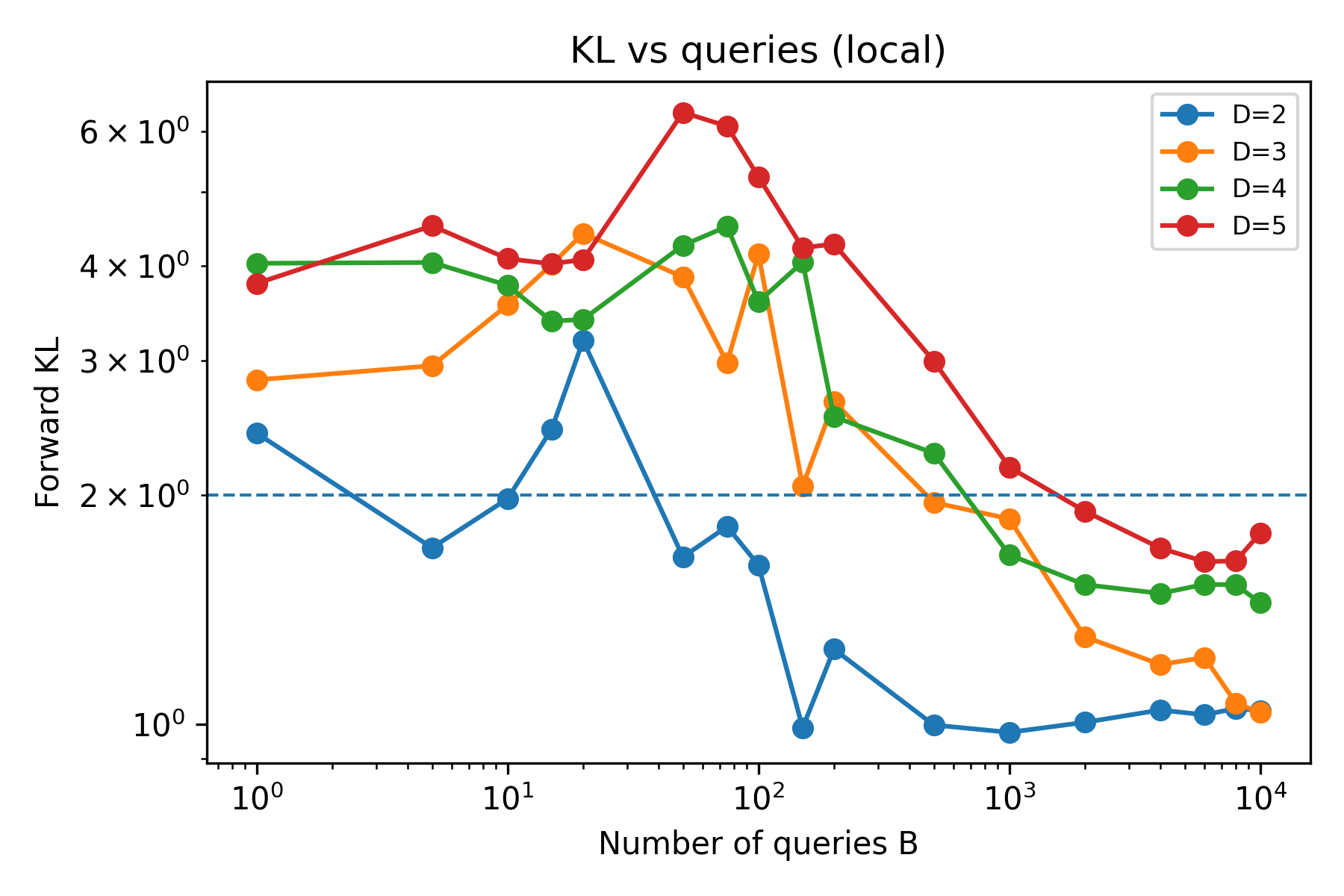}
        \caption{Local estimation. 
        Forward KL divergence versus query budget $B$ for the Ring
        target under the local estimation strategy.}
        \label{fig:ring_kl_local}
    \end{subfigure}

    \caption{
    Additional query-scaling results for the Ring target distribution.
    The dashed horizontal line in the KL plots indicates the target
    approximation threshold $\mathrm{KL}\le 0.5$.
    Compared with global estimation, the local estimation strategy
    achieves substantially improved query efficiency by exploiting
    local structure in the induced Hamiltonian.
    }

    \label{fig:ring_query_scaling_appendix}
\end{figure}

\begin{figure}[t]
    \centering

    \begin{subfigure}[b]{0.6\textwidth}
        \centering
        \includegraphics[width=\textwidth]{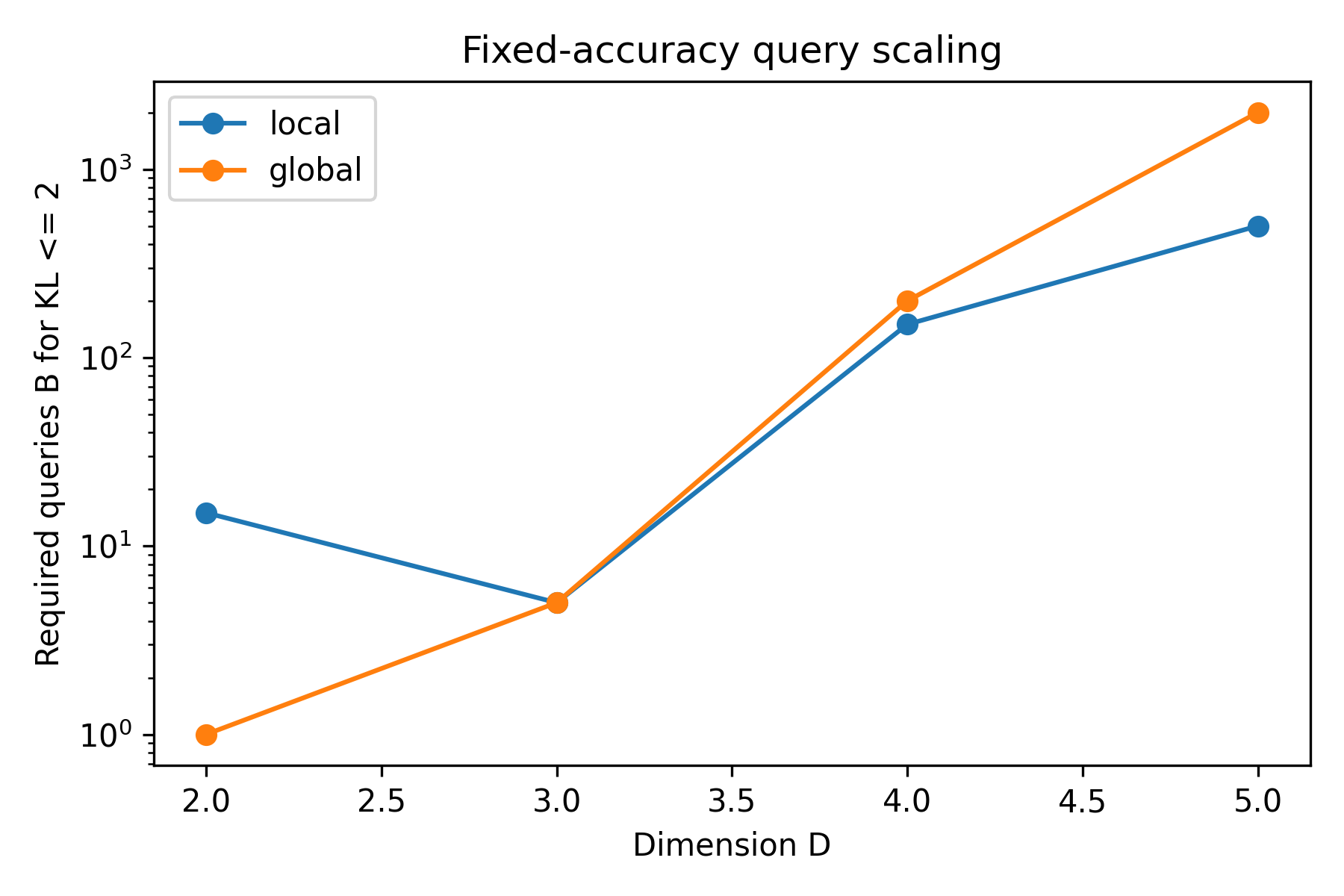}
        \caption{Fixed-accuracy query scaling for the GMM-3 target. 
        The figure reports the minimum number of score queries $B$
        required to achieve the target accuracy threshold
        $\mathrm{KL}\le 0.5$ as the ambient dimension increases.}
        \label{fig:gmm3_required_B}
    \end{subfigure}


    \begin{subfigure}[b]{0.6\textwidth}
        \centering
        \includegraphics[width=\textwidth]{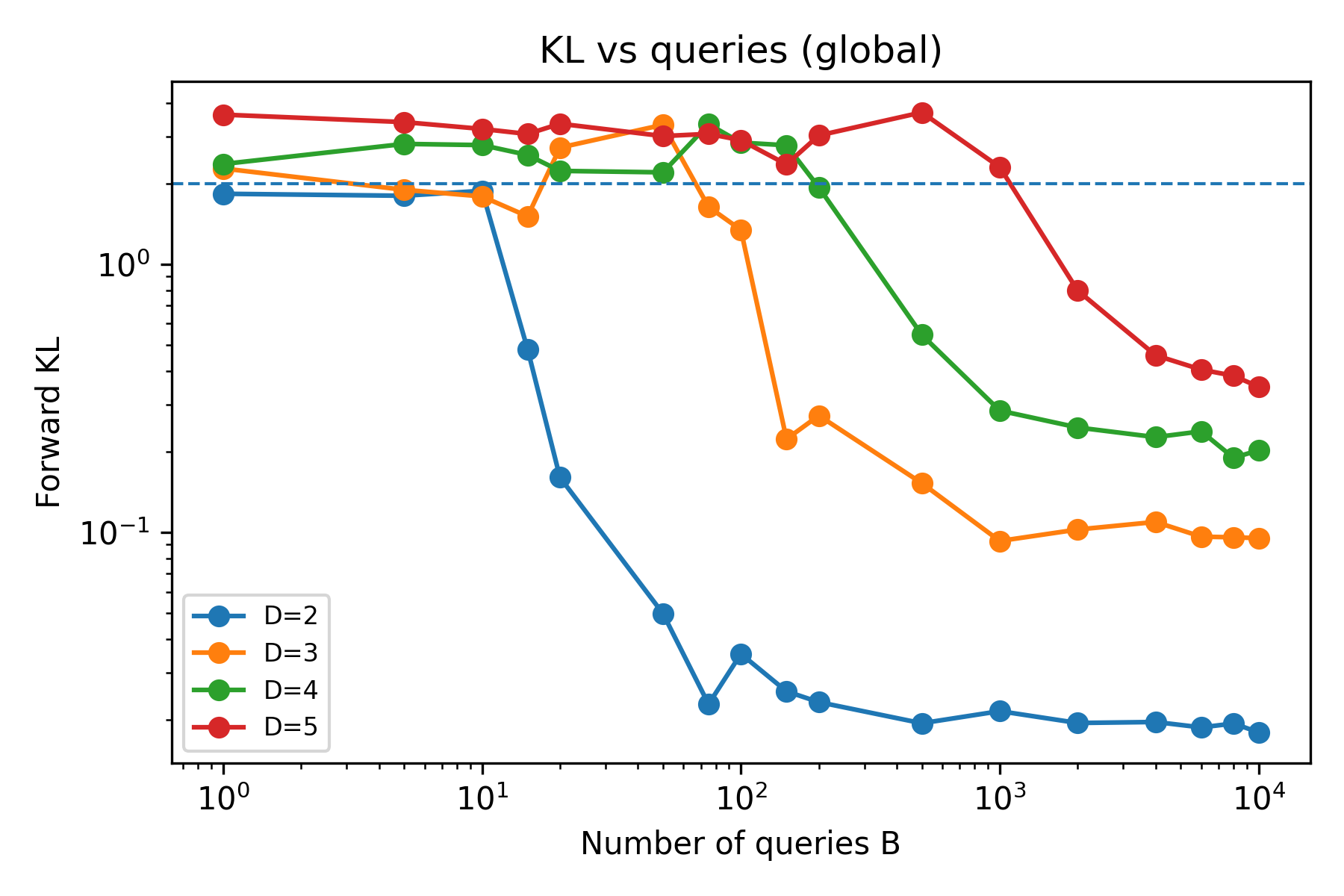}
        \caption{Global estimation. 
        Forward KL divergence versus query budget $B$ for the GMM-3
        target under the global estimation strategy.}
        \label{fig:gmm3_kl_global}
    \end{subfigure}


    \begin{subfigure}[b]{0.6\textwidth}
        \centering
        \includegraphics[width=\textwidth]{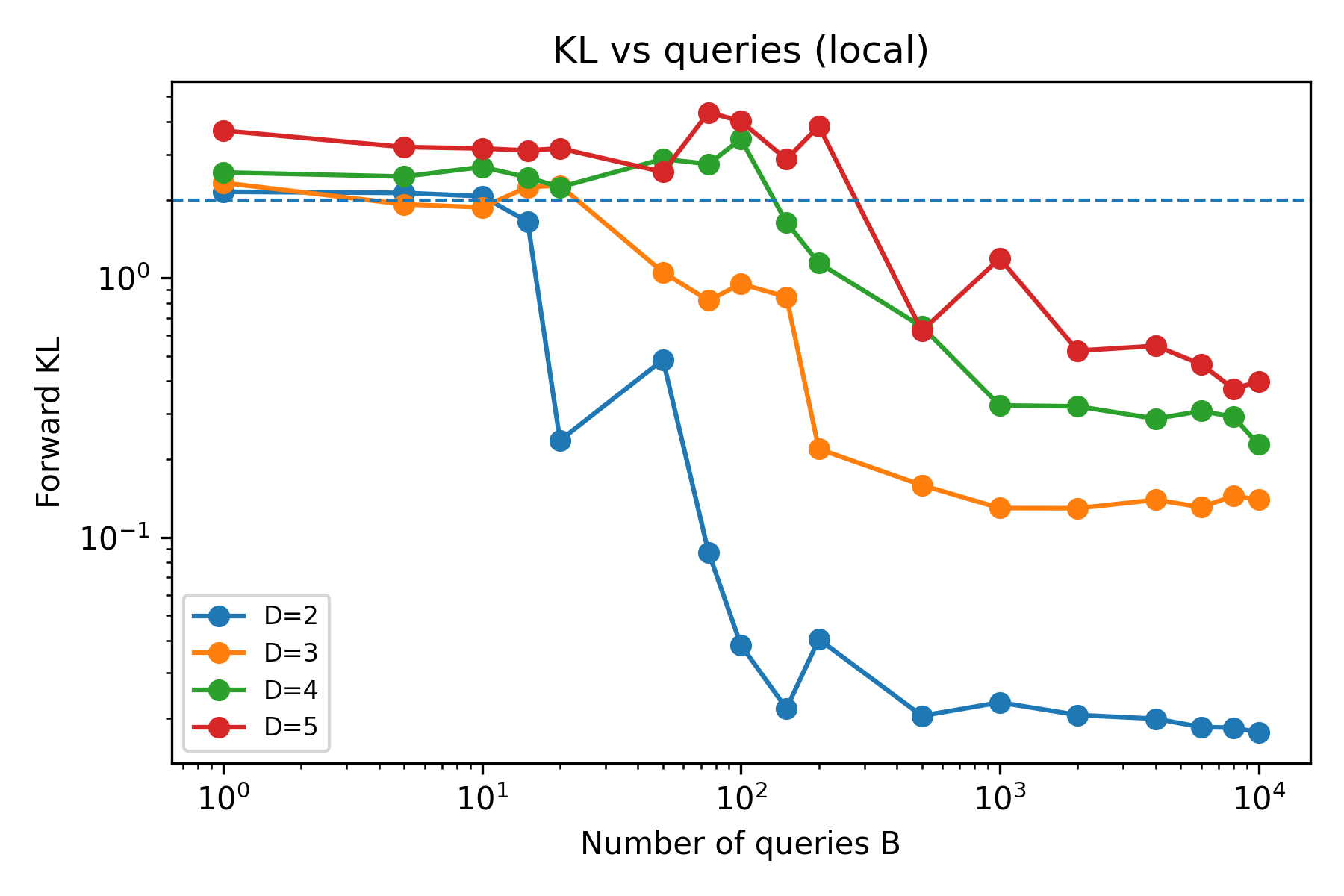}
        \caption{Local estimation. 
        Forward KL divergence versus query budget $B$ for the GMM-3
        target under the local estimation strategy.}
        \label{fig:gmm3_kl_local}
    \end{subfigure}

    \caption{
    Additional query-scaling results for the GMM-3 target distribution.
    The dashed horizontal line in the KL plots indicates the target
    approximation threshold $\mathrm{KL}\le 0.5$.
    Compared with global estimation, the local estimation strategy
    achieves substantially improved query efficiency by exploiting
    local structure in the induced Hamiltonian.
    }

    \label{fig:gmm3_query_scaling_appendix}
\end{figure}

\begin{figure}[t]
    \centering

    \begin{subfigure}[b]{0.6\textwidth}
        \centering
        \includegraphics[width=\textwidth]{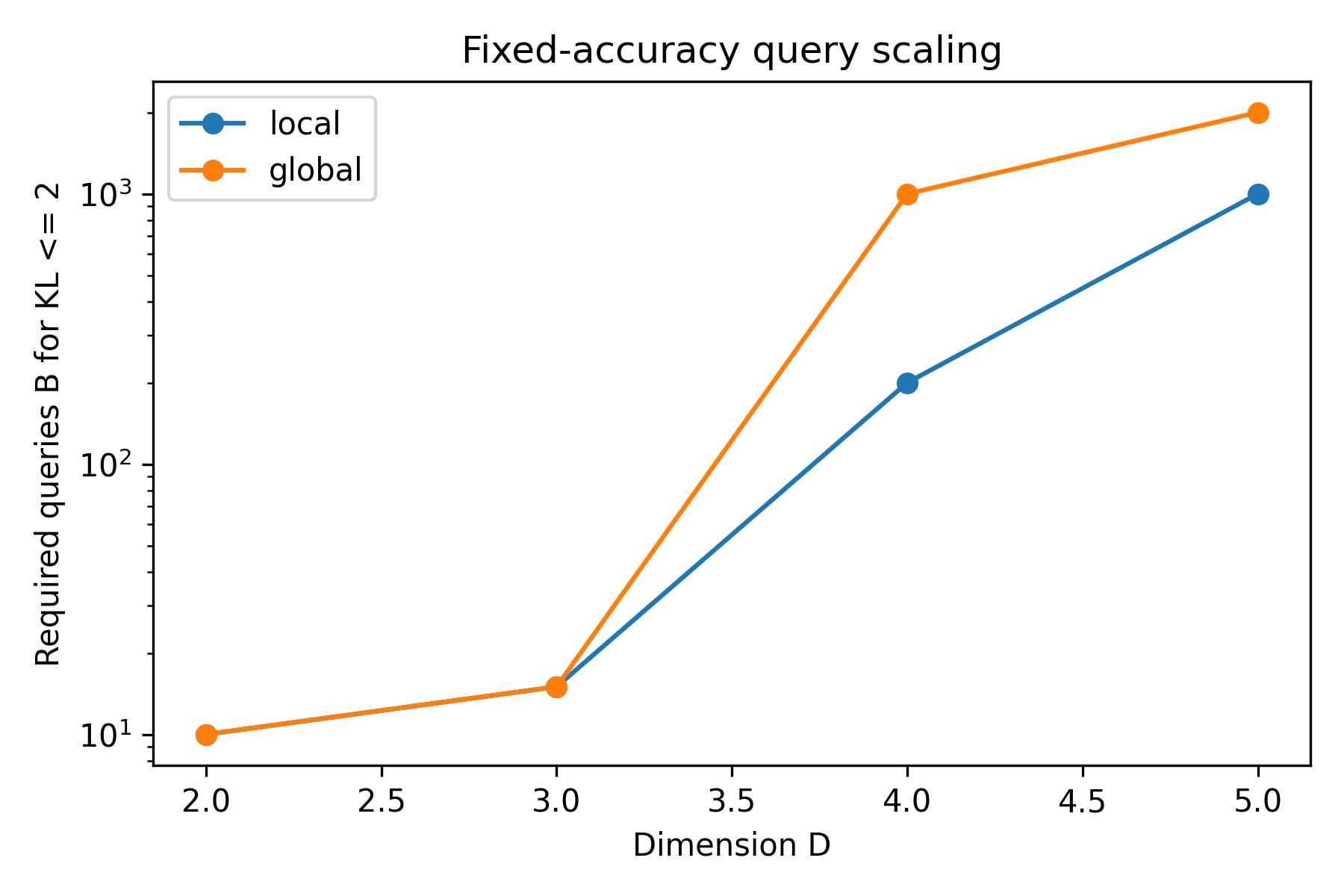}
        \caption{Fixed-accuracy query scaling for the Funnel target. 
        The figure reports the minimum number of score queries $B$
        required to achieve the target accuracy threshold
        $\mathrm{KL}\le 0.5$ as the ambient dimension increases.}
        \label{fig:funnel_required_B}
    \end{subfigure}


    \begin{subfigure}[b]{0.6\textwidth}
        \centering
        \includegraphics[width=\textwidth]{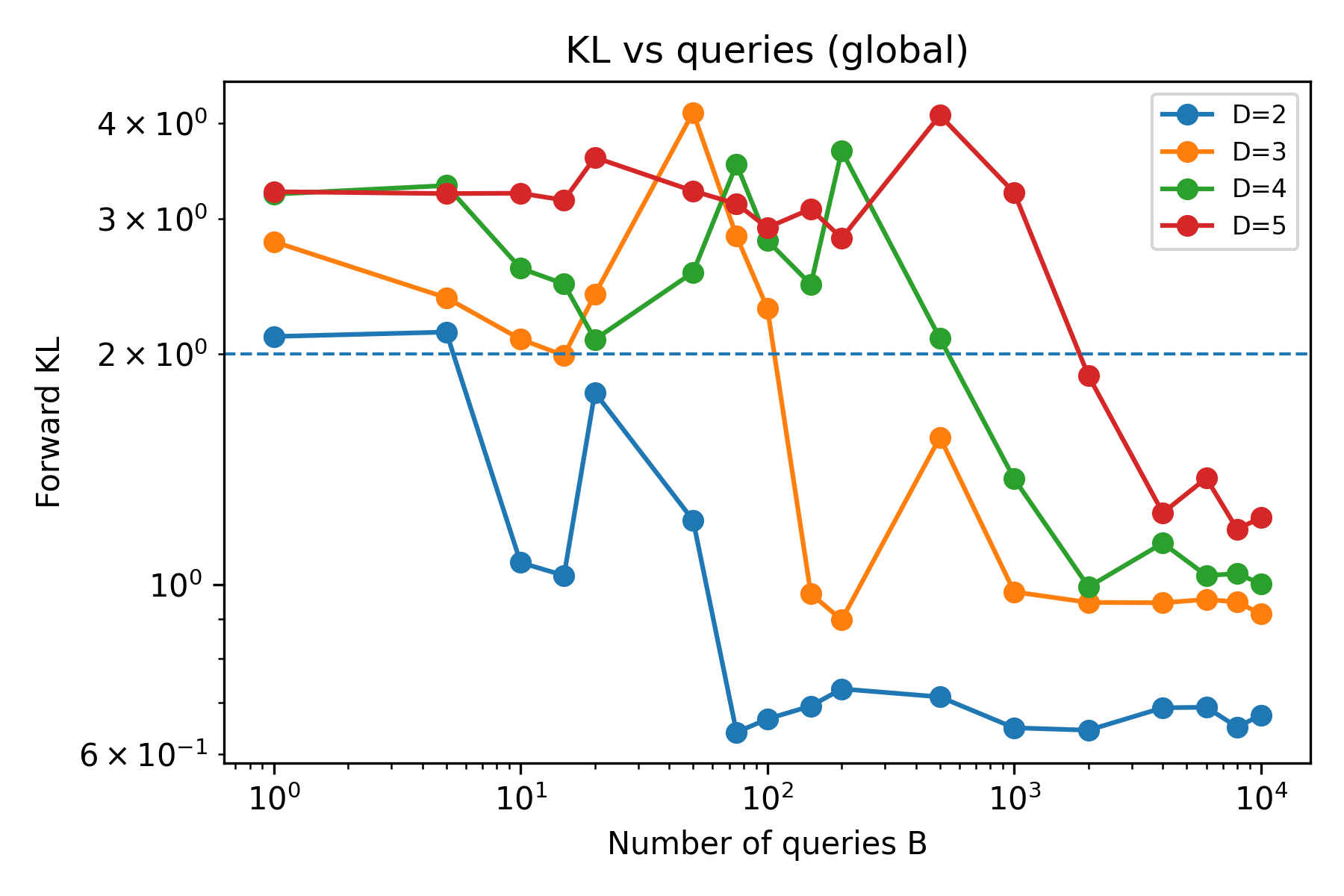}
        \caption{Global estimation. 
        Forward KL divergence versus query budget $B$ for the Funnel
        target under the global estimation strategy.}
        \label{fig:funnel_kl_global}
    \end{subfigure}


    \begin{subfigure}[b]{0.6\textwidth}
        \centering
        \includegraphics[width=\textwidth]{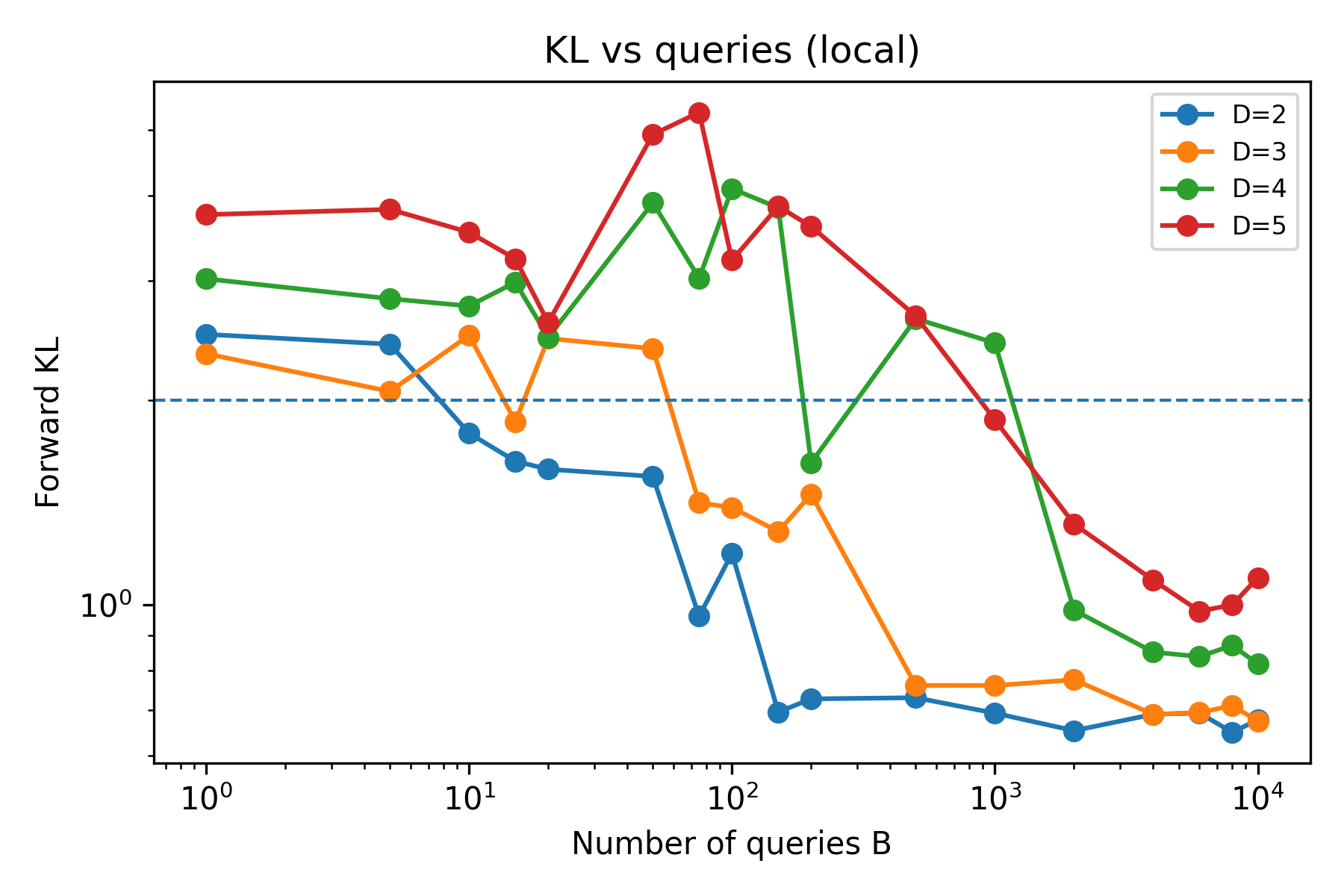}
        \caption{Local estimation. 
        Forward KL divergence versus query budget $B$ for the Funnel
        target under the local estimation strategy.}
        \label{fig:funnel_kl_local}
    \end{subfigure}

    \caption{
    Additional query-scaling results for the Funnel target distribution.
    The dashed horizontal line in the KL plots indicates the target
    approximation threshold $\mathrm{KL}\le 0.5$.
    Compared with global estimation, the local estimation strategy
    achieves substantially improved query efficiency by exploiting
    local structure in the induced Hamiltonian.
    }

    \label{fig:funnel_query_scaling_appendix}
\end{figure}

\paragraph{Hardware and software.}
All experiments were conducted on a MacBook Pro equipped with an Apple M3 Pro chip and 36\,GB unified memory.
The implementation was written in Python~3.10.3 using NumPy, SciPy, PyTorch, and Matplotlib.


\end{document}